\newtheorem{ass}{Assumption}
\newtheorem{clm}{Claim}
\newtheorem{lem}{Lemma}
\newtheorem{thm}{Theorem}
\newtheorem{rem}{Remark}
\newcommand{\E}{\mathbb{E}} 
\renewcommand{\P}{\mathbb{P}} 
\newcommand{\R}{\mathbb{R}} 
\newcommand{\N}{\mathbb{N}} 
\newcommand{\ind}{\mathbbm{1}}
\newcommand{\trans}{\mathsf{T}}
\renewcommand{\epsilon}{\varepsilon}
\newcommand{\mbf}[1]{\mathbf{#1}}
\newcommand{\mc}[1]{\mathcal{#1}}
\newcommand{\ip}[2]{\langle #1, #2 \rangle}
\DeclareMathOperator{\Tr}{Tr}
\DeclareMathOperator*{\argmax}{arg\,max}
\DeclarePairedDelimiter{\ceil}{\lceil}{\rceil}
\DeclarePairedDelimiter{\floor}{\lfloor}{\rfloor}
\renewcommand{\@algocf@capt@plain}{above}
\title{Minimax Regret for Cascading Bandits}
\author{Daniel Vial \\ UT Austin \& UIUC \\ \texttt{dvial@utexas.edu} \And Sujay Sanghavi \\ UT Austin \& Amazon  \\ \texttt{sanghavi@mail.utexas.edu} \AND Sanjay Shakkottai \\ UT Austin \\ \texttt{sanjay.shakkottai@utexas.edu} \And R.\ Srikant \\ UIUC \\ \texttt{rsrikant@illinois.edu}}
\begin{document}

\maketitle

\begin{abstract}
Cascading bandits is a natural and popular model that frames the task of learning to rank from Bernoulli click feedback in a bandit setting. For the case of unstructured rewards, we prove matching upper and lower bounds for the problem-independent (i.e., gap-free) regret, both of which strictly improve the best known. A key observation is that the hard instances of this problem are those with small mean rewards, i.e., the small click-through rates that are most relevant in practice. Based on this, and the fact that small mean implies small variance for Bernoullis, our key technical result shows that variance-aware confidence sets derived from the Bernstein and Chernoff bounds lead to optimal algorithms (up to log terms), whereas Hoeffding-based algorithms suffer order-wise suboptimal regret. This sharply contrasts with the standard (non-cascading) bandit setting, where the variance-aware algorithms only improve constants. In light of this and as an additional contribution, we propose a variance-aware algorithm for the structured case of linear rewards and show its regret strictly improves the state-of-the-art.
\end{abstract}

\section{Introduction} \label{secIntro}

The cascading click model describes users interacting with ranked lists, such as search results or online advertisements \citep{craswell2008experimental}. In this model, there is a set of items $[L] = \{1,\ldots,L\}$. The user is given a list of $K$ items $\mbf{A} = ( \mbf{a}_1,\ldots,\mbf{a}_K )$, sequentially examines the list, and clicks on the first attractive item (if any). If a click occurs, the user leaves without examining the subsequent items. It is assumed that the $e$-th item has an attraction probability $\bar{w}(e)$ (which we also call the mean reward), and that the random (Bernoulli) clicks are conditionally independent given $\mbf{A}$.

Cascading bandits, introduced concurrently by \cite{kveton2015cascading} and \cite{combes2015learning}, are a sequential learning version of the model where the mean rewards $\{ \bar{w}(e) \}_{e=1}^L$ are initially unknown. At each round $t \in [n]$, the learner chooses an action, which is a list of items $\mbf{A}_t = ( \mbf{a}_1^t , \ldots , \mbf{a}_K^t )$. As in the basic click model, the user scans the list and clicks on the first attractive item (if any). Thus, if the $\mbf{C}_t$-th item is clicked, the learner knows that the user was not attracted to $\{ \mbf{a}_k^t \}_{k=1}^{\mbf{C}_t-1}$ but was attracted to $\mbf{a}_{\mbf{C}_t}^t$. However, the learner receives \textit{no} feedback on the items $\{ \mbf{a}_k^t \}_{k=\mbf{C}_t+1}^K$ that the user did not examine before leaving. The objective for the learner is to choose the sequence $\{ \mbf{A}_t \}_{t=1}^n$ to maximize the expected number of clicks, or equivalently, minimize the regret defined in \eqref{eqRegret}.

This work provides problem-independent (i.e., gap-free) regret bounds for cascading bandits that strictly improve the state-of-the-art. In the case of unstructured rewards, our results provide the first minimax-optimal regret bounds (up to log terms). Our key insight is that, compared to the standard bandit problem, the reward variance plays an outsized role in the gap-free analysis. In particular, we show that \textit{for cascading bandits, the worst-case problem instances are those with low mean rewards} -- namely, $\bar{w}(e) \leq 1/K$, where the variance $\bar{w}(e)(1-\bar{w}(e))$ is also small. We emphasize that these worst-case instances are not pathological; rather, they model the low click-through rates that prevail in practice \citep{richardson2007predicting}. Further, we argue that \textit{adapting to low variance is crucial to cope with the worst-case instances}. Put differently, algorithms should be \textit{variance-aware}, i.e., more exploitative when the variance is small. We provide the intuition behind this key insight in Section \ref{secIntuition} and show how to formalize it with a proof sketch in Section \ref{secAnalysis}.

More specifically, our first goal is to establish upper bounds on the problem-independent regret (i.e., the maximum over $\bar{w}$) for cascading bandit algorithms, as well as minimax lower bounds (i.e., the infimum over algorithms of their problem-independent regret). As shown in Table \ref{tabSummary}, the tightest existing such bounds are $\tilde{O}(\sqrt{nLK})$ and $\Omega(\sqrt{nL/K})$, respectively. What is surprising is that, not only is there a gap between the bounds, but they \textit{increase} and \textit{decrease} in $K$, respectively. In other words, the following fundamental question is unresolved: \textit{as $K$ grows, does the problem become harder} (as suggested by the upper bound) \textit{or easier} (as suggested by the lower bound)\textit{?}

As discussed by \cite{zhong2021thompson}, this question lacks an obvious answer. On the one hand, larger $K$ means that the learner needs to identify more good items, which hints at a harder problem. On the other hand, the learner receives more feedback as $K$ grows, which intuitively makes the problem easier. As we show later, variance-aware algorithms are the key to resolving this tradeoff.

In addition to this basic version of the model -- hereafter, the \textit{tabular case}, where no structure is assumed for the mean rewards -- we are also interested in the \textit{linear case}, where $\bar{w}(e) = \ip{ \phi(e) }{\theta}$ for some known feature map $\phi : [L] \rightarrow \R^d$ and unknown parameter vector $\theta \in \R^d$. A second goal of this work is to apply our insights from the tabular case to the linear one, in hopes of improving the best known upper bound $\tilde{O}(\sqrt{n d^2 K})$ (see Table \ref{tabSummary}).

\begin{table*}[t]
\begin{center}
\caption{Problem-independent upper bounds and minimax lower bounds for cascading bandits with $L$ total items, $K$ recommended items, horizon $n$, and feature dimension $d$. The tabular columns assume unstructured mean reward. The linear column assumes reward is linear in unit-norm features and does not suppress any $L$ dependencies in the $\tilde{O}(\cdot)$ notation. See Appendix \ref{appRelated} for other related work.}\label{tabSummary}
\centering
\begin{tabular}{l @{\hspace{6pt}} c @{\hspace{6pt}} c @{\hspace{6pt}} c}
\hline & \multicolumn{2}{c}{{\bf Tabular case}} & {\bf Linear case} \\ 
{\bf Paper} & {\bf Upper bound} & {\bf Lower bound} & {\bf Upper bound}\\ \hline
\cite{zong2016cascading} & none & none & $\tilde{O}(\sqrt{n d^2 K^2})$ \\ 
\cite{wang2017improving} & $\tilde{O}(\sqrt{nLK})$ & none$^*$ & none \\ 
\cite{lattimore2018toprank} & $\tilde{O}(\sqrt{nLK^3} )$ & none$^*$ & none \\ 
\cite{li2018online} & none & none &  $\tilde{O}(\sqrt{n d^2 K}  )$$^\S$ \\
\cite{zhong2021thompson} & $\tilde{O}(\sqrt{nLK})$ & $\Omega(\sqrt{nL/K})$$^\dag$ & $\tilde{O}(\sqrt{n d^2 K^3 \min \{ d , \log L \}})$ \\
\cite{kveton2022value} & $\tilde{O}(\sqrt{nLK})$$^\ddag$ & none & none \\ 
Ours & $\tilde{O}(\sqrt{nL})$ (Thm \ref{thmKl}) & $\Omega(\sqrt{nL})$$^\dag$ (Thm \ref{thmLower}) & $\tilde{O}(\sqrt{nd(d+K)})$$^\S$ (Thm \ref{thmLinear}) \\ \hline
\end{tabular} 
\end{center}
{\footnotesize $^*$These papers contain minimax lower bounds but for click models that are distinct from cascading bandits.}

{\footnotesize $^\dag$These bounds assume $L$ is large compared to $K$ and $n$ is large compared to $L$ and $K$.}

{\footnotesize $^\ddag$This bound holds for a Bayesian notion of regret and includes a dependence on the prior not shown above.}

{\footnotesize $^\S$These bounds assume $n$ is large compared to $d$ and $K$ to suppress some additive $o(\sqrt{n})$ terms.}
\end{table*}

\subsection{Main contributions}

{\bf Tabular case.} First, we show no algorithm can achieve $o(\sqrt{nL})$ regret uniformly across $\bar{w}$, assuming $L$ is large compared to $K$ and $n$ is large compared to $L$ (see Theorem \ref{thmLower}). Next, we consider three algorithms: \texttt{CascadeKL-UCB}, \texttt{CascadeUCB-V}, and \texttt{CascadeUCB1} (the first and third are due to \cite{kveton2015cascading}; \texttt{CascadeUCB-V} is new). All three rank the $L$ items using upper confidence bounds (UCBs) and choose $\mbf{A}_t$ as the $K$ highest ranked items. They differ in the choice of UCB. As the names suggest, \texttt{CascadeKL-UCB} and \texttt{CascadeUCB-V} use \texttt{KL-UCB} \citep{garivier2011kl,maillard2011finite,cappe2013kullback} and \texttt{UCB-V} \citep{audibert2009exploration}, respectively. Both are variance-aware, in the sense that their respective UCBs are derived from the Chernoff and Bernstein inequalities. We show that both algorithms have near-optimal regret $\tilde{O}(\sqrt{nL})$ for any $\bar{w}$ (see Theorem \ref{thmKl}). In contrast, \texttt{CascadeUCB1} relies on the Hoeffing-style \texttt{UCB1} \citep{auer2002finite}, and we show that it suffers suboptimal regret $\Omega(\sqrt{nLK})$ on some $\bar{w}$ (see Theorem \ref{thmUcb}).

In summary, we prove (i) the minimax regret is $\tilde{\Theta}(\sqrt{nL})$ for cascading bandits, (ii) the variance-aware algorithms \texttt{CascadeKL-UCB} and \texttt{CascadeUCB-V} are minimax-optimal up to log terms, and (iii) the variance-unaware algorithm \texttt{CascadeUCB1} is decidely \textit{sub}optimal. Moreover, note from Table \ref{tabSummary} that we strictly improve \textit{both} the upper and lower bounds for this problem.

{\bf Discussion.} There are two surprising aspects to these results. First, the minimax bound $\tilde{\Theta}(\sqrt{nL})$ shows that (in a worst-case sense) the number of recommended items $K$ plays \textit{no role.}\footnote{The $\tilde{O}(\cdot)$ notation hides $\log K$ terms, but they can be bounded by $\log L$ while retaining $\tilde{O}(\sqrt{nL})$ regret.} In other words, the aforementioned tradeoff (identifying more good items but receiving more feedback) is perfectly balanced when the correct algorithm (e.g., \texttt{CascadeKL-UCB}) is employed.

The second (and arguably more surprising) aspect is that \texttt{CacadeKL-UCB} and \texttt{CascadeUCB-V} are optimal but \texttt{CascadeUCB1} is not. This stands in contrast to the standard $L$-armed bandit problem, where the analogous algorithms \texttt{KL-UCB}, \texttt{UCB-V}, and \texttt{UCB1} all achieve the minimax regret $\tilde{\Theta}(\sqrt{nL})$, and the main advantage of the former two is only to improve the constants in the gap-dependent bounds. We discuss the intuition behind this contrast in Section \ref{secIntuition}.

{\bf Linear case.} Motivated by these findings, we also propose a variance-aware algorithm for the linear case called \texttt{CascadeWOFUL}.\footnote{\texttt{WOFUL} stands for {\underline{w}}eighted {\underline{o}}ptimism in the {\underline{f}}ace of {\underline{u}}ncertainty for {\underline{l}}inear bandits.} \texttt{CascadeWOFUL} proceeds in two steps. First, we use the Hoeffding-style UCBs of \cite{abbasi2011improved} to upper bound the mean rewards $\bar{w}(e)$, and thus the Bernoulli variances $\bar{w}(e) ( 1 - \bar{w}(e) )$, with high probability. Second, we use these Hoeffding UCBs as proxies for the true variances in the \texttt{WOFUL} algorithm of \cite{zhou2021nearly}, which computes a variance-weighted estimate of $\theta$ that enjoys Bernstein-style concentration. In Theorem \ref{thmLinear}, we show the regret of \texttt{CascadeWOFUL} is $\tilde{O}(\sqrt{n d (d + K)})$ for large $n$, which improves existing bounds by a factor of at least $\sqrt{ \min \{d,K\} }$ (see Table \ref{tabSummary}).

\subsection{Why do variance-aware algorithms succeed but variance-unaware algorithms fail?} \label{secIntuition}

To answer this question, we focus on the tabular case and contrast the standard $L$-armed bandit setting with the cascading one. We first recall how the $\tilde{O}(\sqrt{nL})$ bound for $L$-armed bandits is derived. For simplicity, we restrict to instances with $\bar{w}(1) = p$ and $\bar{w}(2) = \cdots = \bar{w}(L) = p-\Delta$ for some $p \in (0,1)$ and $\Delta \in (0,p)$. In this case, \texttt{UCB1} plays each of the $L-1$ suboptimal items $1 / \Delta^2$ times, up to constants and log factors \citep{auer2002finite}. Each such play costs regret $\Delta$, for a total regret that scales as $L / \Delta$. Alternatively, regret can simply be bounded by $\Delta n$, since the $n$ plays incur at most $\Delta$ regret each. Combining the bounds gives $\min \{ L / \Delta , \Delta n \}$. The worst case occurs when the two bounds are equal, i.e., when $\Delta = \sqrt{L/n}$, which implies $\sqrt{nL}$ regret.

In contrast, \texttt{UCB-V} plays each suboptimal item $\sigma^2 / \Delta^2$ times (in an order sense), where $\sigma^2 \leq p$ is the variance of the $\text{Bernoulli}(p-\Delta)$ reward \citep{audibert2009exploration}. Therefore, the argument of the previous paragraph shows that regret grows as $\min \{ L p / \Delta , \Delta n \}$. Here the worst case occurs when $p$ is non-vanishing and $\Delta = \sqrt{L/n}$, which gives the same regret scaling as \texttt{UCB1}. A similar argument holds for \texttt{KL-UCB}, because the number of plays grows as $1 / d ( p - \Delta , p )$ \citep{cappe2013kullback} and one can show $d(p-\Delta,p) = \Omega ( \Delta^2 / p )$ (see Claim \ref{clmKlLower} in Appendix \ref{appKlProof}).

The analysis is more complicated for cascading bandits, because regret is nonlinear in the mean rewards and the amount of feedback is random. To oversimplify things, we draw an analogy with the above and assume $\bar{w}(1) = \cdots = \bar{w}(K) = p$ and $\bar{w}(K+1) = \cdots = \bar{w}(L) = p-\Delta$. In this case, \texttt{CascadeUCB1} similarly plays the $L-K$ suboptimal items $1/\Delta^2$ times each, which costs $L/\Delta$ regret when $L \gg K$. However, we can no longer bound regret by $\Delta n$, because the total number of plays depends on the random number of items the user examines at each round, which is roughly $\min \{ 1/p , K \}$ (the mean of $\text{Geometric}(p)$ random variable, truncated to the maximum $K$). Thus, the $\Delta n$ bound inflates to $\Delta n \min \{ 1/p, K \}$, which gives $\min \{  L / \Delta , \Delta n \min \{ 1/p, K \} \}$ regret. For $p \leq 1/K$ and $\Delta = \sqrt{L/(nK)}$, this yields the best known bound $\sqrt{nLK}$. We emphasize that, unlike the previous paragraph, \textit{the worst case here occurs when $p$ (i.e., the click-through rate) is small}.

On the other hand, the analogous bound for \texttt{CascadeUCB-V} and \texttt{CascadeKL-UCB} scales as $\min \{ L p / \Delta  , \Delta n \min \{ 1 / p , K \} \}$. Crucially, the factor of $p$ in the first term -- which arises due to the variance-aware nature of the algorithms -- offsets the factor $1/p$ in the second term. Thus, in the hard case $p \leq 1/K$, the bound becomes $\min \{ L / ( \Delta K ) , \Delta n K \}$. Here the worst case is $\Delta = \sqrt{L / ( n K^2)}$, which yields the minimax regret $\sqrt{nL}$ that we establish in Theorems \ref{thmLower} and \ref{thmKl}.

\section{Preliminaries} \label{secModel}

In this section, we precisely formulate our problem. A cascading bandit instance is defined by the triple $(L,K,\bar{w})$, where $L \in \N$ is the total number of items, $K \in [L] = \{1,\ldots,L\}$ is the number of items the learner displays to the user at each round, and $\bar{w} \in [0,1]^L$ is the vector of attraction probabilities. We define a sequential game as follows. At each round $t \in [n]$, the learner chooses an \textit{action} $\mbf{A}_t = ( \mbf{a}_1^t, \ldots , \mbf{a}_K^t )$, where $\mbf{a}_k^t \in [L]$ and $\mbf{a}_k^t \neq \mbf{a}_{k'}^t$ when $k \neq k'$ (i.e., $\mbf{A}_t$ is an ordered list of $K$ distinct items). The user sequentially examines this list, clicks on the first item that attracts them, and stops examining items after clicking. Mathematically, we denote the first attractive item by $\mbf{C}_t = \inf \{ k \in [K] : \mbf{w}_t(\mbf{a}_k^t) = 1 \}$, where $\mbf{w}_t(e) \sim \text{Bernoulli}(\bar{w}(e))$ for each $e \in [L]$. If no item is clicked, i.e., if $\mbf{w}_t(\mbf{a}_k^t) = 0$ for all $k$, we set $\mbf{C}_t = \infty$. Note the learner only observes the realizations corresponding to the items that the user examined, i.e., $\{ \mbf{w}_t(\mbf{a}_k^t) : k \in [ \min \{ \mbf{C}_t , K \} ] \}$.

We denote by $\mc{H}_t = \cup_{s=1}^{t-1} \{ \mbf{A}_s \} \cup \{ \mbf{w}_s(\mbf{a}_k^s) : k \in [ \min \{ \mbf{C}_s , K \} ] \}$ the \textit{history} of actions and observations before time $t$.\footnote{Our notation mostly follows \cite{kveton2015cascading}, but we clarify that their definition of $\mc{H}_t$ includes $\mbf{A}_t$.} We let $\P_t(\cdot) = \P ( \cdot | \mc{H}_t \cup \{ \mbf{A}_t \} )$ and $\E_t [ \cdot ] = \E [ \cdot | \mc{H}_t \cup \{ \mbf{A}_t \} ]$ denote conditional probability and expectation given the history and current action. In the cascading bandit model, it is assumed that the feedback $\{ \mbf{w}_t(\mbf{a}_k^t) \}_{k=1}^K$ (i.e., the presence or absence of clicks) is conditionally independent given $\mc{H}_t$ and $\mbf{A}_t$. Therefore, the conditional click probability is
\begin{equation*}
\P_t ( \mbf{C}_t < \infty ) = 1 - \P_t ( \mbf{C}_t = \infty ) = 1 - \P_t ( \cap_{k=1}^K \{ \mbf{w}_t(\mbf{a}_k^t) = 0 \} ) = 1 - \prod_{k=1}^K ( 1 - \bar{w}(\mbf{a}_k^t) ) .
\end{equation*}
Mappings from $\mc{H}_t$ to $\mbf{A}_t$ are called \textit{policies}. Let $\Pi$ be the set of all policies. Given an instance $\bar{w} \in [0,1]^L$, a policy $\pi \in \Pi$, and a horizon $n \in \N$, the expected number of clicks is
\begin{equation*}
\E_{\pi,\bar{w}} \left[ \sum_{t=1}^n \ind ( \mbf{C}_t < \infty ) \right] = \E_{\pi,\bar{w}} \left[ \sum_{t=1}^n \P_t ( \mbf{C}_t < \infty ) \right] = \E_{\pi,\bar{w}} \left[ \sum_{t=1}^n \left(  1 - \prod_{k=1}^K ( 1 - \bar{w}(\mbf{a}_k^t) )  \right) \right] ,
\end{equation*}
where $\ind$ is the indicator function. Let $A^* = ( a_1^* , \ldots , a_K^* )$ be any action $A = (a_1, \ldots , a_K)$ that maximizes the click probability $1 - \prod_{k=1}^K ( 1 - \bar{w}(a_k) )$. Our goal is to minimize \textit{regret}, which is the expected difference in the number of clicks between $\pi$ and the policy that always plays $A^*$, i.e.,
\begin{equation} \label{eqRegret}
R_{\pi,\bar{w}}(n) = \E_{\pi,\bar{w}} \left[ \sum_{t=1}^n \left( \prod_{k=1}^K ( 1 - \bar{w}(\mbf{a}_k^t) ) - \prod_{k=1}^K ( 1 - \bar{w}(a_k^*) ) \right) \right] .
\end{equation}

\section{Results for the tabular case} \label{secResultsTabular}

We can now state our tabular results (the proofs are discussed in Section \ref{secAnalysis}). First, we have a minimax lower bound showing no algorithm can achieve $o(\sqrt{nL})$ uniformly over the mean rewards $\bar{w}$.

\begin{thm}\label{thmLower}
Suppose $N \triangleq L/K \in \{4,5,\ldots\}$ and $n \geq L$. Then for any policy $\pi \in \Pi$, there exists a mean reward vector $\bar{w} \in [0,1]^L$ such that $R_{\pi,\bar{w}}(n) = \Omega(\sqrt{nL})$.
\end{thm}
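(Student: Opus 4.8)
The plan is to exhibit a family of hard instances on which every policy incurs $\Omega(\sqrt{nL})$ \emph{average} regret, and then use $\sup\ge\text{average}$. I partition the $L$ items into $K$ groups of size $N=L/K$ and plant exactly one ``good'' item per group: in the base instance every item has attraction probability $q-\Delta$, while in the instance indexed by a configuration $j=(j_1,\dots,j_K)$ (where $j_i$ names the good item of group $i$) the $K$ items $(i,j_i)$ are boosted to $q$. I take $q=\Theta(1/K)$ and $\Delta=\Theta(K^{-1}\sqrt{L/n})$, so that $\Delta/q=\Theta(\sqrt{L/n})$, precisely the ``hard case'' identified in Section \ref{secIntuition}. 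The hypotheses $n\ge L$ and $N\ge4$ guarantee $0<q-\Delta<q<1$ and that there are enough groups for the averaging below.

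The first step is a clean regret lower bound. Since $\prod_k(1-\bar w(\mbf a_k^t))$ in \eqref{eqRegret} depends only on the \emph{set} of chosen items, and the optimal action simply plays the $K$ good items, convexity of $x\mapsto x^b$ gives, for any action containing $b_t$ bad items, an instantaneous regret of at least $b_t\,\Delta\,(1-q)^{K-1}$. Writing $g=(1-q)^{K-1}=\Theta(1)$ (this is exactly where $q=\Theta(1/K)$ is essential: a larger $q$ would exponentially damp $g$ and hence the regret) and letting $G_i$ count the plays of group $i$'s good item, the regret under configuration $j$ is at least $\Delta g\,\E_j[\,nK-\sum_{i=1}^K G_i\,]$, since $\sum_t b_t=nK-\sum_i G_i$. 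It therefore suffices to show that, averaged over $j$, the learner cannot push $\sum_i G_i$ close to its maximum $nK$.

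The heart of the argument is the information bound $\frac1{|\mc J|}\sum_j\E_j[\sum_i G_i]\le\tfrac12 nK$. I treat each group as an $N$-armed identification problem and transfer plays from a ``semi-base'' measure (group $i$ set all-bad, the other groups left at their configuration) to the planted measure, using the divergence decomposition $D(\P\|\P')=\sum_e\E_{\P}[N_e]\,d(\bar w(e),\bar w'(e))$, where $N_e$ counts \emph{examinations} of item $e$. Because a single item is played at most $n$ times, Pinsker's inequality $\mathrm{TV}\le\sqrt{D/2}$ yields $\E_j[G_i]\le\frac1N(\text{group-}i\text{ plays})+n\sqrt{\tfrac12 d\,O_i}$, with $O_i$ the group-$i$ examination count and $d=d(q-\Delta,q)$. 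Summing over $i$, using $\sum_i O_i\le\sum_e\E[N_e]\le nK$ (at most $K$ examinations per round) and Cauchy--Schwarz across the $K$ groups, the error term is at most $nK\sqrt{dn/(2N)}$, which is $\le nK/4$ once $d\le N/(8n)$. Now bound the divergence by its \emph{exact} value $d(q-\Delta,q)\le\Delta^2/(q(1-q))=\Theta(K\Delta^2)$; the constraint becomes $\Delta\lesssim K^{-1}\sqrt{L/n}$, and at the boundary the regret is $\Delta g\cdot\tfrac12 nK=\Theta(g\sqrt{nL})=\Omega(\sqrt{nL})$.

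The main obstacle is this information bound, specifically the coupling across groups: the semi-base for group $i$ is defined relative to the other groups' random configuration, so the change of measure must be done conditionally, and only the \emph{group-}$i$ divergence may enter the bound on $\E_j[G_i]$; charging each group the full configuration's divergence would lose a factor $K$ and merely reproduce the known $\sqrt{nL/K}$. A secondary subtlety is that the learner may split its $K$ slots unevenly across groups, which I control through the cap $G_i\le n$ together with the global budget $\sum_i O_i\le nK$, so no group can be both heavily played and perfectly learned. Finally, it is precisely the use of the true divergence $d(q-\Delta,q)=\Theta(K\Delta^2)$ in place of the value $2\Delta^2$ predicted by Pinsker that drives $\Delta$ down to $K^{-1}\sqrt{L/n}$ and yields $\sqrt{nL}$ rather than the loose $\sqrt{nLK}$ -- the lower-bound counterpart of the phenomenon highlighted in the Remark.
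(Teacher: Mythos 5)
Your construction and scalings are the right ones, and they match the paper's: means of order $1/K$, gap $\Delta=\Theta(K^{-1}\sqrt{L/n})$, the product-to-linear regret reduction with factor $(1-q)^{K-1}=\Theta(1)$, a per-group ``semi-base'' change of measure, and the exact divergence $d(q-\Delta,q)=\Theta(K\Delta^2)$ in place of Pinsker's $\Delta^2$ (working directly with examination counts rather than the paper's relaxation to full-feedback policies $\Pi'$ is a harmless difference). However, there is a genuine gap at the crux of your information bound. Your change of measure for group $i$ uses the group-$i$ semi-base measure as reference, and this reference is \emph{different for each $i$}. Consequently the quantities you sum over $i$ --- the ``group-$i$ plays'' in the first term and the examination counts $O_i$ in the error term --- are expectations under $K$ different measures, and the global budget $\sum_i O_i\le nK$ (``at most $K$ examinations per round'') is only valid under a single common measure. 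It can genuinely fail across measures: consider a policy that spends an initial phase identifying which group appears to contain no boosted item and thereafter devotes all $K$ slots to that group. Under the group-$i$ semi-base measure that group \emph{is} group $i$, so the group-$i$ examination count is of order $nK$ under its own reference measure, for every $i$ simultaneously; hence the cross-measure sum is of order $nK^2$, not $nK$. Plugging the correct bound into your chain turns the constraint $d\lesssim N/n$ into $d\lesssim N/(nK)$, i.e.\ $\Delta\lesssim\sqrt{L/(nK^3)}$, and the final bound degrades to $\Omega(\sqrt{nL/K})$ --- exactly the known loose bound you set out to beat. The same cross-measure problem already afflicts your first term $\frac1N(\text{group-}i\text{ plays})$, which adversarially sums to order $nK^2/N\gg nK/4$.

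The missing device is the stopping-time truncation of \cite{lattimore2018toprank}, which the paper adopts in Appendix \ref{appLower}: let $\tau_i$ be the first round at which group $i$ has accumulated $n$ plays (capped at $n$). The stopped group-$i$ count is then at most $n+K$ \emph{deterministically}, i.e.\ under every measure at once, which restores the per-group budget with no cross-measure claim; and the regret decomposition must be rearranged so that only stopped counts enter the Pinsker/KL step --- this is what the paper's identity \eqref{eqInstMaddTwo} and the case analysis on $\tau_i=n$ versus $\tau_i<n$ accomplish, lower-bounding the regret by $\frac{\Delta}{4}\sum_i(n-\mbf{T}'_{\tau_i}(a_i^*(m)))$ rather than by your full-horizon expression $\Delta g(nK-\sum_i G_i)$. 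With that repair your outline goes through and essentially coincides with the paper's proof; as written, it does not yield $\Omega(\sqrt{nL})$.
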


\begin{rem} \label{remReduction}
The proof of Theorem \ref{thmLower} is essentially a reduction to \cite{lattimore2018toprank}'s lower bound for the so-called document-based click model. Their proof and ours both use the assumption $L/K \in \N$ to simplify the analysis, which involves partitioning the $L$ items into $K$ subsets of size $L/K$ each. When $L/K \notin \N$, one of the subsets will have fewer items, which makes the analysis more cumbersome; however, this does not fundamentally alter either result.
\end{rem}

\begin{rem}
The theorem also requires $L \geq 4 K$, which is not very restrictive since $L \gg K$ in typical applications. However, this assumption does eliminate an interesting analytical regime, namely, when $K \rightarrow L$. We conjecture the minimax lower bound is $\Omega(\sqrt{n(L-K)})$ in this case, since any algorithm obtains zero regret when $K=L$ and there are no suboptimal items.
\end{rem}

We next consider the algorithms \texttt{CascadeKL-UCB} and \texttt{CascadeUCB1} from \cite{kveton2015cascading}, along with a new one called \texttt{CascadeUCB-V}. All follow a similar template, which is given in Algorithm \ref{algUcb}. This is a natural generalization of upper confidence bound (UCB) algorithms from the standard $L$-armed bandit setting. At each round $t \in [n]$, it computes UCBs $\mbf{U}_t(e)$ in a manner to be specified shortly, then chooses $\mbf{A}_t$ as the $K$ items with the highest UCBs (in order of UCB). After observing the click feedback $\mbf{C}_t$, the algorithm increments the number of observations $\mbf{T}_t(e)$ and updates the empirical mean $\hat{\mbf{w}}_{\mbf{T}_t(e)}(e)$ for each item $e$ that the user examined.

\begin{algorithm}
\caption{General UCB algorithm for tabular cascading bandits} \label{algUcb}

Initialize number of observations $\mbf{T}_t(e) = 0$ and empirical mean $\hat{\mbf{w}}_0(e) = 0$ for each $e \in [L]$

\For{$t=1,\ldots,n$}{

Compute $\mbf{U}_t(e)$ for each $e \in [L]$ (by \eqref{eqKlUcb}, \eqref{eqUcbV}, and \eqref{eqUcb1} for \texttt{CascadeKL-UCB}, \texttt{CascadeUCB-V}, and \texttt{CascadeUCB1}, respectively)

\For{$k=1, \ldots, K$}{

Let $\mbf{a}_k^t = \argmax_{ e \in [L] \setminus \{ \mbf{a}_i^t \}_{i=1}^{k-1} } \mbf{U}_t(e)$ be the item with the $k$-th highest UCB

}

Play $\mbf{A}_t = ( \mbf{a}_1^t , \ldots , \mbf{a}_K^t )$ and observe $\mbf{C}_t = \inf \{ k \in [K] : \mbf{w}_t(\mbf{a}_k^t) = 1 \}$ (where $\inf \emptyset = \infty$)

Let $\mbf{T}_t(e) = \mbf{T}_{t-1}(e)$ for each $e \in [L]$

\For{$k = 1 , \ldots , \min \{ \mbf{C}_t , K \}$}{

$e \leftarrow \mbf{a}_k^t$, $\mbf{T}_t(e) \leftarrow \mbf{T}_t(e) + 1$, $\hat{\mbf{w}}_{\mbf{T}_t(e)}(e) \leftarrow (  \mbf{T}_{t-1}(e) \hat{\mbf{w}}_{\mbf{T}_{t-1}(e)}(e) + \ind ( \mbf{C}_t = k ) ) / \mbf{T}_t(e)$

}

}
\end{algorithm}

For \texttt{CascadeKL-UCB}, the UCBs $\mbf{U}_t(e)$ are computed as follows:
\begin{equation} \label{eqKlUcb}
\mbf{U}_t(e) = \max \{ u \in [0,1] : d ( \hat{\mbf{w}}_{\mbf{T}_{t-1}(e)}(e) , u ) \leq \log(f(t)) / \mbf{T}_{t-1}(e) \} ,
\end{equation}
where $f(t) = t(\log t)^3$ and $d(p,q) = p \log(p/q)+(1-p)\log((1-p)/(1-q))$ is the relative entropy between Bernoullis with means $p, q \in [0,1]$. The set in \eqref{eqKlUcb} is a confidence interval for $\bar{w}(e)$, which is derived from the Chernoff bound. For \texttt{CascadeUCB-V}, the UCBs are instead given by
\begin{equation} \label{eqUcbV}
\mbf{U}_t(e) = \hat{\mbf{w}}_{\mbf{T}_{t-1}(e)}(e) + \sqrt{ 4 \hat{\mbf{v}}_{\mbf{T}_{t-1}(e)}(e) \log(t) /  \mbf{T}_{t-1}(e) } + 6 \log (t) /  \mbf{T}_{t-1}(e) ,
\end{equation}
where $\hat{\mbf{v}}_s(e) = \hat{\mbf{w}}_s(e) ( 1 - \hat{\mbf{w}}_s(e) )$ is the empirical variance from $s$ observations of item $e$. This UCB is derived from the coarser -- but crucially, still variance-aware -- Bernstein inequality. Finally, for \texttt{CascadeUCB1}, the UCBs are derived from the Hoeffing bound and computed as follows:
\begin{equation} \label{eqUcb1}
\mbf{U}_t(e) = \hat{\mbf{w}}_{\mbf{T}_{t-1}(e)}(e) + c_{t,\mbf{T}_{t-1}(e)}  , \quad \text{where} \quad c_{t,s} = \sqrt{1.5 \log(t) / s} .
\end{equation}

We can now show the variance-aware UCBs are nearly optimal, while \texttt{CascadeUCB1} is suboptimal.

\begin{thm}\label{thmKl}
Suppose $\pi$ is \texttt{CascadeKL-UCB} or \texttt{CascadeUCB-V}, i.e., the policy from Algorithm \ref{algUcb} with the UCBs given by \eqref{eqKlUcb} or \eqref{eqUcbV}. Then $R_{\pi,\bar{w}}(n) = \tilde{O}(\sqrt{nL})$ for any $\bar{w} \in [0,1]^L$.
\end{thm}

\begin{rem} \label{remUcbV}
The reader may wonder why we proposed \texttt{CascadeUCB-V}, since the \texttt{CascadeKL-UCB} bound is enough to establish the minimax regret. The main reason is to demonstrate that variance-awareness alone (no additional information encoded by \texttt{KL-UCB}) is enough to achieve the optimal regret, which helps motivate our linear algorithm. Furthermore, we show empirically in Section \ref{secExperiments} that, while \texttt{CascadeUCB-V} is inferior to \texttt{CascadeKL-UCB} in terms of regret, its closed form nature leads to quicker computation, while still improving the regret of \texttt{CascadeUCB1}.
\end{rem}

\begin{thm}\label{thmUcb}
Suppose $n \geq \max \{ LK , 49 K^4 \}$, $L \geq 800 K$, and $\pi$ is \texttt{CascadeUCB1}, i.e., the policy from Algorithm \ref{algUcb} with the UCBs given by \eqref{eqUcb1}. Then $R_{\pi,\bar{w}}(n)  = \Omega(\sqrt{nLK})$ for some $\bar{w} \in [0,1]^L$.
\end{thm}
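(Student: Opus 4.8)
The plan is to exhibit a single hard instance on which \texttt{CascadeUCB1} provably over-explores, following the heuristic of Section~\ref{secIntuition}. I would set $\bar{w}(e) = p$ for $e \in [K]$ and $\bar{w}(e) = p - \Delta$ for $e \in \{K+1,\dots,L\}$, with $p = \Theta(1/K)$ small enough that $(1-p)^{K-1} = \Omega(1)$ (so examination probabilities are bounded below), and $\Delta = \Theta(\sqrt{L/(nK)})$ small enough that $0 < \Delta \le p$. Feasibility of $p \ge \Delta$ together with $p = O(1/K)$ is exactly what forces $n \gtrsim LK$, with the remaining slack absorbed by $n \ge 49K^4$ and $L \ge 800K$. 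The optimal action is $A^* = (1,\dots,K)$ with no-click probability $(1-p)^K$. Writing $J_t$ for the number of suboptimal items in $\mbf{A}_t$, a round with $J_t$ suboptimal items has no-click probability $(1-p)^{K-J_t}(1-p+\Delta)^{J_t}$, and the elementary bound $(1-p+\Delta)^{J_t} - (1-p)^{J_t} \ge J_t \Delta (1-p)^{J_t-1}$ yields instantaneous regret at least $J_t \Delta (1-p)^{K-1} = \Omega(J_t \Delta)$. Hence
\begin{equation*}
R_{\pi,\bar{w}}(n) = \Omega(\Delta) \cdot \E\!\left[\sum_{t=1}^n J_t\right],
\end{equation*}
and since $\sum_t J_t = nK - \sum_{e \le K} P_e$, where $P_e$ is the number of rounds in which item $e$ is played, everything reduces to the key estimate $\E[\sum_{e\le K} P_e] \le nK/2$: the $K$ optimal items occupy at most half the action slots on average.

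The mechanism is that \texttt{CascadeUCB1}'s Hoeffding width $\sqrt{1.5\log t / s}$ cannot separate means $p$ and $p-\Delta$ until an item has been observed $s \gtrsim \tau := \log n / \Delta^2$ times, so below this count the optimal and suboptimal items are statistically indistinguishable and optimism forces balanced, round-robin-like exploration. I would quantify this with two ingredients. First, a \emph{budget} bound: the total number of observations is $\sum_e \mbf{T}_n(e) = \sum_t \min\{\mbf{C}_t,K\} \le nK$ deterministically, so at most $nK/\tau = O(L/\log n)$ items can ever reach $\tau$ observations; thus all but an $O(1/\log n)$ fraction of items, and in particular $\Omega(L)$ suboptimal items, stay under-explored throughout. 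Second, on the standard good event $G$ that every empirical mean stays within its confidence width of its true mean, each index obeys $c_{t,s} \le \mbf{U}_t(e) \le p + 2c_{t,s}$ (truncated to $[0,1]$) with $s = \mbf{T}_{t-1}(e)$, so up to the cap the UCBs are ordered essentially by observation count, and an under-explored item outranks a well-explored one. Consequently the action is filled, to first order, by the $K$ \emph{least}-explored items, and because only $K$ of the $L \ge 800K$ items are optimal, that pool is overwhelmingly suboptimal.

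The main obstacle is converting ``the action consists of the $K$ least-explored items'' into the clean inequality $\E[\sum_{e \le K} P_e] \le nK/2$, because the sandwich $c_{t,s}\le \mbf{U}_t(e) \le p + 2 c_{t,s}$ has overlapping optimal and suboptimal ranges and so cannot by itself evict any particular optimal item. I expect the rigorous argument to instead show that the action tracking the least-explored items keeps all observation counts balanced to within their increments, so that after $t$ rounds \emph{every} item (optimal or not) satisfies $P_e = O(tK/L)$; the mean gap $\Delta$ never surfaces above the width for under-explored items, which is precisely what prevents the optimal items from being promoted. Since the fair share $nK/L$ undershoots the target $nK/2$ by the factor $L/(2K) \ge 400$, only a loose constant-factor version of this balance claim is needed, which is why the generous hypothesis $L \ge 800K$ suffices; I also anticipate $n \ge 49K^4$ entering to control $G$ via a union bound and to guarantee $nK/L < \tau$ (so fair-share play counts really do keep items under-explored) over the rounds that dominate the regret.

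Finally I would assemble the pieces. Off $G$ (probability $O(1/n)$ after a union bound) I bound the regret below by $0$; on $G$ the balance estimate gives $\sum_t J_t \ge nK/2$, whence
\begin{equation*}
R_{\pi,\bar{w}}(n) = \Omega\!\left(\Delta \cdot nK\right) = \Omega\!\left(\sqrt{L/(nK)} \cdot nK\right) = \Omega(\sqrt{nLK}),
\end{equation*}
which is the claimed bound.
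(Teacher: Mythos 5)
Your hard instance, your reduction of regret to $\Omega(\Delta)\cdot\E[\sum_t J_t]$, and your target inequality $\E[\sum_{e\le K}P_e]\le nK/2$ all match the paper's strategy (the paper phrases everything in terms of observation counts $\mbf{T}_n$ rather than play counts, which also sidesteps the plays-versus-observations conversion you would eventually need). The genuine gap is exactly at the step you flag as the ``main obstacle,'' and neither of your proposed mechanisms closes it. First, the sandwich $c_{t,s}\le \mbf{U}_t(e)\le p+2c_{t,s}$ is structurally too weak: its lower bound discards the empirical mean, so for an under-explored item to provably outrank a well-explored optimal item you would need $c_{t,s'} > p + 2c_{t,s_{e^*}}$, i.e.\ the under-explored width must exceed $p=\Theta(1/K)$. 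That holds when $n$ is near $LK$, but the theorem allows every $n\ge\max\{LK,49K^4\}$; once $n\gg LK\log n$, any item observed at least $nK/(3L)$ times (indeed, even far fewer) has width well below $p$, and the sandwich can never evict an optimal item. Second, the balance claim you substitute ($P_e=O(tK/L)$ for \emph{every} item and every $t$) is left unproven, and it is much stronger than necessary; establishing it would require precisely the index comparison the sandwich cannot deliver.

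The paper closes this step by keeping the empirical means in the comparison so that the level $p=\epsilon$ cancels and only the gap $\Delta$ and the two widths remain. Concretely: if an optimal item $e^*$ is ever observed more than $n/4$ times, then at the round $t\ge\ceil{n/4}$ of its $\ceil{n/4}$-th observation it is played, so $\mbf{U}_t(e^*)\ge\mbf{U}_t(e)$ for at least $L-2K\ge 3L/4$ suboptimal items $e$; on the bad event $\mc{E}$ (suboptimal observations total $<nK/4$), one such $e$ has $\mbf{T}_{t-1}(e)\le nK/(3L)$. The inequality $\hat{\mbf{w}}_{\ceil{n/4}-1}(e^*)+c_{t,\ceil{n/4}-1}\ge\hat{\mbf{w}}_s(e)+c_{t,s}$ then forces one of three events: $e^*$'s empirical mean exceeds $\epsilon+c_{t,\ceil{n/4}-1}$ (Hoeffding, probability $t^{-3}$); $e$'s empirical mean falls below $\epsilon-\Delta-\sqrt{5/6}\,c_{t,s}$ (Hoeffding, probability $t^{-5/2}$); or $\Delta\ge(1-\sqrt{5/6})c_{t,s}-2c_{t,\ceil{n/4}-1}$, which is ruled out \emph{deterministically} because $c_{t,s}\ge\sqrt{4.5L\log t/(nK)}$ dominates $2c_{t,\ceil{n/4}-1}\le\sqrt{L\log t/(32nK)}$ (this is where $L\ge 800K$ enters) and dominates $\Delta=\sqrt{L/(\chi nK)}$ for $\chi$ a large constant. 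Note the deviation allowed for $e$ is $\sqrt{5/6}\,c_{t,s}$, strictly less than the full width: with your symmetric good event $G$ (deviation up to $c_{t,s}$) the split leaves no positive margin at all, since $\epsilon-\Delta<\epsilon+2c_{t,\ceil{n/4}-1}$ always; the asymmetric band is what produces the margin $(1-\sqrt{5/6})c_{t,s}$. Finally, Markov's inequality (total observations exceed $nK/2$ with probability at least $1/2$, using $\E_t[\mbf{O}_t]>3K/4$) and a union bound over the $K$ optimal items and over $t\ge\ceil{n/4}$, $s\le nK/(3L)$ --- where $n\ge 49K^4$ makes the accumulated failure probability $O(K^2/\sqrt{n})\le 10/21$ --- give $\P(\mc{E})\le 41/42$, hence regret $\Omega(\Delta\, nK)=\Omega(\sqrt{nLK})$. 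So your skeleton is right, but the central lemma needs the means-cancelling, asymmetric-width comparison rather than the mean-free sandwich or a global balance claim.
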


\begin{rem} \label{remUcb1}
The assumed bounds on $n$ and $L$ simplify the calculations and can be improved (see Remark \ref{remLoosenAssumption} in Appendix \ref{appProofUcb}). The main point of Theorem \ref{thmUcb} is to show that, unlike the variance-aware algorithms, \texttt{CascadeUCB1} cannot satisfy the conclusion of Theorem \ref{thmKl} for \emph{all} choices of $n$ and $L$.
\end{rem}

\section{Results for the linear case} \label{secResultsLinear}

In light of the previous section, we seek a variance-aware algorithm for the linear case. Our method is based on the \texttt{WOFUL} algorithm of \cite{zhou2021nearly}, which was designed for the standard linear bandit setting (the case $K=1$). In this section, we review \texttt{WOFUL}, discuss how to overcome its limitations in the cascading setting, explain our \texttt{CascadeWOFUL} algorithm, and bound its regret.

{\bf Existing algorithm.} In Section 4.3 of their work, \cite{zhou2021nearly} consider the following problem. As above, there is a set of items $[L]$, a known feature map $\phi : [L] \rightarrow \R^d$, and an unknown parameter vector $\theta \in \R^d$. Successive plays of $e \in [L]$ give i.i.d.\ rewards with mean $\bar{w}(e) = \ip{\phi(e)}{\theta}$ and variance upper bounded by $\sigma_e^2$. Thus, at each round $t \in [n]$, the learner chooses $\mbf{a}^t \in [L]$ and receives a random reward $r_t = \ip{ \phi(\mbf{a}^t) }{ \theta } + \eta_t$, where $\E [ \eta_t | \mbf{a}^t ] = 0$ and $\E [ \eta_t^2 | \mbf{a}^t ] \leq \sigma_{\mbf{a}^t}^2$. For this setting, the authors proposed the \texttt{WOFUL} algorithm, which is based on the (unweighted) \texttt{OFUL} algorithm \citep{abbasi2011improved}. At each round $t \in [n]$, \texttt{WOFUL} chooses the item
\begin{equation} \label{eqWOFUL}
\mbf{a}^t = \argmax_{e \in [L]} \left( \ip{\phi(e)}{\hat{\theta}_t} + \alpha \| \phi(e) \|_{\mbf{\Lambda}_t^{-1}} \right) ,
\end{equation}
where $\alpha > 0$ is an exploration parameter, $\| x \|_B = \sqrt{x^\trans B x}$ is the norm induced by a positive definite matrix $B$, and $\hat{\theta}_t$ is the regularized and variance-weighted least-squares estimate given by
\begin{equation} \label{eqWeightedLeastSq}
\hat{\theta}_t = \mbf{\Lambda}_t^{-1} \sum_{s=1}^{t-1} \phi ( \mbf{a}^s ) r_s / \sigma_{\mbf{a}^s}^2 ,  \quad \text{where} \quad \mbf{\Lambda}_t = I + \sum_{s=1}^{t-1} \phi ( \mbf{a}^s ) \phi ( \mbf{a}^s )^\trans / \sigma_{\mbf{a}^s}^2 .
\end{equation}
To gain some intuition, we assume momentarily that $d=L$ and $\phi(e)$ is the $e$-th standard basis vector, i.e., the vector with $1$ in the $e$-th coordinate and $0$ elsewhere. In this case, one can easily calculate
\begin{equation*}
\ip{ \phi(e) }{ \hat{\theta}_t } =  \frac{ \sum_{s \in [t-1] : \mbf{a}^s = e } r_s / \sigma_e^2 }{ 1 + \mbf{T}_{t-1}(e) / \sigma_e^2 } , \quad \| \phi(e) \|_{\mbf{\Lambda}_t^{-1}} = \sqrt{ \frac{1}{1 + \mbf{T}_{t-1}(e) / \sigma_e^2 } } .
\end{equation*}
Therefore, for large $\mbf{T}_{t-1}(e)$ (large enough that $1 + \mbf{T}_{t-1}(e) / \sigma_e^2 \approx \mbf{T}_{t-1}(e) / \sigma_e^2$), we have
\begin{equation*}
\ip{\phi(e)}{\hat{\theta}_t} + \alpha \| \phi(e) \|_{\mbf{\Lambda}_t^{-1}} \approx \frac{ \sum_{s \in [t-1] : \mbf{a}^s = e } r_s }{ \mbf{T}_{t-1}(e) } + \alpha \sqrt{ \frac{   \sigma_e^2 }{   \mbf{T}_{t-1}(e) } } .
\end{equation*}
Similar to \texttt{UCB-V} \eqref{eqUcbV}, the right side of this equation is the empirical mean plus an exploration bonus that grows with the variance upper bound $\sigma_e^2$ and decays in $\mbf{T}_{t-1}(e)$. Hence, the term inside the $\argmax$ in \eqref{eqWOFUL} can be interpreted as a Bernstein-style UCB. The analysis of this UCB relies on a novel concentration inequality for vector-valued martingales \citep[Theorem 2]{zhou2021nearly}, which is a Bernstein analogue of the Hoeffding-style bound due to \cite{abbasi2011improved}.

{\bf Limitations.} The fact that \texttt{WOFUL} (roughly) generalizes \texttt{UCB-V} is promising. However, computing \eqref{eqWeightedLeastSq} requires knowledge of the variance upper bounds $\sigma_e^2$, and nontrivial bounds on the variance are rarely available in practice. We sidestep this issue with three simple observations: (i) cascading bandits only involve Bernoulli rewards, (ii) for Bernoulli rewards, variances are upper bounded by means, and (iii) these means can be learned efficiently since they are linearly-parameterized. This suggests the following algorithm: first, compute Hoeffding-style UCBs; second, treat these UCBs as upper bounds for the true means, and thus the true variances, in \texttt{WOFUL}.

{\bf Proposed algorithm.} Algorithm \ref{algWoful} formalizes this approach. It contains three steps. First, step 1 defines Hoeffding-style UCBs $\mbf{U}_{t,H}$ as in \cite{abbasi2011improved}. Next, step 2 uses $\mbf{U}_{t,H}$ as an upper bound for the variance and computes the Bernstein-style UCBs $\mbf{U}_{t,B}$ as in \texttt{WOFUL}. Finally, step 3 chooses $\mbf{A}_t$ as the $K$ items with the highest $\mbf{U}_{t,B}$, analogous to Algorithm \ref{algUcb}.

Two technical clarifications are in order. First, observe that in step 1, we clip the variance bound $\mbf{U}_{t,H}$ below by $1/K$. We do so to ensure that $\mbf{\Lambda}_{t,B}$ (which inverts $\mbf{U}_{t,H}$) remains bounded. Additionally, we note the choice $1/K$ is precisely motivated by Section \ref{secIntuition}, which shows this is a critical threshold for the small click-through rate. Second, $\mbf{\Lambda}_{t,B}$ uses the regularizer $K I$. This is to ensure that the regularizer is large enough compared to the summands $\phi(\mbf{a}_k^s) \phi(\mbf{a}_k^s)^\trans / \mbf{U}_{s,H}(\mbf{a}_k^s)$, which scale as $K$ in the worst case where the variance upper bound $\mbf{U}_{s,H}(\mbf{a}_k^s)$ is clipped to $1/K$.

\begin{rem} \label{remLinUcb}
\cite{zong2016cascading} proposed a \texttt{WOFUL}-style cascading algorithm called \texttt{CascadeLinUCB} (see Appendix \ref{appExperiments}), but they set the variance upper bound to fixed $\sigma^2 > 0$. In fact, they remark ``ideally, $\sigma^2$ should be the variance of the observation noises,'' which \texttt{CascadeWOFUL} essentially \emph{learns}.
\end{rem}

\begin{rem} \label{remImplement}
Appendix \ref{appImplement} contains an improved version of \texttt{CascadeWOFUL}. It is more efficient (for example, the inverses are iteratively updated via Sherman-Morrison), satisfies the same theoretical guarantee as Algorithm \ref{algWoful}, and includes some tweaks that improve performance in practice. The tradeoff is that Algorithm \ref{algWoful} is simpler to explain, which is why we prefer it for the main text.
\end{rem}

\begin{algorithm}
\caption{\texttt{CascadeWOFUL} for linear cascading bandits} \label{algWoful}

\KwIn{exploration parameters $\{ \alpha_{t,H} , \alpha_{t,B} \}_{t=1}^n$}

\For{$t=1,\ldots,n$}{

{\bf Step 1:} Define the (clipped) Hoeffding-style UCBs 
\begin{gather*}
\mbf{U}_{t,H}(\cdot) = \max \left\{ \ip{\phi(\cdot)}{\hat{\mbf{\theta}}_{t,H}} + \alpha_{t,H} \| \phi(\cdot) \|_{\mbf{\Lambda}_{t,H}^{-1}} , 1/K \right\} , \quad \text{where} \\
\hat{\mbf{\theta}}_{t,H} = \mbf{\Lambda}_{t,H}^{-1} \sum_{s=1}^{t-1} \sum_{k=1}^{\min\{\mbf{C}_s,K\}} \phi(\mbf{a}_k^s) \mbf{w}_s (\mbf{a}_k^s)  ,\ \mbf{\Lambda}_{t,H} = I + \sum_{s=1}^{t-1} \sum_{k=1}^{\min\{\mbf{C}_s,K\}} \phi(\mbf{a}_k^s) \phi(\mbf{a}_k^s)^\trans
\end{gather*}

{\bf Step 2:} Define the Bernstein-style UCBs
\begin{gather*}
\mbf{U}_{t,B}(\cdot) = \ip{\phi(\cdot)}{\hat{\mbf{\theta}}_{t,B}} + \alpha_{t,B} \| \phi(\cdot) \|_{\mbf{\Lambda}_{t,B}^{-1}}  , \quad \text{where} \\
 \hat{\mbf{\theta}}_{t,B} = \mbf{\Lambda}_{t,B}^{-1} \sum_{s=1}^{t-1} \sum_{k=1}^{\min\{\mbf{C}_s,K\}} \frac{\phi(\mbf{a}_k^s) \mbf{w}_s (\mbf{a}_k^s)}{ \mbf{U}_{s,H}(\mbf{a}_k^s)} ,\ \mbf{\Lambda}_{t,B} = K I + \sum_{s=1}^{t-1} \sum_{k=1}^{\min\{\mbf{C}_s,K\}} \frac{\phi(\mbf{a}_k^s) \phi(\mbf{a}_k^s)^\trans}{ \mbf{U}_{s,H}(\mbf{a}_k^s)} 
\end{gather*}

{\bf Step 3:} Let $\mbf{a}_k^t = \argmax_{ e \in [L] \setminus \{ \mbf{a}_i^t \}_{i=1}^{k-1} } \mbf{U}_{t,B}(e)$ be the item with the $k$-th highest UCB, play $\mbf{A}_t = ( \mbf{a}_1^t , \ldots , \mbf{a}_K^t )$, observe $\mbf{C}_t = \inf \{ k \in [K] : \mbf{w}_t(\mbf{a}_k^t) = 1 \}$ (as in Algorithm \ref{algUcb})

}
\end{algorithm}

{\bf Regret bound.} We can now state our main result for the linear case. Here $\mathscr{B}_d(M) = \{ x \in \R^d : \| x \|_2 \leq M \}$ denotes the Euclidean ball of radius $M > 0$ in $\R^d$.\footnote{Our results hold with minor modification if $\phi : [L] \rightarrow \mathscr{B}_d(M_1)$ and $\theta \in \mathscr{B}_d(M_2)$ for general $M_1, M_2 > 0$. However, as in prior work, the modified algorithm needs to know an upper bound on $M_2$.}

\begin{thm} \label{thmLinear}
Suppose $\bar{w} \in [0,1]^L$, $\phi : [L] \rightarrow \mathscr{B}_d(1)$, and $\theta \in \mathscr{B}_d(1)$ satisfy $\bar{w}(e) = \ip{ \phi(e) }{ \theta }$ for all $e \in [L]$. Let $\pi$ be the policy of Algorithm \ref{algWoful} with inputs $\alpha_{t,H} = \sqrt{ d \log ( 1 + t K / d ) + 2 \log (n) } + 1$ and $\alpha_{t,B} = 8 \sqrt{ d \log ( 1 + t K / d ) \log (n^3 K) } +  4 \sqrt{K}  \log ( n^3 K ) + \sqrt{K}$. Then
\begin{equation*}
R_{\pi,\bar{w}}(n) = \tilde{O} \left( \min \left\{ \sqrt{n d (d+K)}  + n^{\frac{1}{6}} d^{\frac{7}{6}} (d+K)^{\frac{1}{2}} K^{\frac{1}{3}} , \sqrt{n} \max \{d,K\} \min\{d,K\}^{1/3} \right\} \right).
\end{equation*}
\end{thm}

Note this bound is completely independent of the number of items $L$. For large $n$ (typically the case of interest), it becomes $\tilde{O} ( \sqrt{n d ( d + K )} )$, which improves the best known bound of $\tilde{O}(\sqrt{n d^2 K})$ \citep{li2018online}. The theorem also establishes $\tilde{O}(\sqrt{n} \max \{ d, K \} \min \{d,K\}^{1/3} )$ regret \textit{uniformly} in $n$ (i.e., without additive $o(\sqrt{n})$ terms which may dominate for small $n$), which improves the best known uniform-$n$ bound of $\tilde{O}(\sqrt{n} d K)$ \citep{zong2016cascading}. See Table \ref{tabSummary} for more details.

\section{Overview of the analysis} \label{secAnalysis}

We next discuss the key ideas behind our proofs. The details are deferred to Appendices \ref{appProofNotes}-\ref{appProofUcb}.

{\bf Theorem \ref{thmKl} proof.} For simplicity, we assume the arms are ordered by their means, i.e., $\bar{w}(1) \geq \cdots \geq \bar{w}(L)$. Under this assumption, the optimal action is $[K]$. We call those items \textit{optimal} and $[L] \setminus [K]$ \textit{suboptimal}. We let $\mc{E}_t$ be the ``bad event'' that the empirical and true means differ substantially at time $t$, $\bar{\mc{E}}_t$ its complement, $G_{e,e^*,t}$ the event that suboptimal $e > K$ was chosen in favor of optimal $e^* \leq K$ and subsequently examined by the user, and $\Delta_{e,e^*} = \bar{w}(e^*) - \bar{w}(e)$ the \textit{reward gap}. Then as in Appendix A.1 of \cite{kveton2015cascading}, we ``linearize'' regret as follows:
\begin{equation} \label{eqUpperKveton}
R_{\pi,\bar{w}}(n) \leq \E \left[ \sum_{t \leq n} \sum_{e > K} \sum_{e^* \leq K} \Delta_{e,e^*} \ind ( \bar{\mc{E}}_t , G_{e,e^*,t} ) \right] + \sum_{t \leq n} \P(\mc{E}_t) .
\end{equation}
The second term is small due to concentration. For the first term, define $\Delta = \bar{w}(K) \sqrt{L/n}$ and assume $n \gg L$ (so $\Delta \ll \bar{w}(K)$).\footnote{The full proof addresses the cases $n \not \gg L$ and $\bar{w}(K) = 0$ (here we implicitly assume the latter to invert $\Delta$).} For each $e > K$, let $\mc{K}_\Delta(e) = \{ e^* \leq K : \Delta_{e,e^*} \leq \Delta \}$ be the optimal items with small gap relative to $e$ and $\bar{\mc{K}}_\Delta(e) = [K] \setminus \mc{K}_\Delta(e)$ the other optimal items. Then
\begin{equation*}
\sum_{e^* \leq K} \Delta_{e,e^*} \ind ( \bar{\mc{E}}_t , G_{e,e^*,t} ) \leq \Delta \sum_{e^* \in \mc{K}_\Delta(e)} \ind ( G_{e,e^*,t} ) + \sum_{e^* \in \bar{\mc{K}}_\Delta(e)} \Delta_{e,e^*} \ind ( \bar{\mc{E}}_t , G_{e,e^*,t} ) .
\end{equation*}
Therefore, we can upper bound the first term in \eqref{eqUpperKveton} by
\begin{equation} \label{eqUpperGaps}
\Delta \sum_{t \leq n} \E \left[ \sum_{e > K} \sum_{e^* \in \mc{K}_\Delta(e)} \ind ( G_{e,e^*,t} ) \right] + \E \left[ \sum_{t \leq n} \sum_{e > K} \sum_{e^* \in \bar{\mc{K}}_\Delta(e)} \Delta_{e,e^*} \ind ( \bar{\mc{E}}_t , G_{e,e^*,t} ) \right] .
\end{equation}
For the first term in \eqref{eqUpperGaps}, the inner double summation is the number of $e > K$ that were chosen in favor of $e^* \in \mc{K}_\Delta(e)$, and subsequently examined by the user, at round $t$. Denote this number by $\mbf{O}_t$. Recall $\bar{w}(e^*) \geq \bar{w}(K)$ (by the assumed ordering) and $\Delta \ll \bar{w}(K)$, so for $e^* \in \mc{K}_\Delta(e)$, we have
\begin{equation}
\bar{w}(e) = \bar{w}(e^*) - \Delta_{e,e^*} \geq \bar{w}(e^*) - \Delta \geq \bar{w}(K) - \Delta \gtrapprox \bar{w}(K).
\end{equation}
Hence, $\mbf{O}_t$ is bounded by the number of items $e$ with $\bar{w}(e) \approx \bar{w}(K)$ that the user examined. Since the user stops examining at the first attractive item, this number is dominated by a $\text{Geometric}(\bar{w}(K))$ random variable. Therefore, the first term in \eqref{eqUpperGaps} is $\tilde{O}(\Delta n / \bar{w}(K)) = \tilde{O}(\sqrt{nL})$. 

The second term in \eqref{eqUpperGaps} accounts for choosing $e$ instead of $e^*$ when $\Delta_{e,e^*} \geq \Delta$ and the empirical means are concentrated. Building upon the intuition of Section \ref{secIntuition}, we exploit the variance-awareness of \texttt{KL-UCB} and \texttt{UCB-V} (and use $\bar{w}(e) \leq \bar{w}(K)$ by the assumed ordering) to bound this term by $\tilde{O} ( L \bar{w}(e)(1-\bar{w}(e)) / \Delta ) \leq \tilde{O} ( L \bar{w}(K) / \Delta )$. Thus, by choice of $\Delta$, this term is $\tilde{O}(\sqrt{nL})$ as well.

{\bf Theorem \ref{thmLinear} proof.} We decompose regret similar to \eqref{eqUpperKveton} and \eqref{eqUpperGaps}, though with different choices of $\mc{E}_t$ and $\Delta$.\footnote{In fact, the proofs of Theorems \ref{thmKl} and \ref{thmLinear} both rely on a more general gap-free regret decomposition for cascading bandits (Lemma \ref{lemGenUpper} in Appendix \ref{appGeneral}), which to our knowledge is novel and may be of independent interest.} To bound $\P(\mc{E}_t)$, we use the aforementioned result of \cite{abbasi2011improved} to show that the Hoeffding UCBs $\mbf{U}_{t,H}$ upper bound the variances with high probability, then prove a guarantee for the least-squares estimate $\hat{\theta}_{t,B}$ using the Bernstein-style bound from \cite{zhou2021nearly}. We bound the first term in \eqref{eqUpperGaps} using the exact same logic as the tabular case. The second term has a more complicated analysis, but (as in the tabular case) it amounts to bounding the number of times that items $\Delta$-far from optimal are chosen when $\hat{\theta}_{t,B}$ is well-concentrated. For this, we adapt techniques from the standard linear bandit setting (the case $K=1$) to general $K \in [L]$.

{\bf Theorem \ref{thmLower} proof.} We let $\mc{H}_t' = \cup_{s=1}^{t-1} \{ \mbf{A}_s , \mbf{w}_s(\mbf{a}_1^s) , \ldots , \mbf{w}_s(\mbf{a}_K^s) \}$ be the \textit{entire} history before time $t$, which includes unobserved rewards $\mbf{w}_s(\mbf{a}_k^s) , k > \mbf{C}_s$. We also let $\Pi'$ be the policies that map $\mc{H}_t'$ to $\mbf{A}_t$. Note $\mc{H}_t \subset \mc{H}_t'$, so $\Pi \subset \Pi'$. For any $\pi \in \Pi'$ and $\bar{w} \in [0,1]^L$, we define
\begin{equation} \label{eqDbmRegret}
R_{\pi,\bar{w}}'(n) = \E_{\pi,\bar{w}} \left[ \sum_{t=1}^n \sum_{k=1}^K ( \bar{w}(a_k^*) - \bar{w}(\mbf{a}_k^t) ) \right] ,
\end{equation}
where $a_k^* = \argmax_{e \in [L] \setminus \{ a_i^* \}_{i=1}^{k-1} } \bar{w}(e)$. Then a lower bound linearization analogous to \eqref{eqUpperKveton} shows that for any $p \in [0,1]$ and $\bar{w} \in [0,p]^L$, $R_{\pi,\bar{w}}(n) \geq (1-p)^{K-1} R_{\pi,\bar{w}}'(n)$, which implies
\begin{equation} \label{eqReductionMain}
\inf_{\pi \in \Pi} \sup_{\bar{w} \in [0,1]^L} R_{\pi,\bar{w}}(n) \geq \inf_{\pi \in \Pi'} \sup_{\bar{w} \in [0,p]^L}  R_{\pi,\bar{w}}(n) \geq (1-p)^{K-1} \inf_{\pi \in \Pi'} \sup_{\bar{w} \in [0,p]^L} R_{\pi,\bar{w}}'(n) .
\end{equation}
If we choose $p = 1$, the $\inf \sup$ at right is the minimax regret for the document-based model that was analyzed by \cite{lattimore2018toprank} (see Remark \ref{remReduction}), but this makes \eqref{eqReductionMain} vacuous. On the other hand, by choosing $p = O ( 1/K )$ (again, the small click-through rate of Section \ref{secIntuition}), so that the term $(1-p)^{K-1}$ in \eqref{eqReductionMain} is $\Omega(1)$, we can modify their analysis to prove Theorem \ref{thmLower}.

{\bf Theorem \ref{thmUcb} proof.} We linearize the regret similar to the proof of Theorem \ref{thmLower}, then define a problem instance reminiscent of Section \ref{secIntuition} and (roughly) follow the intuition for \texttt{UCB1} therein.

\section{Experiments} \label{secExperiments}

Before closing, we conduct experiments on both synthetic and real data. Some details regarding experimental setup are deferred to Appendix \ref{appExperiments}. Code is available in the supplementary material.

\begin{figure}
\centering
\includegraphics[height=1.1in]{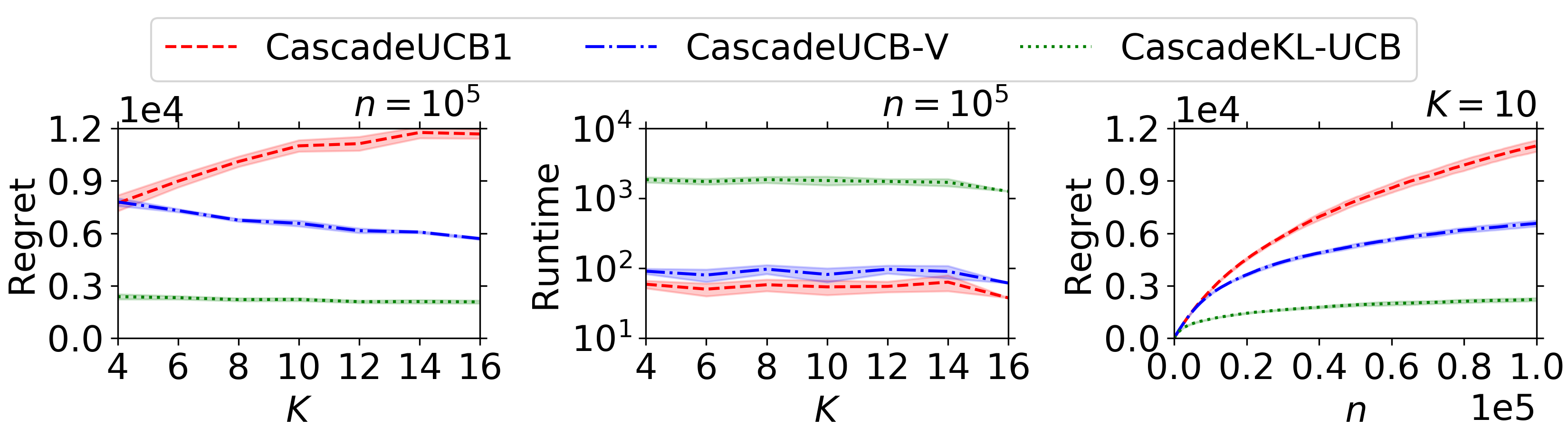}
\includegraphics[height=1.1in]{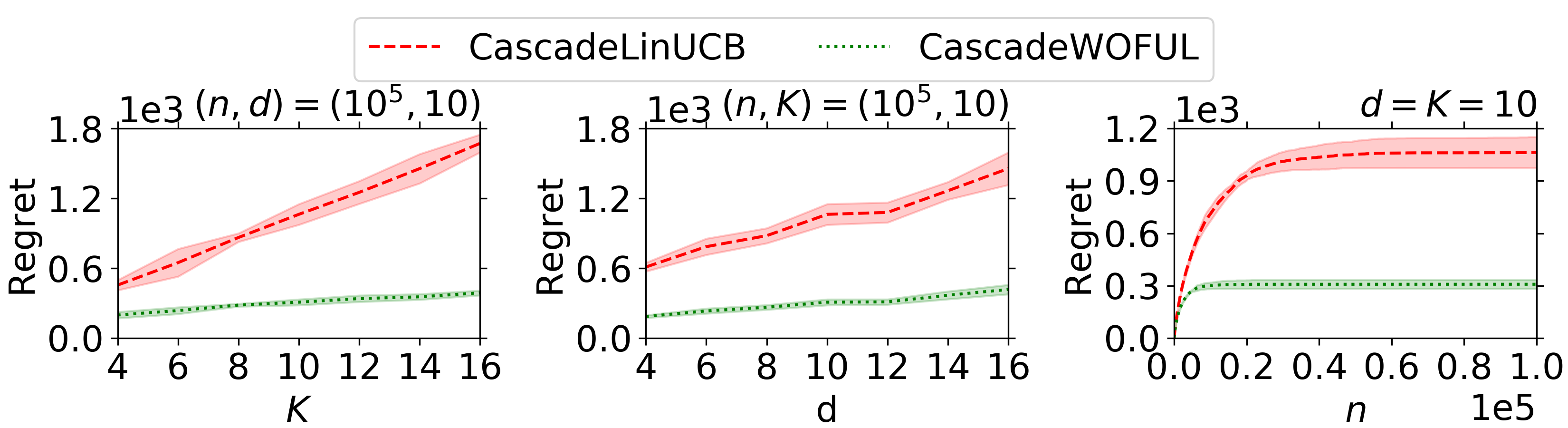}
\caption{Results for synthetic data (tabular on top, linear on bottom, $L=100$ in both)}
\label{figSyn}
\end{figure}

\begin{figure}
\centering
\includegraphics[height=1.3in]{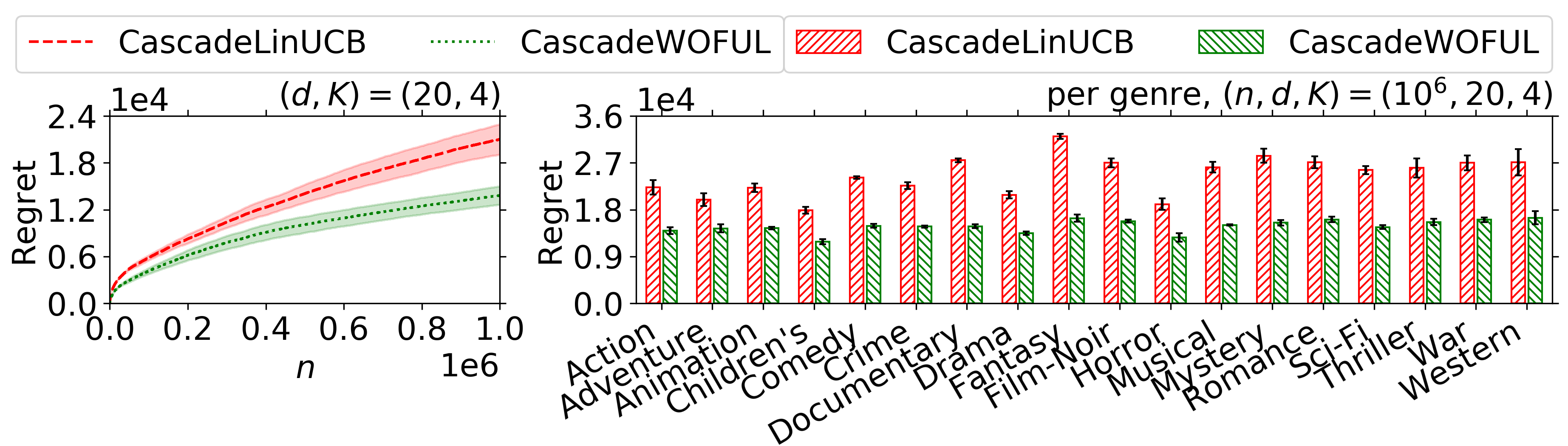}
\caption{Results for MovieLens data \citep{harper2015movielens}}
\label{figReal}
\end{figure}

{\bf Synthetic data.} We let $L=100$ and $K \in \{ 2 i \}_{i=2}^8$. For each $K$, we sample $\bar{w}(e)$ uniformly in $[\frac{2}{3K},\frac{1}{K}]$ for $e \leq K$ and in $[0, \frac{1}{3K}]$ for $e > K$. Note this yields a positive gap with the small click-through rate of Section \ref{secIntuition}. The top left plot in Figure \ref{figSyn} shows the regret at $n=10^5$ for the tabular albums of Section \ref{secResultsTabular} (the shaded regions are the standard deviations across five trials). As predicted by Theorems \ref{thmKl} and \ref{thmUcb}, the \texttt{CascadeUCB1} curve grows with $K$, while the variance-aware curves do not. The top middle plot shows the time spent computing UCBs for the same experiment, which confirms the behavior mentioned in Remark \ref{remUcbV}. For the linear case, we vary $d \in \{ 2 i \}_{i=2}^8$, generate the same $\bar{w}$, then compute unit-norm vectors $\theta$ and $\phi(e)$ satisfying $\bar{w}(e) = \ip{\phi(e)}{\theta}$ (see Appendix \ref{appExperiments}). We compare \texttt{CascadeWOFUL} to \texttt{CascadeLinUCB} \citep{zong2016cascading}, which \cite{li2018online} showed has the best existing regret guarantee. As suggested by Theorem \ref{thmLinear}, the left and middle plots on the bottom of Figure \ref{figSyn} show that our algorithm's regret has superior dependence on $K$ and $d$. The rightmost plots show that regret is sublinear in $n$ for the median values $K=d=10$.

{\bf Real data.} We replicate the first experiment from \cite{zong2016cascading} on the MovieLens-1M dataset (\url{grouplens.org/datasets/movielens/1m/}), which contains user ratings for $L \approx 4000$ movies. In brief, the setup is as follows. First, we use their default choices $d = 20$ and $K = 4$. Next, we divide the ratings into train and test sets based on the user who provided the rating. From the training data and a rank-$d$ SVD approximation, we learn a feature mapping $\phi$ from movies to the probability that a uniformly random training user rated the movie more than three stars. Finally, we run the algorithms as above, except at round $t \in [n]$, we sample a uniformly random user $\mbf{J}_t$ from the test set and define $\mbf{w}_t (\mbf{a}_k^t) = W(\mbf{J}_t, \mbf{a}_k^t)$, where $W(j,a) = \ind(\text{user $j$ rated movie $a$ more than $3$ stars})$. In other words, instead of the independent Bernoulli clicks of Section \ref{secModel}, we observe the actual feedback of user $\mbf{J}_t$. We point the reader to Section 4 of \cite{zong2016cascading} and Appendix \ref{appExperiments} for further details. The left plot of Figure \ref{figReal} shows that \texttt{CascadeWOFUL} outperforms \texttt{CascadeLinUCB} across $n$, eventually incurring less than $66\%$ of the regret. In addition to this setup from \cite{zong2016cascading}, we reran the experiment while restricting the set of items to movies of a particular genre, for each of $18$ genres in the dataset. This is intended to model platforms like Netflix that recommend movies in various categories. The right plot shows that \texttt{CascadeWOFUL} is superior for all genres; for some genres (e.g., fantasy) its regret is about half of \texttt{CascadeLinUCB}'s. Moreover, our experiments indicate that \texttt{CascadeWOFUL} improves \texttt{CascadeLinUCB} more dramatically for genres with smaller click-through rates (see Figure \ref{figRealApp} and surrounding discussion in Appendix \ref{appExperiments}), which reinforces a key message of this paper.

\section{Conclusion} \label{secConclusion}

In this work, we proved matching upper and lower bounds for the problem-independent regret of tabular cascading bandits and an upper bound for the linear case, all of which improve the best known. Our results suggest some interesting future directions, such as proving minimax lower bounds for the linear case and revisiting Thompson sampling for cascading bandits \citep{zhong2021thompson} in light of our variance-aware insight; see Appendix \ref{appFuture} for details. Finally, we note the paper is theoretical and has no immediate societal impact. Nevertheless, we urge caution for the negative impacts that could arise in practice. For example, our MovieLens experiments involved training on a subset of users, which could cause poor recommendations for demographics underrepresented in the training set. 

\begin{ack}
This work was partially supported by ONR Grant N00014-19-1-2566, NSF TRIPODS Grant 1934932, NSF Grants CCF 22-07547, CCF 19-34986, CNS 21-06801, 2019844, 2112471, 2107037, the Machine Learning Lab (MLL) at UT Austin, and the Wireless Networking and Communications Group (WNCG) Industrial Affiliates Program. We thank Advait Parulekar for helpful discussions.
\end{ack}

\bibliography{references}

\section*{Checklist}

\begin{enumerate}

\item For all authors...
\begin{enumerate}
  \item Do the main claims made in the abstract and introduction accurately reflect the paper's contributions and scope?
    \answerYes{The claims are accurate. Furthermore, we provide references to the formal results alongside each claim stated in the introduction.}
  \item Did you describe the limitations of your work?
    \answerYes{The main limitation is that our lower bounds (Theorems \ref{thmLower} and \ref{thmUcb}) require some additional assumptions. These are stated clearly in the theorem statements and discussed in Remarks \ref{remReduction} and \ref{remUcb1}.}
  \item Did you discuss any potential negative societal impacts of your work?
    \answerYes{Section \ref{secConclusion} addresses societal impact.}
  \item Have you read the ethics review guidelines and ensured that your paper conforms to them?
    \answerYes{}
\end{enumerate}

\item If you are including theoretical results...
\begin{enumerate}
  \item Did you state the full set of assumptions of all theoretical results?
    \answerYes{As mentioned above, the assumptions are clearly stated in the theorem statements.}
        \item Did you include complete proofs of all theoretical results?
    \answerYes{Section \ref{secIntuition} discusses the high-level intuition behind our proofs, Section \ref{secAnalysis} contains a more detailed proof sketch, and the appendices include complete proofs.}
\end{enumerate}

\item If you ran experiments...
\begin{enumerate}
  \item Did you include the code, data, and instructions needed to reproduce the main experimental results (either in the supplemental material or as a URL)?
    \answerYes{Complete code to recreate all plots is available in the supplementary material.}
  \item Did you specify all the training details (e.g., data splits, hyperparameters, how they were chosen)?
    \answerYes{The experimental setup is discussed in detail in Appendix \ref{appExperiments}, and other training details can be found in the code.}
        \item Did you report error bars (e.g., with respect to the random seed after running experiments multiple times)?
    \answerYes{All plots include error bars.}
        \item Did you include the total amount of compute and the type of resources used (e.g., type of GPUs, internal cluster, or cloud provider)?
    \answerYes{The experiments were run over several hours on a laptop, so no significant resources were used. Nevertheless, Appendix \ref{appExperiments} mentions the approximate runtime needed to recreate the figures.}
\end{enumerate}

\item If you are using existing assets (e.g., code, data, models) or curating/releasing new assets...
\begin{enumerate}
  \item If your work uses existing assets, did you cite the creators?
    \answerYes{We only used the MovieLens dataset, which we cited and also provided a link to.}
  \item Did you mention the license of the assets?
    \answerYes{The link to the MovieLens dataset contains a README with license information.}
  \item Did you include any new assets either in the supplemental material or as a URL?
    \answerNA{}
  \item Did you discuss whether and how consent was obtained from people whose data you're using/curating?
    \answerNA{}
  \item Did you discuss whether the data you are using/curating contains personally identifiable information or offensive content?
    \answerNA{}
\end{enumerate}

\item If you used crowdsourcing or conducted research with human subjects...
\begin{enumerate}
  \item Did you include the full text of instructions given to participants and screenshots, if applicable?
    \answerNA{}
  \item Did you describe any potential participant risks, with links to Institutional Review Board (IRB) approvals, if applicable?
    \answerNA{}
  \item Did you include the estimated hourly wage paid to participants and the total amount spent on participant compensation?
    \answerNA{}
\end{enumerate}

\end{enumerate}

\newpage \allowdisplaybreaks \appendix

\section{Details on related work} \label{appRelated}

In addition to the papers listed in Table \ref{tabSummary}, several others have considered cascading bandits and variants. In this appendix, we separately discuss the relevant work in the tabular and linear cases. We also point the reader to \cite{chuklin2015click} for a survey of click models, and to Chapter 32 of \cite{lattimore2020bandit} for an introduction to bandit-style ranking problems.

{\bf Tabular case.}
As mentioned in Section \ref{secIntro}, \cite{kveton2015cascading} and \cite{combes2015learning} introduced cascading bandits concurrently (though the latter work did not use that name and their model is more general). Both proved problem-dependent regret bounds of the form $(L-K) \log(n) / \Delta$, where $\Delta$ is a certain notion of mean reward gap. \cite{kveton2015combinatorial} provided gap-free bounds for a model whose reward structure is more general than that of cascading bandits. However, these results involve other problem-dependent quantities, so they can be much larger than ours for worst case mean reward vectors $\bar{w}$. \cite{zoghi2017online} generalized cascading bandits -- essentially, by generalizing the underlying click model -- and derived gap-dependent bounds. \cite{lattimore2018toprank} further generalized the underlying click model and established gap-free bounds (those shown in Table \ref{tabSummary}). \cite{li2019cascading} and \cite{wang2021near} proved gap-dependent bounds for a non-stationary variant of the cascading bandit model, where $\bar{w}$ changes arbitrarily at $M$ rounds. Theorem 3 of the latter also reports the gap-free lower bound $\Omega(\sqrt{nL(M+1)})$, which in our case $M=0$ appears tighter than the bound $\sqrt{nL/K}$ that the main text stated is the best known. However, their proof shows that the $\Omega(\cdot)$ notation hides a term which is exponentially small in $K$. Finally, the concurrent work by \cite{liu2022batch} also examines the role of variance in cascading bandits (as a special case of a more general model), though their focus is on gap-dependent regret bounds.

{\bf Linear case.} \cite{li2016contextual} considered the linear case of the model from \cite{kveton2015combinatorial}. Analogous to \cite{kveton2015combinatorial}'s results in tabular case, their regret bounds can be much worse than ours in the worst case. \cite{li2018online} proposed a model where a random user arrives at each round and the parameter vector $\theta$ depends on that user. When specialized to our case (i.e., the case of a single $\theta$), their algorithm reduces to \texttt{CascadeLinUCB} \citep{zong2016cascading}, and they sharpen the bound from \cite{zong2016cascading} by a factor of $\sqrt{K}$ (at the cost of some $o(\sqrt{n})$ additive terms, i.e., their bound does not hold uniformly in $n$); see Table \ref{tabSummary} for details. \cite{hiranandani2020cascading} studied a model that generalizes cascading bandits to account for position bias, but their regret bounds include problem-dependent quantities. \cite{li2019online} established $O(\sqrt{n d \log L} K)$ regret for a generalized click model reminiscent of the ones studied by \cite{zoghi2017online} and \cite{lattimore2018toprank} in the tabular case. Note this bound can be arbitrarily worse than those in Table \ref{tabSummary} due to the $\log L$ factor. Moreover, \cite{li2019online}'s experiments show the algorithm attaining their upper bound performs worse than \texttt{CascadeLinUCB} when their general model is specialized to cascading bandits (see their Figure 2), which is another reason why we used the latter for experiments. Finally, we note the work by \cite{zhong2021thompson} mentioned above is an extended version of the paper by \cite{cheung2019thompson}.

\section{Improved version of \texttt{CascadeWOFUL}} \label{appImplement}

Algorithm \ref{algWofulEff} provides the version of \texttt{CascadeWOFUL} that was mentioned in Remark \ref{remImplement}. There are four changes from Algorithm \ref{algWoful}. First, Step 1 defines the Bernstein UCB $\mbf{U}_{t,B}(e)$ from Algorithm \ref{algWoful}, along with an (unclipped) Hoeffding UCB $\tilde{\mbf{U}}_{t,H}(e)$. We then define the UCB $\mbf{U}_t(e)$ as the minimum of these two UCBs and $1$. The basic intuition is that $\mbf{U}_{t,B}(e)$, $\tilde{\mbf{U}}_{t,H}(e)$, and $1$ all upper bound $\bar{w}(e)$ (the former two with high probability; the latter almost surely), so we should use the tightest of the three for the final UCB. Second, Step 2 chooses the action $\mbf{A}_t$ similar to Algorithm \ref{algWoful}, though it is greedy with respect to $\mbf{U}_t(e)$ instead of $\mbf{U}_{t,B}(e)$. Third, Steps 3 and 4 update the inverses $\mbf{\Lambda}_{t,H}^{-1}$ and $\mbf{\Lambda}_{t,B}^{-1}$ iteratively, which reduces their computational complexity from $d^3$ to $d^2$. Furthermore, note that when updating the latter, we clip  $\tilde{\mbf{U}}_{t,H}(\mbf{a}_k^t)$ below by $1/K$ (as in Algorithm \ref{algWoful}) and above by $1$ (following the same intuition described above) when defining the normalized features $\phi_t(\mbf{a}_k^t)$. Fourth, Steps 5 and 6 iteratively update the estimates $\hat{\theta}_{t,H}$ and $\hat{\theta}_{t,B}$, again to save computation. In Appendix \ref{appImprovedProof}, we sketch a proof that this version also satisfies the guarantee of Theorem \ref{thmLinear}.

\begin{algorithm}
\caption{Improved version of \texttt{CascadeWOFUL} (recommended over Algorithm \ref{algWoful} in practice)} \label{algWofulEff}

\KwIn{exploration parameters $\{ \alpha_{t,H} , \alpha_{t,B} \}_{t=1}^n$}

Initialize $\hat{\theta}_{1,H} = 0$, $\mbf{\Lambda}_{1,H} = I$, $\mbf{\Lambda}_{1,H}^{-1} = I$, $\hat{\theta}_{1,B} = 0$,  $\mbf{\Lambda}_{1,B} = K I$, $\mbf{\Lambda}_{1,B}^{-1} = I/K$

\For{$t=1,\ldots,n$}{

\vspace{0.1in}

\textit{Step 1: compute upper confidence bounds (UCBs)}

\For{$e = 1 , \ldots , L$}{

$\tilde{\mbf{U}}_{t,H}(e) = \ip{\phi(e)}{\hat{\theta}_{t,H}} + \alpha_{t,H} \| \phi(e) \|_{\mbf{\Lambda}_{t,H}^{-1}}$ (Hoeffding-style UCB)

$\mbf{U}_{t,B}(e) = \ip{\phi(e)}{\hat{\mbf{\theta}}_{t,B}} + \alpha_{t,B} \| \phi(e) \|_{\mbf{\Lambda}_{t,B}^{-1}}$ (Bernstein-style UCB)

$\mbf{U}_t(e) = \min \{  \tilde{\mbf{U}}_{t,H}(e) , \mbf{U}_{t,B}(e) , 1 \}$ (tightest of the two UCBs and the trivial UCB)

}

\vspace{0.1in}

\textit{Step 2: play greedily with respect to UCBs (similar to Algorithms \ref{algUcb} and \ref{algWoful})}

\For{$k=1, \ldots, K$}{

$\mbf{a}_k^t = \argmax_{ e \in [L] \setminus \{ \mbf{a}_i^t \}_{i=1}^{k-1} } \mbf{U}_t(e)$ ($k$-th highest UCB; ties broken arbitrarily)

}

Choose $\mbf{A}_t = ( \mbf{a}_1^t , \ldots , \mbf{a}_K^t )$, observe $\mbf{C}_t = \inf \{ k \in [K] : \mbf{w}_t(\mbf{a}_k^t) = 1 \}$ (where $\inf \emptyset = \infty$)

\vspace{0.1in}

\textit{Step 3: update $\mbf{\Lambda}_{\cdot,H}$, $\mbf{\Lambda}_{\cdot,H}^{-1}$, $\mbf{\Lambda}_{\cdot,B}$, and $\mbf{\Lambda}_{\cdot,B}^{-1}$ (the inverses via Sherman-Morrison)}

Initialize $\mbf{\Lambda}_{t+1,H} = \mbf{\Lambda}_{t,H}$, $\mbf{\Lambda}_{t+1,H}^{-1} = \mbf{\Lambda}_{t,H}^{-1}$, $\mbf{\Lambda}_{t+1,B} = \mbf{\Lambda}_{t,B}$, and $\mbf{\Lambda}_{t+1,B}^{-1} = \mbf{\Lambda}_{t,B}^{-1}$

\For{$k=1, \ldots , \min \{ \mbf{C}_t , K \}$}{

$\mbf{\Lambda}_{t+1,H} \leftarrow \mbf{\Lambda}_{t+1,H} + \phi(\mbf{a}_k^t) \phi(\mbf{a}_k^t)^\trans$

$\mbf{\Lambda}_{t+1,H}^{-1} \leftarrow \mbf{\Lambda}_{t+1,H}^{-1} - \mbf{\Lambda}_{t+1,H}^{-1} \phi(\mbf{a}_k^t) \phi(\mbf{a}_k^t)^\trans \mbf{\Lambda}_{t+1,H}^{-1} / ( 1 + \phi(\mbf{a}_k^t)^\trans \mbf{\Lambda}_{t+1,H}^{-1} \phi(\mbf{a}_k^t) )$

$\mbf{U}_{t,H}(\mbf{a}_k^t) = \min \{ \max \{ \tilde{\mbf{U}}_{t,H}(\mbf{a}_k^t) , 1/K \} , 1 \}$ (clipped Hoeffding UCB)

$\phi_t(\mbf{a}_k^t) = \phi(\mbf{a}_k^t) / \sqrt{\mbf{U}_{t,H}(\mbf{a}_k^t)}$ (normalized feature vector)

$\mbf{\Lambda}_{t+1,B} \leftarrow \mbf{\Lambda}_{t+1,B} + \phi_t(\mbf{a}_k^t) \phi_t(\mbf{a}_k^t)^\trans$

$\mbf{\Lambda}_{t+1,B}^{-1} \leftarrow \mbf{\Lambda}_{t+1,B}^{-1} - \mbf{\Lambda}_{t+1,B}^{-1} \phi_t(\mbf{a}_k^t) \phi_t(\mbf{a}_k^t)^\trans \mbf{\Lambda}_{t+1,B}^{-1} / ( 1 + \phi_t(\mbf{a}_k^t)^\trans \mbf{\Lambda}_{t+1,B}^{-1} \phi_t(\mbf{a}_k^t) )$

}

\vspace{0.1in}

\textit{Step 4: compute $\hat{\theta}_{t+1,H}$ and $\hat{\theta}_{t+1,B}$ (the two cases correspond to ``click'' and ``no click'')}

\If{$\mbf{C}_t < \infty$}{

$\hat{\theta}_{t+1,H} = \mbf{\Lambda}_{t+1,H}^{-1} ( \mbf{\Lambda}_{t,H} \hat{\theta}_{t,H} + \phi ( \mbf{a}_{\mbf{C}_t}^t ) )$, $\hat{\theta}_{t+1,B} = \mbf{\Lambda}_{t+1,B}^{-1} ( \mbf{\Lambda}_{t,B} \hat{\theta}_{t,B} + \phi_t ( \mbf{a}_{\mbf{C}_t}^t ) )$

}
\Else{

$\hat{\theta}_{t+1,H} = \mbf{\Lambda}_{t+1,H}^{-1} \mbf{\Lambda}_{t,H} \hat{\theta}_{t,H}$, $\hat{\theta}_{t+1,B} = \mbf{\Lambda}_{t+1,B}^{-1} \mbf{\Lambda}_{t,B} \hat{\theta}_{t,B}$

}

\vspace{0.1in}

}
\end{algorithm}

\section{Details on experiments} \label{appExperiments}

{\bf Tabular algorithms.} For \texttt{CascadeUCB-V} and \texttt{CascadeUCB1}, respectively, we compute the UCBs $\mbf{U}_t(e)$ as shown in \eqref{eqUcbV} and \eqref{eqUcb1}, respectively, then take a minimum with the resulting UCB and 1 (analogous to Algorithm \ref{algWofulEff}). For \texttt{CascadeKL-UCB} (which lacks a closed form), the fact that $d(\hat{\mbf{w}}_{\mbf{T}_{t-1}(e)}(e), \cdot)$ is increasing on $[\hat{\mbf{w}}_{\mbf{T}_{t-1}(e)}(e),1]$ implies that we can estimate the UCB \eqref{eqKlUcb} via binary search, as shown in Algorithm \ref{algKlBinSearch} (we chose $\texttt{tol} = 10^{-4}$ in experiments).

{\bf Linear algorithms.} \texttt{CascadeLinUCB} \citep{zong2016cascading} is shown in Algorithm \ref{algLinUcb}. As mentioned in Remark \ref{remLinUcb}, it is a \texttt{WOFUL}-style algorithm that uses fixed $\sigma^2 > 0$ for the variances. We set $\sigma^2 = 1$ (as in their experiments) and choose the exploration parameter $\alpha = \sqrt{ d \log ( 1 + n K / d ) + 2 \log ( n K ) } + 1$ (which is the choice dictated by their Theorem 1 in the case $\sigma^2 = 1$ and $\| \theta \|_2 \leq 1$). For the experiments, we use an efficient implementation of Algorithm \ref{algLinUcb} analogous to Algorithm \ref{algWofulEff}. 

\begin{algorithm}[t]
\caption{Estimating the \texttt{KL-UCB} \eqref{eqKlUcb} via binary search} \label{algKlBinSearch}

$\ell = \hat{\mbf{w}}_{\mbf{T}_{t-1}(e)}(e), u = 1$

\While{$u - \ell > \texttt{tol}$}{

\lIf{$d(\hat{\mbf{w}}_{\mbf{T}_{t-1}(e)}(e) , ( u + \ell )/2) > \log(f(t)) / \mbf{T}_{t-1}(e)$}{$u = ( u + \ell ) / 2$}

\lElse{$\ell = ( u + \ell ) / 2$}

}

\Return{$u \approx \max \{ u' \in [0,1] : d(\hat{\mbf{w}}_{\mbf{T}_{t-1}(e)}(e) , u' ) \leq \log(f(t)) / \mbf{T}_{t-1}(e) \}$}
\end{algorithm}

\begin{algorithm}
\caption{\texttt{CascadeLinUCB} for linear cascading bandits \citep{zong2016cascading}} \label{algLinUcb}

\KwIn{exploration parameter $\alpha$, variance parameter $\sigma^2$}

\For{$t=1,\ldots,n$}{

Compute $\mbf{U}_t(e) = \min \{ \ip{\phi(e)}{\hat{\mbf{\theta}}_t} + \alpha \| \phi(e) \|_{\mbf{\Lambda}_t^{-1}} , 1 \}$ for each $e \in [L]$, where
\begin{equation*}
\hat{\mbf{\theta}}_t = \mbf{\Lambda}_t^{-1} \sum_{s=1}^{t-1} \sum_{k=1}^{\min\{\mbf{C}_s,K\}} \phi(\mbf{a}_k^s) \mbf{w}_s (\mbf{a}_k^s) / \sigma^2 ,\ \mbf{\Lambda}_t = I + \sum_{s=1}^{t-1} \sum_{k=1}^{\min\{\mbf{C}_s,K\}} \phi(\mbf{a}_k^s) \phi(\mbf{a}_k^s)^\trans / \sigma^2
\end{equation*}

Let $\mbf{a}_k^t = \argmax_{ e \in [L] \setminus \{ \mbf{a}_i^t \}_{i=1}^{k-1} } \mbf{U}_t(e)$ be the item with the $k$-th highest UCB, play $\mbf{A}_t = ( \mbf{a}_1^t , \ldots , \mbf{a}_K^t )$, observe $\mbf{C}_t = \inf \{ k \in [K] : \mbf{w}_t(\mbf{a}_k^t) = 1 \}$

}
\end{algorithm}

For \texttt{CascadeWOFUL}, we use the Hoeffding parameter suggested by Theorem \ref{thmLinear}, namely,
\begin{equation*}
\alpha_{t,H} = \beta_{t,H} + 1 , \quad \text{where} \quad \beta_{t,H} = \sqrt{ d \log ( 1 + t K / d ) + 2 \log (n) } .
\end{equation*}
Note the term $\beta_{t,H}$ accounts for concentration of the self-normalized estimate $\hat{\theta}_{t,H}$ of $\theta$ \citep{abbasi2011improved}, while the term $1$ accounts for the bias introduced by the regularizer $I$ used to compute $\hat{\theta}_{t,H}$. On the other hand, we found the Bernstein parameter $\alpha_{t,B}$ stated in Theorem \ref{thmLinear} is too conservative in practice, so we instead chose $\alpha_{t,B} = \beta_{t,H} + \sqrt{K}$, i.e., we use the same concentration term but change the other term to account for the $K I$ regularizer in $\mbf{\Lambda}_{t,B}$. Therefore, compared to the theorem, the experimental choice of $\alpha_{t,B}$ uses a smaller concentration parameter. We believe this is reasonable because the choice dictated by the theorem is likely loose. This is because the theoretical choice is inherited from the Bernstein analysis of \cite{zhou2021nearly}, which is based on the Hoeffding analysis of \cite{dani2008stochastic}, which \cite{abbasi2011improved} improved in the Hoeffding case. Put differently, state-of-the-art Bernstein bounds seem to be looser, which leads to a larger Bernstein parameter in Theorem \ref{thmLinear} that hamstrings the practical performance \texttt{CascadeWOFUL} compared to \texttt{CascadeLinUCB}.\footnote{The looser parameter is good enough to prove Theorem \ref{thmLinear}, so this is mostly an experimental issue.} Therefore, we use a tighter parameter for a fairer comparison.

{\bf Synthetic $\theta$ and $\phi$.} For the linear case of the synthetic data experiments, we randomly sample $\bar{w}(e)$ as described in Section \ref{secExperiments}, then construct $\theta$ and $\phi$ as follows. First, we generate $\theta$ and $\gamma_1$ uniformly on the sphere. Next, we define $\gamma_2 =  \gamma_1 - \ip{\gamma_1}{\theta} \theta$ and $\bar{\theta} = \gamma_2 / \| \gamma_2 \|_2$. Note that $\theta$ and $\bar{\theta}$ are on the unit sphere and are orthogonal. Therefore, if we define $\phi(e) = \bar{w}(e) \theta + \sqrt{ 1 - \bar{w}(e)^2 } \bar{\theta}$, we obtain
\begin{gather}
\ip{\phi(e)}{\theta} = \bar{w}(e) \| \theta \|_2^2 +  \sqrt{ 1 - \bar{w}(e)^2 } \ip{\bar{\theta}}{\theta} = \bar{w}(e) , \\
\| \phi(e) \|_2^2 = \bar{w}(e)^2 \|\theta\|_2^2 +  ( 1 - \bar{w}(e)^2 ) \| \bar{\theta} \|_2^2 + 2 \bar{w}(e)  \sqrt{ 1 - \bar{w}(e)^2 } \ip{\theta}{\bar{\theta}} = 1 .
\end{gather}
In summary, the construction above yields unit-norm features $\{ \phi(e) \}_{e=1}^L$ and a unit-norm parameter $\theta$ such that $\ip{\phi(e)}{\theta} = \bar{w}(e)$ for each $e \in [L]$.\footnote{This may seem overcomplicated; for example, we could have simply generated $\theta$ on the unit sphere and set $\phi(e) = \bar{w}(e) \theta$ to ensure $\ip{\phi(e)}{\theta} = \bar{w}(e)$. However, the latter is a bit unsatisfying, because then $\|\phi(e)\|_2 = \bar{w}(e) \|\theta\|_2 = \bar{w}(e)$, so knowledge of the features implies knowledge of the rewards.}

{\bf MovieLens, all genre experiment.} From the ratings data, we extract $W_{all} \in \{0,1\}^{m \times L}$, where $m = 6040$ is the number of users, $L = 3952$ is the number of movies, and $W_{all}(j,e) = 1$ if and only if the $j$-th user rated the $e$-th movie more than three stars. For each experimental trial, we randomly partition the rows of $W_{all}$ into $W_{train} \in \{0,1\}^{(m/2) \times L}$ and $W_{test} \in \{0,1\}^{(m/2) \times L}$, i.e., we partition the users into train and test sets. We compute a rank-$d$ SVD approximation $W_{train} \approx U \Sigma V^\trans$, where $U \in \R^{(m/2) \times d}$ and $V \in \R^{L \times d}$ have orthogonal columns and $\Sigma \in \R^{d \times d}$ is diagonal and nonnegative. Let $\{ \tilde{\phi}(e) \}_{e=1}^L$ be the rows of $V \Sigma$ (viewed as column vectors), $\iota = \max_{e \in [L]} \| \tilde{\phi}(e) \|_2$, $\phi(e) = \tilde{\phi}(e) / \iota$, $1_{m/2} \in \R^{m/2}$ the column vector of ones, and $\theta = 2 \iota U^\trans 1_{m/2} / m$. Notice that
\begin{equation*}
\phi(e)^\trans \theta = \frac{V(e,:) \Sigma U^\trans 1_{m/2}}{m/2} \approx \frac{W_{train}^\trans(e,:) 1_{m/2}}{ m/2 } = \frac{\sum_{j=1}^{m/2} W_{train}(j,e) }{m/2}  ,
\end{equation*} 
i.e., $\ip{\phi(e)}{\theta}$ is (approximately) the fraction of users in the training set who rated $e$ more than $3$ stars. Therefore, if this fraction is roughly similar across train and test sets, and if the implicit low-rank assumption approximately holds, then $\ip{\phi(e)}{\theta}$ is (roughly) the probability that a uniformly random user from the test set rates $e$ more than $3$ stars. Consequently, $\phi$ is a reasonable feature mapping if the Bernoulli click feedback $\{ \mbf{w}_t(\mbf{a}_k^t) \}_{k=1}^K$ is replaced by $\{ W_{test}(\mbf{J}_t,\mbf{a}_k^t) \}_{k=1}^K$, where $\mbf{J}_t$ is sampled uniformly from the $m/2$ users in the test set, as mentioned in Section \ref{secExperiments}.

Two technical clarifications are in order. First, note that we normalized $\tilde{\phi}(e)$ to ensure $\phi(e)$ has unit norm and absorbed the normalization constant $\iota$ into $\theta$. We initially ran the experiment without this normalization, but this yielded $\max_{e \in [L]} \|\phi(e)\|_2 \geq 10$, which violates the assumptions of Theorem \ref{thmLinear}. In contrast, the normalization ensures $\max_{e \in [L]} \|\phi(e)\|_2 \leq 1$ (as in the theorem), and we found in practice that $\theta$ remained in the unit ball (again, as in the theorem) after absorbing the constant $\iota$.

The second clarification is with regards to the optimal policy. In particular, replacing the independent Bernoullis $\{ \mbf{w}_t(\mbf{a}_k^t) \}_{k=1}^K$ with $\{ W_{test}(\mbf{J}_t,\mbf{a}_k^t) \}_{k=1}^K$ means that they are no longer conditionally independent given $\{ \mbf{a}_k^t \}_{k=1}^K$. As a consequence, finding the optimal policy is computationally hard. However, as discussed by \cite{zong2016cascading}, one can efficiently compute a constant factor approximation via the greedy algorithm that iteratively adds the item to the optimal set which attracts the most users (from the test set) who are not already attracted to at least one optimal item. We therefore use this greedy poilcy (instead of the globally optimal one) when computing regret.

{\bf MovieLens, per genre experiment.} For each genre, we run the same experiment but only include the movies whose lists of genres contain the given one. The left plot in Figure \ref{figRealApp} shows the regret of \texttt{CacadeWOFUL} relative to that of \texttt{CascadeLinUCB}, as a function of the click probability for the aforementioned greedy policy that we treat as optimal. Aside from some apparent outliers (e.g., ``Western''), the two quantities are roughly correlated. This suggests that \texttt{CacadeWOFUL} offers the most dramatic improvement when even the greedy policy struggles for clicks, which is reminiscent of the small click-through rate of Section \ref{secIntuition}. Additionally, we note that the average (across genres) of the relative regret is about $60\%$, which is better than the $66\%$ figure quoted in Section \ref{secExperiments} for the experiment that ignored genres. The reason for this discrepancy seems to be that genres with higher click rates under the near-optimal greedy policy (for which the relative regret is also higher -- e.g., ``Comedy'' and ``Drama'' in the left plot) are overrepresented in the dataset, as shown at right.

\begin{figure}
\centering
\includegraphics[height=2.4in]{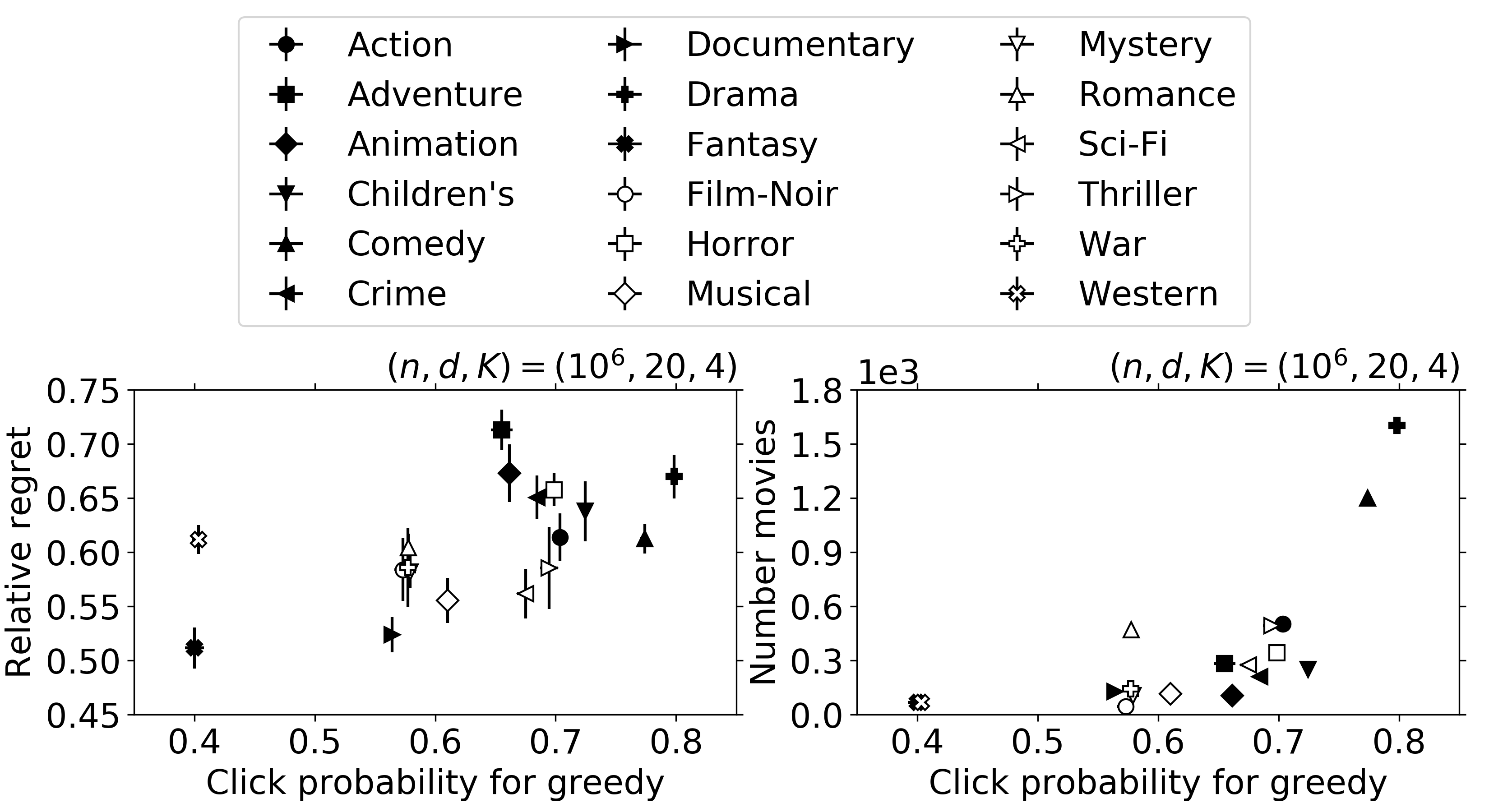}
\caption{Additional results for MovieLens-1M data}
\label{figRealApp}
\end{figure}

{\bf Other information on experiments.} Python code to recreate Figures \ref{figSyn}, \ref{figReal}, and \ref{figRealApp} is included in the supplementary material. All experiments were run on laptop with a 2.4GHz processor and 16GB of memory. The experiments took on the order of hours to complete -- specifically, $<20$ hours when the three Python files corresponding to synthetic tabular data, synthetic linear data, and real data are executed in parallel, or $\approx 27$ hours when executed sequentially via the shell script included with the supplementary material. See the ``README.txt'' file therein for further details.

\section{Future directions} \label{appFuture}

As mentioned in the conclusion, our work leaves several problems open.

{\bf Linear lower bounds.} While our upper and lower bounds match in the tabular case (up to logarithmic terms), we currently lack a lower bound for the linear case. We conjecture that this bound is $\Omega(\sqrt{n} d)$, which would demonstrate that our algorithm’s upper bound $\tilde{O}(\sqrt{n d (d+K)})$ is nearly optimal (at least when $K = \tilde{O}(d)$, which we feel is the more reasonable case in applications). This conjecture is based on the observations that (1) our tabular lower bound $\Omega(\sqrt{nL})$ matches the non-cascading (i.e., multi-armed bandit) bound up to constants, and (2) in the linear case, the analogous non-cascading bound is $\Omega(\sqrt{n} d)$ (see, e.g., Chapter 24 of \cite{lattimore2020bandit} and the references therein). We have yet to prove this conjecture, but we believe it can be established using the ideas behind our tabular lower bound that were discussed in Section \ref{secAnalysis} -- first, linearize the regret; next, lower bound the $\inf$ over cascading policies $\Pi$ by the $\inf$ over policies $\Pi'$ which observe the entire history $\mathcal{H}_t'$; and finally, lower bound the resulting non-cascading regret.

{\bf Thompson sampling (TS) for cascading bandits.} We focused on UCB algorithms, but TS solutions have also been proposed. An excellent reference is the recent journal paper by \cite{zhong2021thompson}. Among other results, they prove $\tilde{O}(\sqrt{nLK})$ regret for their tabular algorithm \texttt{TS-Cascade}. Note this algorithm explicitly uses the empirical variance $\hat{\mbf{v}}_t(e)$ as in \texttt{UCB-V} \eqref{eqUcbV} (see their Algorithm 2), so our ideas may be able to improve its regret bound to $\tilde{O}(\sqrt{nL})$ (the same improvement Theorem \ref{thmKl} establishes for variance-aware UCB compared to prior work). On the other hand, their linear algorithm \texttt{LinTS-Cascade} chooses items $e$ greedily with respect to $\ip{\phi(e)}{\theta_{t,H}^{TS}}$, where $\theta_{t,H}^{TS}$ is a Gaussian TS with variance-\textit{un}aware mean $\hat{\theta}_{t,H}$ and covariance $\mbf{\Lambda}_{t,H}^{-1}$ (in the notation of our Algorithm \ref{algWoful}). Hence, we believe \texttt{LinTS-Cascade} can be improved by replacing $\hat{\theta}_{t,H}$ and $\mbf{\Lambda}_{t,H}^{-1}$ with our variance-aware analogues $\hat{\theta}_{t,B}$ and $\mbf{\Lambda}_{t,B}^{-1}$. A tentative numerical comparison is shown in Figure \ref{figTS}, which confirms this belief under the experimental setup from the bottom right plot in Figure \ref{figSyn}. See Section 7 of \cite{zhong2021thompson} for a detailed numerical comparison of existing TS and UCB approaches.

\begin{figure}
\centering
\includegraphics[height=1.1in]{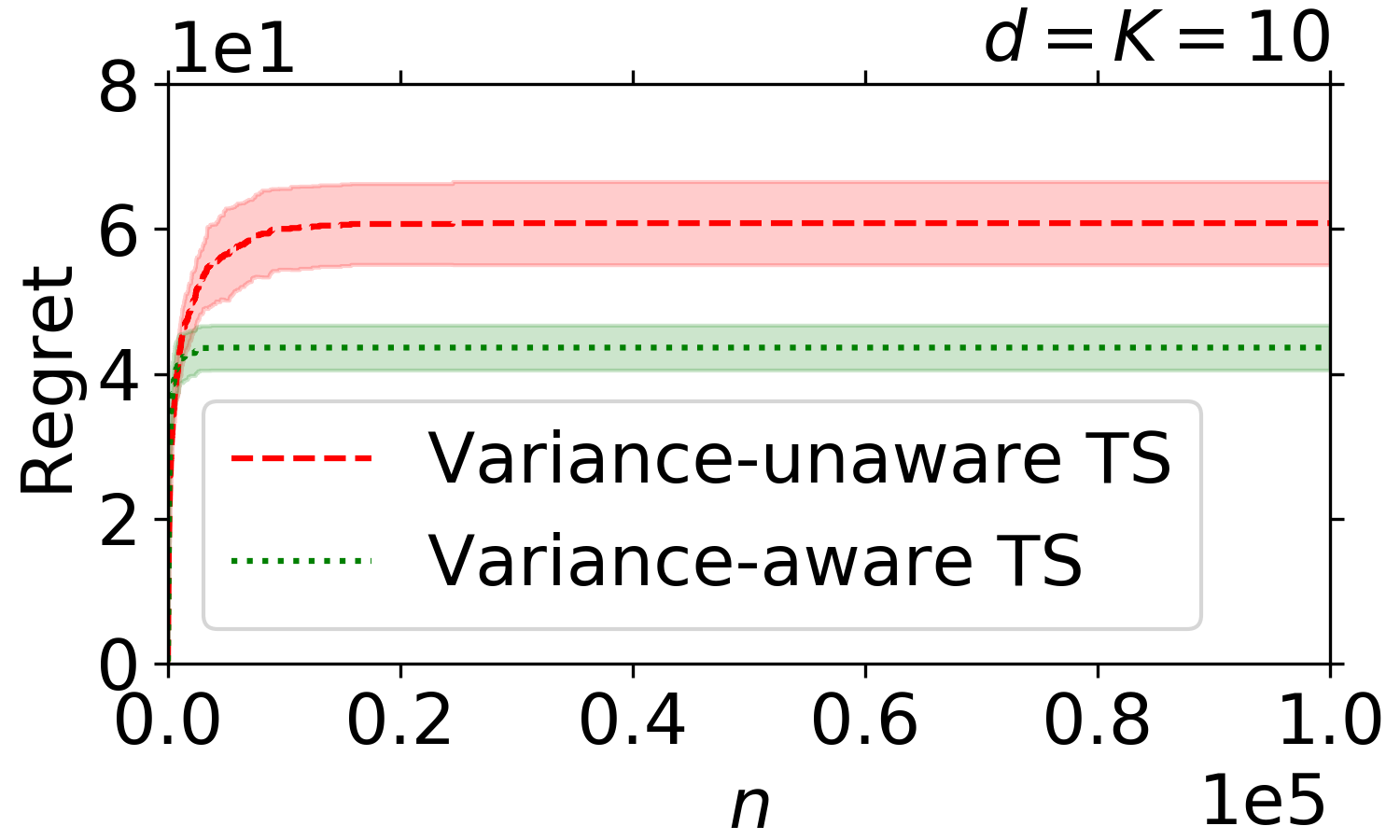}
\caption{Results for Thompson sampling experiment discussed in Appendix \ref{appFuture}}
\label{figTS}
\end{figure}

\section{Notes on proofs} \label{appProofNotes}

The remaining five appendices contain the complete proofs of our theoretical results. We begin with a general upper bound in Appendix \ref{appGeneral}, which is used to prove the upper bounds from Theorems \ref{thmKl} and \ref{thmLinear} in Appendices \ref{appKlProof} and \ref{appLinear}, respectively. We then prove the lower bounds from Theorems \ref{thmLower} and \ref{thmUcb} in Appendices \ref{appLower} and \ref{appProofUcb}, respectively. Throughout, all random variables are defined on a probability space $(\Omega,\mc{F},\P)$ with $\bar{A} = \Omega \setminus A$ denoting the complement of $A \in \mc{F}$ and $\sigma(\mc{X})$ the $\sigma$-algebra generated by a set of random variables $\mc{X}$. Other notation is local to each appendix; for example, $\Delta$ is a small positive number that takes different values in different appendices.

\section{General gap-free regret decomposition} \label{appGeneral}

As mentioned in the footnotes of Section \ref{secAnalysis} (and the previous appendix), the proofs of our upper bounds rely on a general regret decomposition (Lemma \ref{lemGenUpper} below). More specifically, this decomposition is intended for use in gap-free regret analyses of cascading bandit algorithms. Its proof is fairly straightforward but contains ideas that are crucial in formalizing the variance-aware insight discussed intuitively in Section \ref{secIntuition}. Moreover, it is general enough to be the starting point for the proofs of both our tabular and linear upper bounds. Therefore, it may be of independent interest.

Before stating this result, we introduce some notation. We fix an instance $\bar{w} \in [0,1]^L$ and a policy $\pi \in \Pi$ and suppress these objects in our notation; e.g., we write $R(n)$ instead of $R_{\pi,\bar{w}}(n)$. We will also assume the following, which is without loss of generality (after possibly relabeling items).
\begin{ass} \label{assOrder}
We have $\bar{w}(1) \geq \cdots \geq \bar{w}(L)$.
\end{ass}
Note that under this assumption, the optimal action is $A^* = (1,\ldots,K)$. We will therefore refer to $[K]$ as the \textit{optimal items} and $[L] \setminus [K]$ as the \textit{suboptimal items}.

Next, we recall some notation and basic observations from \cite{kveton2015cascading}. For any optimal item $e^* \in [K]$ and suboptimal item $e \in [L] \setminus [K]$, define the reward gap $\Delta_{e,e^*}= \bar{w}(e^*)-\bar{w}(e) \in [0,1]$. Let $\rho_t : [K] \rightarrow [K]$ be any $\sigma( \mc{H}_t \cup \{ \mbf{A}_t \} )$-measurable permutation satisfying $\mbf{U}_t ( \mbf{a}_k^t ) \geq \mbf{U}_t ( \rho_t(k) )$ for each $k \in [K]$ (such a permutation exists by Theorem 1 of \cite{kveton2015cascading}). For each optimal item $e^* \in [K]$, suboptimal item $e \in [L] \setminus [K]$, and time $t \in [n]$, define the event
\begin{equation*}
G_{e,e^*,t} = \cup_{k=1}^K \{ \mbf{a}_k^t = e , \rho_t(k) = e^* , \mbf{w}_t(\mbf{a}_1^t) = \cdots = \mbf{w}_t(\mbf{a}_{k-1}^t) = 0 \} .
\end{equation*}
Observe that when $G_{e,e^*,t}$ occurs, we have $\mbf{a}_k^t = e$ and $\rho_t(k) = e^*$ for some $k \in [K]$, where the permutation $\rho_t$ satisfies $\mbf{U}_t ( \mbf{a}_k^t ) \geq \mbf{U}_t ( \rho_t(k) )$ for all $k \in [K]$. Therefore, we know that
\begin{equation} \label{eqGtoUCB}
G_{e,e^*,t} \subset \{ \mbf{U}_t(e) \geq \mbf{U}_t(e^*) \}\ \forall\ e \in [L] \setminus [K] , e^* \in [K] , t \in [n] .
\end{equation}
Furthermore, note that by definition, at most one of the events $\{ G_{e,e^*,t} \}_{e^*=1}^K$ can occur for each $(e,t)$ pair. If any of them do, then the reward from item $e$ is observed at time $t$, so $\mbf{T}_t(e) = \mbf{T}_{t-1}(e)+1$; if none of them do, the reward is not observed and $\mbf{T}_t(e) = \mbf{T}_{t-1}(e)$. Consequently, we have
\begin{equation} \label{eqNumberOfG}
\sum_{e^*=1}^K \ind ( G_{e,e^*,t} ) = \ind(\mbf{T}_t(e) = \mbf{T}_{t-1}(e)+1) = \mbf{T}_t(e) - \mbf{T}_{t-1}(e)\ \forall\ e \in [L] \setminus [K] , t \in [n] .
\end{equation}
Finally, since the rewards from at most $K$ items are observed as time $t$, \eqref{eqNumberOfG} implies
\begin{equation} \label{eqSumOfG}
\sum_{e \in \mathscr{L}} \sum_{e^* \in \mathscr{K}} \ind ( G_{e,e^*,t} ) \leq K\ \forall\ \mathscr{L} \subset [L], \mathscr{K} \subset [K] , t \in [n] .
\end{equation}
We point the reader to Theorem 1 of \cite{kveton2015cascading} and its proof for further explanation.

We are now in position to state and prove the main result of this appendix.
\begin{lem} \label{lemGenUpper}
Fix $\Delta \in (0,\max\{ \bar{w}(K)/2 , 1 / K \})$ and, for each $e \in [L] \setminus [K]$, denote the optimal items $\Delta$-close to $e$ by $\mc{K}_\Delta(e) = \{ e^* \in [K] : \Delta_{e,e^*} \leq \Delta \}$ and those $\Delta$-far from $e$ by $\bar{\mc{K}}_\Delta(e) = [K] \setminus \mc{K}_\Delta$. Let $\mc{E}_t \in \sigma(\mc{H}_t \cup \{ \mbf{A}_t \})$ and $\mc{F}_t \in \mc{F}$ for each $t \in [n]$, and define
\begin{gather}
R_1(n)  = \E \left[ \sum_{t=1}^n \sum_{e = K+1}^L  \sum_{e^*\in \bar{\mc{K}}_\Delta(e)} \Delta_{e,e^*} \ind ( \bar{\mc{E}}_t , \bar{\mc{F}}_t  , G_{e,e^*,t} ) \right] , \\
R_2(n)  = \sum_{t=1}^n \P ( \mc{E}_t ) , \quad R_3(n)  = K \E \left[ \sum_{t=1}^n \ind ( \mc{F}_t ) \right]  .
\end{gather}
Then under Assumption \ref{assOrder}, $R(n) \leq \Delta n \min \{ 2 / \bar{w}(K) , K \} + \sum_{j=1}^3 R_j(n)$.
\end{lem}
\begin{rem}
When applying Lemma \ref{lemGenUpper}, we will choose $\bar{\mc{E}}_t$ as a ``good event'' under which the empirical and true mean rewards are close at time $t$. Thus, $R_1$ is the regret incurred from suboptimal items $e$ that are chosen in favor of optimal items $e^* \in \bar{\mc{K}}_\Delta(e)$ with mean reward at least $\Delta$ higher, on this good event. The term $R_2$ accounts for the failure of the good events, which will occur with low probability due to concentration. The term $R_3$ is needed for the linear case, where $\mc{F}_t \notin \sigma(\mc{H}_t \cup \{ \mbf{A}_t \})$ may occur for our choice of $\mc{F}_t$ (see the beginning of Appendix \ref{appLinear} for details).\footnote{In such cases, $\mc{F}_t$ cannot be absorbed into $\mc{E}_t$, which leads to an additional factor of $K$ in $R_3$ compared to $R_2$.} The term $\Delta n \min \{ 2 / \bar{w}(K) , K \}$ accounts choosing $e$ instead of $e^* \in \mc{K}_\Delta(e)$ with gap $\Delta_{e,e^*} \leq \Delta$, which is the cascading analogue of the standard (non-cascading) bound $\Delta n$ discussed in the first paragraph of Section \ref{secIntuition}.
\end{rem}
\begin{proof}
By the choice of $\mc{E}_t$ and the same logic as Appendix A.1 of \cite{kveton2015cascading}, we know
\begin{equation} \label{eqGenUpper}
R(n) \leq \E \left[ \sum_{t=1}^n \sum_{e=K+1}^L \sum_{e^* = 1}^K  \Delta_{e,e^*} \ind ( \bar{\mc{E}}_t , G_{e,e^*,t} ) \right] + R_2(n) .
\end{equation}
For the first term, we fix $t$ and $e$ and separately analyze regret due to $e^* \in \mc{K}_\Delta(e)$ and $e^* \in \bar{\mc{K}}_\Delta(e)$. 

For $\mc{K}_\Delta(e)$, we consider two cases. For the first case, suppose $\bar{w}(K) / 2 \geq \max \{ \bar{w}(e) , 1/K \}$. Then for any $e^* \in [K]$, we know that
\begin{equation*}
\Delta_{e,e^*} = \bar{w}(e^*) - \bar{w}(e) \geq \bar{w}(e^*) - \bar{w}(K)/2 \geq \bar{w}(K)/2 = \max \{ \bar{w}(K)/2 , 1/K \} > \Delta ,
\end{equation*}
where the inequalities hold by $\bar{w}(K)/2 \geq \bar{w}(e)$, Assumption \ref{assOrder}, and the assumption on $\Delta$, respectively. Therefore, $\mc{K}_\Delta(e) = \emptyset$ by definition in the first case. For the second case, suppose $\bar{w}(K) / 2 < \max \{ \bar{w}(e) , 1/K \}$. Then by definition of $\mc{K}_\Delta(e)$ and \eqref{eqNumberOfG}, 
\begin{equation} \label{eqGenUpperSmallW}
\sum_{e^* \in \mc{K}_\Delta(e)} \Delta_{e,e^*} \ind ( \bar{\mc{E}}_t , G_{e,e^*,t} ) \leq \Delta \sum_{e^* \in \mc{K}_\Delta(e)} \ind ( G_{e,e^*,t} ) \leq \Delta ( \mbf{T}_t(e) - \mbf{T}_{t-1}(e) ) .
\end{equation}
Combining the cases, we conclude that
\begin{equation} \label{eqDelIndSmall}
\sum_{e^* \in \mc{K}_\Delta(e)} \Delta_{e,e^*} \ind ( \bar{\mc{E}}_t , G_{e,e^*,t} ) \leq \Delta ( \mbf{T}_t(e) - \mbf{T}_{t-1}(e) ) \ind ( \bar{w}(K) < 2 \max \{ \bar{w}(e) , 1 / K \} ) .
\end{equation}
For $\bar{\mc{K}}_\Delta(e)$, we use $\bar{\mc{E}}_t \cap G_{e,e^*,t} \subset ( \bar{\mc{E}}_t \cap \bar{\mc{F}}_t \cap G_{e,e^*,t} )  \cup ( \mc{F}_t \cap G_{e,e^*,t} )$ and $\Delta_{e,e^*} \leq 1$ to write
\begin{equation} 
\Delta_{e,e^*} \ind ( \bar{\mc{E}}_t , G_{e,e^*,t} ) \leq  \Delta_{e,e^*} \ind ( \bar{\mc{E}}_t , \bar{\mc{F}}_t , G_{e,e^*,t} ) + \ind ( \mc{F}_t ,G_{e,e^*,t} ) .
\end{equation}
Summing this bound over $e^* \in \bar{\mc{K}}_\Delta(e)$ and again using \eqref{eqNumberOfG}, we obtain
\begin{equation} \label{eqDelIndLarge}
\sum_{ e^* \in \bar{\mc{K}}_\Delta(e) } \Delta_{e,e^*} \ind ( \bar{\mc{E}}_t , G_{e,e^*,t} ) \leq \sum_{ e^* \in \bar{\mc{K}}_\Delta(e) } \Delta_{e,e^*} \ind ( \bar{\mc{E}}_t , \bar{\mc{F}}_t , G_{e,e^*,t} ) + \ind ( \mc{F}_t )  ( \mbf{T}_t(e) - \mbf{T}_{t-1}(e) ) .
\end{equation}
Having separately analyzed $\mc{K}_\Delta(e)$ and $\bar{\mc{K}}_\Delta(e)$ for fixed $e \in [L] \setminus [K]$, we combine \eqref{eqDelIndSmall} and \eqref{eqDelIndLarge}, use $[K] = \mc{K}_\Delta(e) \cup \bar{\mc{K}}_\Delta(e)$ (by definition), sum over $e \in [L] \setminus [K]$, and use \eqref{eqNumberOfG}-\eqref{eqSumOfG} to write
\begin{align*}
\sum_{e=K+1}^L \sum_{e^*=1}^K \Delta_{e,e^*} \ind ( \bar{\mc{E}}_t , G_{e,e^*,t} ) & \leq \Delta \sum_{e = K+1}^L ( \mbf{T}_t(e) - \mbf{T}_{t-1}(e) ) \ind ( \bar{w}(K) < 2 \max \{ \bar{w}(e) , 1/K \} ) \\
& \quad  
+ \sum_{e=K+1}^L \sum_{ e^* \in \bar{\mc{K}}_\Delta(e) } \Delta_{e,e^*} \ind ( \bar{\mc{E}}_t , \bar{\mc{F}}_t , G_{e,e^*,t} ) + \ind ( \mc{F}_t )  K .
\end{align*}
Summing over $t \in [n]$ and taking expectation, the last two terms become $R_1(n)$ and $R_3(n)$, respectively. Plugging into \eqref{eqGenUpper}, we have therefore shown
\begin{align*}
R(n) \leq \Delta \sum_{t=1}^n \E \left[ \sum_{e = K+1}^L ( \mbf{T}_t(e) - \mbf{T}_{t-1}(e) ) \ind ( \bar{w}(K) < 2 \max \{ \bar{w}(e) , 1/K \} ) \right] + \sum_{j=1}^3 R_j(n) .
\end{align*}
Thus, it suffices to upper bound the remaining expectation term by $\min \{ 2 / \bar{w}(K) , K \}$. Toward this end, first note that if $\bar{w}(K) < 2/K$, then we can (naively) upper bound the indicators by $1$ and use \eqref{eqNumberOfG}-\eqref{eqSumOfG} to bound this term by $K= \min \{ 2 / \bar{w}(K) , K \}$, which completes the proof. Therefore, we assume for the remainder that $\bar{w}(K) \geq 2/K$. Under this assumption, consider any $e \in [L] \setminus [K]$ for which $\bar{w}(K) < 2 \max \{ \bar{w}(e) , 1/K \}$ (i.e., for which the indicator is $1$). Then we must have $\bar{w}(e) > \bar{w}(K)/2$, because if instead $\bar{w}(e) \leq \bar{w}(K)/2$, we obtain a contradiction:
\begin{equation}
\bar{w}(K) < 2 \max \{ \bar{w}(e) , 1 / K \} \leq 2 \max \{ \bar{w}(K) / 2 , 1/ K \} = \bar{w}(K) .
\end{equation}
In summary, if $e \in [L] \setminus [K]$ satisfies $\bar{w}(K) < 2 \max \{ \bar{w}(e) , 1/K \}$ (i.e., if its indicator is $1$), then it also satisfies $\bar{w}(e) > \bar{w}(K)/2$, and therefore $e \in \mc{L} \triangleq \{ e' \in [L] : \bar{w}(e') \geq \bar{w}(K)/2 \}$. Hence,
\begin{equation*}
\sum_{e = K+1}^L ( \mbf{T}_t(e) - \mbf{T}_{t-1}(e) ) \ind ( \bar{w}(K) < 2 \max \{ \bar{w}(e) , 1/K \} ) \leq \sum_{e \in \mc{L}} ( \mbf{T}_t(e) - \mbf{T}_{t-1}(e) ) \triangleq \mbf{O}_t .
\end{equation*}
Next, we let $\mbf{B}_t = ( \mbf{b}_1^t , \ldots , \mbf{b}_{|\mbf{B}_t|}^t )$ be the sublist of $\mbf{A}_t$ which contains only items from $\mc{L}$, with the same relative ordering as $\mbf{A}_t$. Notice that $\mbf{O}_t$ is the number of $e \in \mc{L}$ whose reward $\mbf{w}_t(e)$ was observed at time $t$. Therefore, we can rewrite and then upper bound this quantity as follows:
\begin{equation} 
\mbf{O}_t = \sum_{k=1}^K \ind ( \mbf{a}_k^t \in \mc{L} ) \prod_{i=1}^{k-1} ( 1 - \mbf{w}_t ( \mbf{a}_i^t ) ) \leq \sum_{k=1}^{|\mbf{B}_t|} \prod_{i=1}^{k-1} ( 1 - \mbf{w}_t(\mbf{b}_i^t) ) .
\end{equation}
By the conditional independence assumption (see Section \ref{secModel}) and the definition of $\mc{L}$, we then obtain
\begin{equation*}
\E_t [ \mbf{O}_t ] \leq \sum_{k=1}^{|\mbf{B}_t|} \prod_{i=1}^{k-1} (1- \bar{w}(\mbf{b}_i^t) ) \leq \sum_{k=1}^{|\mbf{B}_t|}  (1-\bar{w}(K)/2)^{k-1} \leq \sum_{k=0}^\infty ( 1 - \bar{w}(K)/2)^k = 2 / \bar{w}(K) .
\end{equation*}
Taking expectation shows $\E[\mbf{O}_t] \leq 2 / \bar{w}(K) = \min \{ 2 / \bar{w}(K) , K \}$, which completes the proof.
\end{proof}

\section{Proof of Theorem \ref{thmKl}} \label{appKlProof}

We prove the bound for \texttt{CascadeKL-UCB} and discuss how to modify the proof for \texttt{CascadeUCB-V} in Appendix \ref{appUcbV}. Throughout, we adopt the notation of Appendix \ref{appGeneral} and assume the following.
\begin{ass} \label{assKl}
We have $\bar{w}(1) \geq \cdots \geq \bar{w}(L)$ and $n > L$.
\end{ass}
As in Appendix \ref{appGeneral}, the first inequality is without loss of generality. Thus, Assumption \ref{assKl} is \textit{with} loss of generality only when the second inequality fails, i.e., only when  $n \leq L$. But in this case, we can naively bound regret by $n \leq \sqrt{nL}$, so the conclusion of Theorem \ref{thmKl} is immediate.

To begin the proof, we will invoke Lemma \ref{lemGenUpper} with a particular choice of $\Delta$, $\mc{E}_t$, and $\mc{F}_t$. First, we choose $\Delta = \max \{ \bar{w}(K) / 2 , 1/K \} \sqrt{L/n}$. Next, for each $e \in [L] \setminus [K]$ and $e^* \in \bar{\mc{K}}_\Delta(e)$, we let
\begin{equation*}
\tau_{e,e^*} = 8 \bar{w}(e^*) \max \{ \log(n^2 K) , \log ( f ( n ) ) \} / \Delta_{e,e^*}^2 .
\end{equation*}
Note that $\Delta_{e,e^*} \geq \Delta > 0$ (by $e^* \in \bar{\mc{K}}_\Delta(e)$ and choice of $\Delta$, respectively), so this quantity is well-defined. Intuitively, it will upper bound the number of plays needed to distinguish the suboptimal item $e$ from the optimal item $e^*$.\footnote{This bound is likely loose for some problem instances, but choosing a sharper $\tau_{e,e^*}$ only improves log terms in the ultimate regret bound (which holds for all problem instances).} Finally, for each $t \in [n]$, we define
\begin{gather*}
\mc{E}_t = \cup_{e=K+1}^L \cup_{e^* \in \bar{\mc{K}}_\Delta(e)} \{ \mbf{U}_t(e^*) \leq \mbf{U}_t(e) , \mbf{T}_{t-1}(e) \geq \tau_{e,e^*} \} ,  \quad \mc{F}_t = \emptyset .
\end{gather*}
Observe that $\Delta \in (0, \max \{ \bar{w}(K)/2 , 1/K \})$ (by Assumption \ref{assKl}), $\mc{E}_t \in \sigma(\mc{H}_t \cup \{\mbf{A}_t\})$, and $\mc{F}_t \in \mc{F}$, as required by Lemma \ref{lemGenUpper}. Therefore, we can specialize the bound from that lemma to obtain
\begin{equation*}
R(n) \leq \Delta n \min \{ 2 / \bar{w}(K) , K \} + \sum_{j=1}^3 R_j(n) = \sqrt{nL} + \sum_{j=1}^2 R_j(n) .
\end{equation*}
In light of this inequality and Assumption \ref{assKl} (the latter of which implies $K \leq L \leq \sqrt{nL}$), it suffices to show $R_j(n) = \tilde{O}(\sqrt{nL} + L + K)$ for each $j \in [2]$. We do so in the next two lemmas.

\begin{lem} \label{lemKlPlays}
Under Assumption \ref{assKl}, $R_1(n) \leq 24 \sqrt{nL}  \max \{ \log (n^2 K) , \log (f(n)) \} ( \log(n)+1 ) + L$.
\end{lem}
\begin{proof}
If $n=K=1$, then $\Delta_{e,e^*} \leq 1$ and \eqref{eqSumOfG} imply $R_1(n) \leq n K = 1$, so the bound is immediate. Hence, we assume $\max\{n,K\} > 1$. Fix $e \in [L] \setminus [K]$. Then for any $e^* \in \bar{\mc{K}}_\Delta(e)$, we know
\begin{align*}
\bar{\mc{E}}_t \subset \{ \mbf{U}_t(e^*) > \mbf{U}_t(e) \} \cup \{ \mbf{T}_{t-1}(e) < \tau_{e,e^*} \} .
\end{align*}
Combined with \eqref{eqGtoUCB} and the choice $\mc{F}_t = \emptyset$, we conclude
\begin{align*}
\bar{\mc{E}}_t \cap \bar{\mc{F}}_t  \cap G_{e,e^*,t} & \subset ( \{ \mbf{U}_t(e^*) > \mbf{U}_t(e) \} \cup \{ \mbf{T}_{t-1}(e) < \tau_{e,e^*} \} ) \cap \Omega \cap G_{e,e^*,t} \\
& = \{ \mbf{T}_{t-1}(e) < \tau_{e,e^*}  \} \cap G_{e,e^*,t} .
\end{align*}
Therefore, the regret that $e$ contributes to $R_1$ is at most (the expected value of)
\begin{equation} \label{eqKlPlays}
\sum_{t=1}^n \sum_{e^*\in \bar{\mc{K}}_\Delta(e)} \Delta_{e,e^*} \ind ( \mbf{T}_{t-1}(e) < \tau_{e,e^*} ) \ind ( G_{e,e^*,t} ) .
\end{equation}
Next, let $C = 8 \max \{ \log(n^2 K) , \log( f(n) ) \} > 0$ denote the term in the definition of $\tau_{e,e^*}$ that does not depend on the $(e,e^*)$ pair. Then for any $e^* \in \bar{\mc{K}}_\Delta(e)$, we can write
\begin{equation} \label{eqKlPlaysDeltaTau}
\frac{\Delta_{e,e^*} \tau_{e,e^*}}{C}  = \frac{ \bar{w}(e^*)}{\Delta_{e,e^*}} = \frac{\bar{w}(e) }{ \Delta_{e,e^*} } + 1 \leq \frac{ \bar{w}(K) }{ \Delta_{e,e^*} } + 1 \leq  2 \sqrt{ \frac{n}{L} } + 1 \leq 3 \sqrt{ \frac{n}{L} } ,
\end{equation}
where the first and third inequalities hold by Assumption \ref{assKl}, the second inequality uses $\Delta_{e,e^*} > \Delta \geq ( \bar{w}(K) / 2 ) \sqrt{L/n}$ (by definition of $\bar{\mc{K}}_\Delta(e)$ and $\Delta$, respectively), and the equalities hold by definition. Next, using the bound $\ind(x < y) \leq y/x$ for $x,y > 0$, we write
\begin{equation*}
\ind ( \mbf{T}_{t-1}(e) < \tau_{e,e^*} ) \ind ( \mbf{T}_{t-1}(e) > 0 ) \leq \tau_{e,e^*} \ind ( \mbf{T}_{t-1}(e) > 0 ) / \mbf{T}_{t-1}(e) ,
\end{equation*}
where by convention $\ind ( \mbf{T}_{t-1}(e) > 0 ) / \mbf{T}_{t-1}(e) = 0$ when $\mbf{T}_{t-1}(e)=0$. Finally, note that for any $[0,1]$-valued random variable $\mbf{X}$, we have
\begin{equation*}
\mbf{X} = \mbf{X} \ind ( \mbf{T}_{t-1}(e) = 0 ) + \mbf{X} \ind ( \mbf{T}_{t-1}(e) > 0 ) \leq \ind ( \mbf{T}_{t-1}(e) = 0 ) + \mbf{X} \ind ( \mbf{T}_{t-1}(e) > 0 ) .
\end{equation*}
Combining the previous three inequalities (with $\mbf{X} = \Delta_{e,e^*} \ind ( \mbf{T}_{t-1}(e) < \tau_{e,e^*} )$ in the third) gives
\begin{align*}
\Delta_{e,e^*} \ind ( \mbf{T}_{t-1}(e) < \tau_{e,e^*} ) & \leq \ind ( \mbf{T}_{t-1}(e) = 0 ) + \Delta_{e,e^*} \ind ( \mbf{T}_{t-1}(e) < \tau_{e,e^*} ) \ind ( \mbf{T}_{t-1}(e) > 0 ) \\
& \leq \ind ( \mbf{T}_{t-1}(e) = 0 ) + \Delta_{e,e^*} \tau_{e,e^*} \ind ( \mbf{T}_{t-1}(e) > 0 ) / \mbf{T}_{t-1}(e) \\
& \leq \ind ( \mbf{T}_{t-1}(e) = 0 ) + 3C \sqrt{ n/L } \ind ( \mbf{T}_{t-1}(e) > 0 ) / \mbf{T}_{t-1}(e)  \triangleq \mbf{V}_t(e) .
\end{align*}
Using this bound and \eqref{eqNumberOfG}, we conclude that \eqref{eqKlPlays} is upper bounded by
\begin{align*}
& \sum_{t=1}^n \mbf{V}_t(e) \sum_{e^* \in \bar{\mc{K}}_\Delta(e)}  \ind ( G_{e,e^*,t} ) \leq \sum_{t=1}^n \mbf{V}_t(e)  \ind ( \mbf{T}_t(e) = \mbf{T}_{t-1}(e) + 1 ) \\
& \quad = \sum_{t=1}^n \ind ( \mbf{T}_t(e) = 1 , \mbf{T}_{t-1}(e) = 0 ) + 3 C \sqrt{\frac{n}{L}} \sum_{t=1}^n \frac{ \ind ( \mbf{T}_t(e) = \mbf{T}_{t-1}(e)+1 , \mbf{T}_{t-1}(e) > 0 ) }{\mbf{T}_{t-1}(e)} .
\end{align*}
Therefore, because $\mbf{T}_t(e) = 1 , \mbf{T}_{t-1}(e)= 0$ can only occur for one $t \in [n]$, and because
\begin{equation*}
\sum_{t=1}^n \frac{ \ind (  \mbf{T}_t(e) = \mbf{T}_{t-1}(e) +1 , \mbf{T}_{t-1}(e) > 0 ) }{ \mbf{T}_{t-1}(e) }  = \sum_{s=1}^{ \mbf{T}_{n-1}(e) } \frac{1}{s} \leq \sum_{s=1}^n \frac{1}{s} \leq \log(n)+1 ,
\end{equation*}
we conclude that \eqref{eqKlPlays} (and thus the regret that $e$ contributes to $R_1$) is upper bounded by
\begin{equation*}
1 + 3 C \sqrt{ n/L } ( \log(n)+1) = 1 + 24 \sqrt{n/L} \max \{ \log(n^2 K) , \log ( f(n) ) \}  ( \log(n) + 1 ) .
\end{equation*}
Finally, summing this bound over $e \in [L] \setminus [K]$ completes the proof.
\end{proof}

\begin{rem}
The bound in Lemma \ref{lemKlPlays} can be sharpened in terms of constants and log terms using ideas from \cite{kveton2015cascading}, but this requires a more complicated proof. We opted for the simpler proof above because its ideas generalize more easily to the linear case.
\end{rem}

\begin{lem} \label{lemKlTailSub}
Under Assumption \ref{assKl}, $R_2(n) \leq L + 7 K \log \log n$.
\end{lem}

To prove this lemma, we require the following claim. 
\begin{clm} \label{clmKlLower}
For any $p,q \in (0,1)$, $d(p,q) \geq (p-q)^2 / ( 2 \max \{p,q\} )$.
\end{clm}
\begin{proof}
We prove the bound assuming $p < q$ (it follows analogously for $p > q$ and is immediate when $p=q$). For each $\Delta \in (-q,1-q)$, we define $g(\Delta) = d(q+\Delta,q)$. Then one can compute
\begin{equation*}
g'(\Delta) = \log \left( \frac{q+\Delta}{q} \right) - \log \left( \frac{1-q-\Delta}{1-q} \right) , \quad g''(\Delta) = \frac{1}{(q+\Delta)(1-q-\Delta)} \geq \frac{1}{q+\Delta} .
\end{equation*}
Thus, $g(0) = g'(0) = 0$, so by Taylor's theorem with remainder, we can find $\Delta \in (p-q,0)$ such that
\begin{equation*}
d(p,q) = g(p-q) = \frac{g''(\Delta) (p-q)^2}{2} \geq \frac{(p-q)^2}{2(q+\Delta)} \geq \frac{(p-q)^2}{2 q} = \frac{(p-q)^2}{2 \max \{p,q\}} . \qedhere
\end{equation*}  
\end{proof}

\begin{proof}[Proof of Lemma \ref{lemKlTailSub}]
Let $\mc{C}_t = \cup_{e^* = 1}^K \{ \mbf{U}_t(e^*) < \bar{w}(e^*) \}$. Then by definition,
\begin{equation*}
\mc{E}_t \subset ( \mc{E}_t \cap \bar{\mc{C}}_t ) \cup \mc{C}_t \subset \left( \cup_{e=K+1}^L \cup_{e^* \in \bar{\mc{K}}_\Delta(e)} \{ \mbf{U}_t(e) \geq \bar{w}(e^*) , \mbf{T}_{t-1}(e) \geq \tau_{e,e^*} \} \right) \cup \mc{C}_t .
\end{equation*}
Therefore, by the union bound, we obtain that
\begin{equation} \label{eqKlTailSub}
R_2(n) \leq \sum_{t=1}^n  \sum_{e=K+1}^L \sum_{ e^* \in \bar{\mc{K}}_\Delta(e)} \P ( \mbf{U}_t(e) \geq \bar{w}(e^*) , \mbf{T}_{t-1}(e) \geq \tau_{e,e^*} ) + \sum_{t=1}^n \P ( \mc{C}_t ) .
\end{equation}
As shown in Appendix A.2 of \cite{kveton2015cascading}, the second term is at most $7 K \log \log n$. Hence, it suffices to bound the first term by $L$. Toward this end, first note
\begin{align} \label{eqBeforeFixS}
\P (  \mbf{U}_t(e) \geq \bar{w}(e^*) , \mbf{T}_{t-1}(e) \geq \tau_{e,e^*} ) \leq \sum_{s = \ceil{\tau_{e,e^*}}}^{t-1} \P (  \mbf{U}_t(e) \geq \bar{w}(e^*)  , \mbf{T}_{t-1}(e) = s ) .
\end{align}
Now fix $s \in \{ \ceil{\tau_{e,e^*}}, \ldots,t-1\}$. We claim
\begin{equation}\label{eqFixSsubset}
\{ \mbf{U}_t(e) \geq \bar{w}(e^*) , \mbf{T}_{t-1}(e) = s \} \subset \{ \hat{\mbf{w}}_s(e) \geq \bar{w}(e) + \Delta_{e,e^*} / 2 \} .
\end{equation}
Suppose instead that $\mbf{U}_t(e) \geq \bar{w}(e^*)$, $\mbf{T}_{t-1}(e) = s$, and $\hat{\mbf{w}}_s(e) < \bar{w}(e)+\Delta_{e,e^*}/2$ all occur. Then
\begin{equation} \label{eqFixSzeroCont}
d ( \hat{\mathbf{w}}_s(e) , \mathbf{U}_t(e) ) = d ( \hat{\mathbf{w}}_{\mathbf{T}_{t-1}(e)}(e) , \mathbf{U}_t(e) )  \leq \frac{ \log f(t) }{ \mathbf{T}_{t-1}(e) } = \frac{ \log f(t) }{ s} \leq \frac{ \log f(n) }{ \tau_{e,e^*}} \leq \frac{ \Delta_{e,e^*}^2 }{8 \bar{w}(e^*)} ,
\end{equation}
where we used $\mbf{T}_{t-1}(e) = s \geq \tau_{e,e^*}$ and the definitions of $\mbf{U}_t(e)$ and $\tau_{e,e^*}$. But by Lemma 10.2(c) of \cite{lattimore2020bandit} and Claim \ref{clmKlLower}, $\mbf{U}_t(e) \geq \bar{w}(e^*) > \bar{w}(e)+\Delta_{e,e^*}/2 > \hat{\mbf{w}}_s(e)$ implies
\begin{equation*}
d ( \hat{\mathbf{w}}_s(e) , \mathbf{U}_t(e) ) \geq d ( \hat{\mbf{w}}_s(e) , \bar{w}(e^*) ) > d ( \bar{w}(e) + \Delta_{e,e^*}/2 , \bar{w}(e^*) ) \geq \frac{  ( \Delta_{e,e^*} /2 )^2 }{ 2 \bar{w}(e^*) } = \frac{ \Delta_{e,e^*}^2 }{8 \bar{w}(e^*)} ,
\end{equation*}
with contradicts \eqref{eqFixSzeroCont}. This completes the proof of \eqref{eqFixSsubset}. We then write
\begin{align*}
& \P( \mbf{U}_t(e) \geq \bar{w}(e^*) , \mbf{T}_{t-1}(e) = s ) \\
& \quad \leq\P ( \hat{\mbf{w}}_s(e) \geq \bar{w}(e) + \Delta_{e,e^*} / 2 )  \leq \exp ( -  s d \left( \bar{w}(e) + \Delta_{e,e^*} / 2, \bar{w}(e) \right) ) \\
& \quad \leq \exp ( -  s \Delta_{e,e^*}^2 / ( 8 \bar{w}(e^*) ) ) \leq  \exp ( - \tau_{e,e^*} \Delta_{e,e^*}^2 / ( 8 \bar{w}(e^*) ) )  \leq 1 / ( n^2 K ) ,
\end{align*}
where we used \eqref{eqFixSsubset}, the Chernoff bound, Claim \ref{clmKlLower}, the choice $s \geq \tau_{e,e^*}$, and the definition of $\tau_{e,e^*}$, respectively. Since $t \leq n$, this implies \eqref{eqBeforeFixS} is upper bounded by $1/(nK)$. Summing over $t \in [n]$, $e \in [L] \setminus [K]$ and $e^* \in \bar{\mc{K}}_\Delta(e)$ shows the first term in \eqref{eqKlTailSub} is upper bounded by $L$, as desired.
\end{proof}

\subsection{Analysis for \texttt{CascadeUCB-V}} \label{appUcbV}

For \texttt{CascadeUCB-V}, we invoke Lemma \ref{lemGenUpper} with the same choice of $\Delta$, $\mc{E}_t$, and $\mc{F}_t$, but we change the definition of $\tau_{e,e^*}$ to $\tau_{e,e^*} = C \bar{w}(e^*) / \Delta_{e,e^*}^2$, where here $C = 30 \log ( n K )$. As above, it suffices to show $R_j(n) = \tilde{O}(\sqrt{nL}+L+K)$ for each $j \in [2]$. For $j=1$, this follows as in the proof of Lemma \ref{lemKlPlays} (with a different $C$). For $j=2$, by the same logic used to prove Lemma \ref{lemKlTailSub}, it suffices to show
\begin{gather}
\sum_{t=1}^n \sum_{e^*=1}^K \P ( \mbf{U}_t(e^*) < \bar{w}(e^*) ) =  \tilde{O}(\sqrt{nL}+L+K) , \label{eqUcbVsts1} \\
\P ( \mbf{U}_t(e) \geq \bar{w}(e^*) , \mbf{T}_{t-1}(e) = s ) \leq \frac{1}{n^2 K}\  \forall\ e \in [L] \setminus [K] , e^* \in \bar{\mc{K}}_\Delta(e) , \tau_{e,e^*} \leq s \leq t \leq n . \quad  \label{eqUcbVsts2}
\end{gather}
To prove \eqref{eqUcbVsts1}, we note that for any $t \in [n]$ and $e^* \in [K]$,
\begin{align}
\P( \mbf{U}_t(e^*) < \bar{w}(e^*) ) & = \sum_{s=1}^{t-1} \P( \mbf{U}_t(e^*) < \bar{w}(e^*) , \mbf{T}_{t-1}(e) = s ) \\
& \leq \sum_{s=1}^{t-1} \P \left( \hat{\mbf{w}}_s(e^*) + \sqrt{ \frac{4 \hat{\mbf{v}}_s(e^*) \log t }{s} } + \frac{ 6 \log t }{s} < \bar{w}(e^*) \right) < 3 t^{-1} ,
\end{align}
where the last bound holds because each summand is at most $3 t^{-2}$ by Theorem 1 of \cite{audibert2009exploration}. Thus, the left side of \eqref{eqUcbVsts1} is at most $3 K \sum_{t=1}^n t^{-1} = \tilde{O}(K)$, as desired. 

To prove \eqref{eqUcbVsts2}, fix such an $e$, $e^*$, $s$, and $t$, and suppose $\mbf{U}_t(e) \geq \bar{w}(e^*)$ and $\mbf{T}_{t-1}(e) = s$. Then by definition of $\mbf{U}_t(e)$ and $\hat{\mbf{v}}_s(e)$ (see  \eqref{eqUcbV} and ensuing discussion), we have
\begin{align} \label{eqUcbVLargeUcb}
\hat{\mbf{w}}_s(e) + \sqrt{ \frac{ 4 \hat{\mbf{w}}_s(e) \log t}{s} } + \frac{ 6 \log t }{s} \geq \mbf{U}_t(e) \geq \bar{w}(e^*) = \bar{w}(e) + \Delta_{e,e^*} .
\end{align}
Next, we consider two cases. First, if $\bar{w}(e) = 0$, then $\hat{\mbf{w}}_s(e) = 0$ and $\bar{w}(e^*) = \Delta_{e,e^*}$, so by \eqref{eqUcbVLargeUcb},
\begin{equation}
s \leq 6 \log(t) / \Delta_{e,e^*} = 6 \log (t) \bar{w}(e^*) / \Delta_{e,e^*}^2 < \tau_{e,e^*} ,
\end{equation}
which contradicts the choice of $s$ and implies \eqref{eqUcbVsts2}. The second case is $\bar{w}(e) > 0$. Here we observe that when \eqref{eqUcbVLargeUcb} holds, one of the following inequalities must also hold:
\begin{equation} \label{eqUcbVthreeIneq}
\hat{\mbf{w}}_s(e) \geq \bar{w}(e) + \frac{ 2 \Delta_{e,e^*} }{5} , \quad \sqrt{ \frac{4 \hat{\mbf{w}}_s(e) \log t}{s} } \geq \frac{ 2\Delta_{e,e^*} }{5} , \quad  \frac{6 \log t }{s} \geq \frac{ \Delta_{e,e^*} }{5} .
\end{equation}
However, the third inequality cannot hold, because it and $\bar{w}(e^*) > \Delta_{e,e^*}$ (where the latter inequality is by definition in the case $\bar{w}(e) > 0$) together imply
\begin{equation*}
s \leq 30 \log (t) / \Delta_{e,e^*} < 30 \bar{w}(e^*) \log(t) / \Delta_{e,e^*}^2 \leq 30 \bar{w}(e^*) \log(nK) / \Delta_{e,e^*}^2 = \tau_{e,e^*}  ,
\end{equation*}
which contradicts the choice of $s$. Furthermore, if the second inequality holds, then
\begin{equation*}
\hat{\mbf{w}}_s(e) \geq \frac{ s \Delta_{e,e^*}^2 }{ 25 \log t } \geq \frac{ \tau_{e,e^*} \Delta_{e,e^*}^2 }{ 25 \log t } \geq \bar{w}(e^*) > \bar{w}(e) + \frac{2 \Delta_{e,e^*} }{5} ,
\end{equation*}
so the first must also hold. Therefore, we have shown that if $\mbf{U}_t(e) \geq \bar{w}(e^*)$ and $\mbf{T}_{t-1}(e) = s$ both occur, then the first inequality in \eqref{eqUcbVthreeIneq} must hold. It follows that
\begin{align*}
\P ( \mbf{U}_t(e) \geq \bar{w}(e^*) , \mbf{T}_{t-1}(e) = s ) & \leq \P ( \hat{\mbf{w}}_s(e) \geq \bar{w}(e) + 2 \Delta_{e,e^*} / 5 ) \\
& \leq \exp ( - 2 \tau_{e,e^*} \Delta_{e,e^*}^2 / ( 25 \bar{w}(e^*) ) ) \leq ( n K )^{-2} ,
\end{align*}
where the second inequality uses the Chernoff bound and Claim \ref{clmKlLower} similar to the proof of Lemma \ref{lemKlTailSub}. This is tighter than the bound in \eqref{eqUcbVsts2} that we set out to prove so it concludes the analysis.

\section{Proof of Theorem \ref{thmLinear}} \label{appLinear}

As in Appendix \ref{appKlProof}, we adopt the notation of Appendix \ref{appGeneral}. We will also assume the following.
\begin{ass} \label{assLinear}
There exists $\phi : [L] \rightarrow \mathscr{B}_d(1)$ and $\theta \in \mathscr{B}_d(1)$ such that $\bar{w}(e) = \ip{\theta}{\phi(e)}$ for each $e \in [L]$. Furthermore, we have $\bar{w}(1) \geq \cdots \geq \bar{w}(L)$ and $n > d (d+K)$.
\end{ass}
Analogous to the discussion of Assumption \ref{assKl} in Appendix \ref{appKlProof}, Assumption \ref{assLinear} is stronger than the assumptions of Theorem \ref{thmLinear} only when $n \leq d (d+K)$. But in this case, we have
\begin{equation} \label{eqLinearSmallN}
R(n) \leq n \leq \sqrt{ n d ( d + K ) } \leq \sqrt{ n \max \{ d , K \} ( 2 \max \{ d , K \} ) } = \sqrt{2n} \max \{ d , K \} ,
\end{equation}
so $R(n) = O ( \min \{ \sqrt{ n d ( d + K ) } , \sqrt{n} \max \{ d , K \} \} )$, which is sharper than the desired bound.

Next, similar to Appendix \ref{appKlProof}, we will invoke Lemma \ref{lemGenUpper}. First, we set 
\begin{equation} \label{eqDeltaLin}
\Delta = \max \{ \bar{w}(K) / 2 , 1/ K \} \sqrt{d(d+K)/n} \in ( 0 , \max \{ \bar{w}(K) / 2 , 1/ K \} ) ,
\end{equation}
where the upper bound holds by Assumption \ref{assLinear}. Next, we let $\eta_s^k = \mbf{w}_s ( \mbf{a}_k^s ) - \bar{w}(\mbf{a}_k^s)$ denote the $k$-th noise random variable at time $s$. We define the events
\begin{gather*}
\mc{E}_{t,H} = \cup_{\tau=1}^t \left\{ \left\| \sum_{s=1}^{\tau-1} \sum_{k=1}^{ \min \{ \mbf{C}_s , K \} } \phi(\mbf{a}_k^s) \eta_s^k \right\|_{ \mbf{\Lambda}_{\tau,H}^{-1} } > \beta_{\tau,H} \right\}  ,\\
\mc{E}_{t,B} = \cup_{\tau=1}^t \left\{ \left\| \sum_{s=1}^{\tau-1} \sum_{k=1}^{ \min \{ \mbf{C}_s , K \} } \phi ( \mbf{a}_k^s ) \eta_s^k / \mbf{U}_{s,H}(\mbf{a}_k^s) \right\|_{ \mbf{\Lambda}_{\tau,B}^{-1} } > \beta_{\tau,B} \right\} , \\
\mc{F}_{t,H} = \{ \det ( \mbf{\Lambda}_{t+1,H} ) \geq 2 \det ( \mbf{\Lambda}_{t,H} ) \} , \quad \mc{F}_{t,B} = \{ \det ( \mbf{\Lambda}_{t+1,B} ) \geq 2 \det ( \mbf{\Lambda}_{t,B} ) \} ,
\end{gather*}
where, with $\alpha_{t,H}$ and $\alpha_{t,B}$ defined as in the theorem statement, we let
\begin{gather}
\beta_{t,H} = \alpha_{t,H} - 1 = \sqrt{ d \log ( 1 + t K / d ) + 2 \log (n) } , \\
\beta_{t,B} = \alpha_{t,B} - \sqrt{K} = 8 \sqrt{ d \log ( 1 + t K / d ) \log (n^3 K) } +  4 \sqrt{K}  \log ( n^3 K ) .
\end{gather}
Finally, we set $\mc{E}_t = \mc{E}_{t,H} \cup \mc{E}_{t,B}$ and $\mc{F}_t = \mc{F}_{t,H} \cup \mc{F}_{t,B}$. Observe that $\mc{E}_t \in \sigma(\mc{H}_t \cup \{ \mbf{A}_t \} )$, though $\mc{F}_t$ need not belong to this sub-$\sigma$-algebra because it depends on the random rewards (i.e., the presence or absence of clicks) at time $t$. Therefore, we use Lemma \ref{lemGenUpper} to write
\begin{equation} \label{eqApplyGenLin}
R(n) \leq \Delta n \min \{ 2 / \bar{w}(K) , K \} + \sum_{j=1}^3 R_j(n) = \sqrt{ n d ( d + K ) } + \sum_{j=1}^3 R_j(n) .
\end{equation}
We claim that, in light of this bound, it suffices to show
\begin{equation} \label{eqLinearSts}
R_j(n) = \tilde{O} \left( \sqrt{ n d ( d + K ) } + n^{1/6} d^{7/6} ( d+K )^{1/2} K^{1/3} + d K \right)\ \forall\ j \in [3] .
\end{equation} 
Indeed, if \eqref{eqLinearSts} holds, then since $d K \leq d(d+K) \leq \sqrt{n d (d+K)}$ by Assumption \ref{assLinear}, \eqref{eqApplyGenLin} will imply
\begin{equation} \label{eqLinearStsFirstTerm}
R(n) = \tilde{O} \left( \sqrt{ n d ( d + K ) } + n^{1/6} d^{7/6} ( d+K )^{1/2} K^{1/3} \right) ,
\end{equation}
which is precisely the first term in the minimum of the Theorem \ref{thmLinear} regret bound. Therefore, we only need to show that each summand in \eqref{eqLinearStsFirstTerm} is bounded by a constant multiple of the second term in the minimum of Theorem \ref{thmLinear}. For the first summand, this follows as in the third inequality of \eqref{eqLinearSmallN}. For the second summand, we use Assumption \ref{assLinear} and $d+K \leq 2 \max \{ d, K \}$ to write
\begin{align*}
n^{1/6} d^{7/6} ( d+K )^{1/2} K^{1/3} \leq \sqrt{n} d^{5/6} ( d + K )^{1/6} K^{1/3} \leq 2^{1/6} \sqrt{n} \max \{ d , K \}^{1/6} d^{5/6} K^{1/3} ,
\end{align*}
which completes the proof by separately considering the cases the cases $d \geq K$ and $d < K$.

Hence, to prove the theorem, it only remains to prove \eqref{eqLinearSts}. Toward this end, we first state two claims. The first is accuracy guarantee for the least-squares estimates $\ip{\hat{\theta}_{t,\cdot}}{\phi(e)}$, which also asserts that $\mbf{U}_{t,\cdot}(e)$ is a valid UCB (on the good event $\bar{\mc{E}}_{t,\cdot}$). The second is a version of the so-called elliptical potential lemma (see Lemma 11 and ensuing historical discussion in \cite{abbasi2011improved}). Both claims are essentially known but are stated in forms convenient for our purposes, so the proofs are deferred to Appendix \ref{appLinearClaim}. Here and moving forward, for any $t \in [n]$ and $e \in [L]$, we use $\phi_t(e) = \phi(e) / \sqrt{ \mbf{U}_{t,H}(e) }$ to denote the normalized features for item $e$ at time $t$.

\begin{clm} \label{clmLeastSquares}
Under Assumption \ref{assLinear}, for any $t \in [n]$, $\tau \in [t]$, and $e \in [L]$, on the event $\bar{\mc{E}}_{t,H}$ we have
\begin{equation*}
| \ip{ \hat{\mbf{\theta}}_{\tau,H} }{ \phi(e) } - \bar{w}(e) | \leq \alpha_{\tau,H} \| \phi(e) \|_{\mbf{\Lambda}_{\tau,H}^{-1} } , \quad \mbf{U}_{\tau,H}(e) \geq \bar{w}(e) ,
\end{equation*}
and similarly, on the event $\bar{\mc{E}}_{t,B}$ we have
\begin{equation*}
| \ip{ \hat{\mbf{\theta}}_{\tau,B} }{ \phi(e) } - \bar{w}(e) | \leq \alpha_{\tau,B} \| \phi(e) \|_{\mbf{\Lambda}_{\tau,B}^{-1} } , \quad \mbf{U}_{\tau,B}(e) \geq \bar{w}(e) .
\end{equation*}
\end{clm}

\begin{clm} \label{clmElliptical}
Under Assumption \ref{assLinear}, for any $t \in [n]$, we have
\begin{gather*}
\sum_{s=1}^t \ind ( \bar{\mc{F}}_s ) \sum_{k=1}^{ \min \{ \mbf{C}_s , K \} } \| \phi ( \mbf{a}_k^s ) \|_{\mbf{\Lambda}_{s,H}^{-1}}^2 \leq 4 \log \left( \frac{ \det ( \mbf{\Lambda}_{t+1,H} ) }{ \det ( \mbf{\Lambda}_{1,H} ) } \right) \leq 4 d \log \left(  1 + \frac{t K}{d} \right) ,\\
\sum_{s=1}^t \ind (\bar{\mc{F}}_s ) \sum_{k=1}^{ \min \{ \mbf{C}_s , K \} } \| \phi_s ( \mbf{a}_k^s ) \|_{\mbf{\Lambda}_{s,B}^{-1}}^2 \leq 4 \log \left( \frac{ \det ( \mbf{\Lambda}_{t+1,B} ) }{ \det ( \mbf{\Lambda}_{1,B} ) } \right) \leq 4 d \log \left(  1 + \frac{t K}{d} \right) .
\end{gather*}
\end{clm}

We can now state and prove three lemmas that together establish \eqref{eqLinearSts}.

\begin{lem} \label{lemLinearMain}
Under Assumption \ref{assLinear},
\begin{align}
R_1(n) & \leq 16 d  \left( \frac{ 8 \sqrt{n} \alpha_{n,B}^2  }{ \sqrt{d(d+K)} } + \frac{ 3 n^{1/6} K^{1/3} \alpha_{n,B}^{4/3} \alpha_{n,H}^{2/3} }{ (d(d+K))^{1/6}  } \right) \log \left( 1 + \frac{nK}{d} \right) \\
& = \tilde{O} \left( \sqrt{n d (d+K)}  + n^{1/6} d^{7/6} (d+K)^{1/2} K^{1/3} \right)  .
\end{align}
\end{lem}
\begin{proof}
First, we fix $t \in [n]$, $e \in [L] \setminus [K]$, and $e^* \in \bar{\mc{K}}_\Delta(e)$, and we assume that $\bar{\mc{E}}_t$, $\bar{\mc{F}}_t$, and $G_{e,e^*,t}$ all hold. Under this assumption, we can write
\begin{align} \label{eqWofulChange1}
\mbf{U}_{t,B}(e) \leq \bar{w}(e) + 2 \alpha_{t,B} \| \phi(e) \|_{\mbf{\Lambda}_{t,B}^{-1}} = \bar{w}(e) + 2 \alpha_{t,B} \sqrt{ \mbf{U}_{t,H}(e) } \| \phi_t(e) \|_{\mbf{\Lambda}_{t,B}^{-1}} ,
\end{align}
where we used Claim \ref{clmLeastSquares} and the definition of $\phi_t$. We similarly have
\begin{align} \label{eqWofulChange2}
\mbf{U}_{t,H}(e) & \leq \max \left\{ \bar{w}(e) + 2 \alpha_{t,H}\| \phi(e) \|_{\mbf{\Lambda}_{t,H}^{-1}} , \frac{1}{K} \right\} \\
& \leq 2 \left( \max \left\{ \frac{\bar{w}(K)}{2} , \frac{1}{K} \right\} + \alpha_{t,H} \| \phi(e) \|_{\mbf{\Lambda}_{t,H}^{-1}} \right) ,
\end{align}
where the first inequality again holds by Claim \ref{clmLeastSquares} and the second uses $\bar{w}(e) \leq \bar{w}(K)$ by Assumption \ref{assLinear} and the fact that, for any $x,y,z \geq 0$,
\begin{equation*}
2 ( \max \{ x/2 , z \} + y ) = \max \{ x + 2 y , 2 z + 2 y \} \geq \max \{ x + 2 y , z \}  .
\end{equation*}
Combining the two inequalities with the bound $\sqrt{x+y} \leq \sqrt{x}+\sqrt{y}$ for $x,y\geq0$, we obtain
\begin{align} \label{eqWofulChange3}
\mbf{U}_{t,B}(e) & \leq \bar{w}(e) + 2 \alpha_{t,B} \sqrt{ 2 \max \{ \bar{w}(K) / 2 , 1 / K \}} \| \phi_t(e) \|_{\mbf{\Lambda}_{t,B}^{-1}} \\
& \quad + 2 \alpha_{t,B} \sqrt{ 2 \alpha_{t,H} \| \phi(e) \|_{\mbf{\Lambda}_{t,H}^{-1}}  } \| \phi_t(e) \|_{\mbf{\Lambda}_{t,B}^{-1}} \\
&  \leq \bar{w}(e) + \gamma_B \| \phi_t(e) \|_{\mbf{\Lambda}_{t,B}^{-1}} + \gamma_H \sqrt{ \| \phi(e) \|_{\mbf{\Lambda}_{t,H}^{-1}}  } \| \phi_t(e) \|_{\mbf{\Lambda}_{t,B}^{-1}} ,
\end{align}
where we defined $\gamma_B = 2 \alpha_{n,B} \sqrt{ 2 \max \{ \bar{w}(K) / 2 , 1 / K \} }$ and $\gamma_H = 2 \alpha_{n,B} \sqrt{ 2 \alpha_{n,H} }$. Furthermore, note that by Claim \ref{clmLeastSquares}, we have $\mbf{U}_{t,B}(e^*) \geq \bar{w}(e^*) = \bar{w}(e) + \Delta_{e,e^*}$ on $\bar{\mc{E}}_t$. Combined with the previous inequality and \eqref{eqGtoUCB}, we conclude that on $\bar{\mc{E}}_t \cap G_{e,e^*,t}$, we have
\begin{align} \label{eqWofulChange4}
\Delta_{e,e^*} & \leq \mbf{U}_{t,B}(e^*) - \bar{w}(e) \leq \mbf{U}_{t,B}(e) - \bar{w}(e) \\
& \leq  \gamma_B \| \phi_t(e) \|_{\mbf{\Lambda}_{t,B}^{-1}} + \gamma_H \sqrt{ \| \phi(e) \|_{\mbf{\Lambda}_{t,H}^{-1}}  } \| \phi_t(e) \|_{\mbf{\Lambda}_{t,B}^{-1}}  .
\end{align}
Next, notice that one of the summands at right must exceed $\Delta_{e,e^*}/2$ (else, their sum will be less than $\Delta_{e,e^*}$, a contradiction), so one of the following events must occur:
\begin{gather*}
\mc{A}_{e,e^*,t} = \left\{ \Delta_{e,e^*}^2  \leq   ( 2 \gamma_B )^2 \| \phi_t(e) \|_{\mbf{\Lambda}_{t,B}^{-1}}^2  \right\} , \\
\mc{B}_{e,e^*,t} = \left\{ \Delta_{e,e^*}^{4/3}  \leq ( 2 \gamma_H )^{4/3} \| \phi(e) \|_{\mbf{\Lambda}_{t,H}^{-1}}^{2/3} \| \phi_t(e) \|_{\mbf{\Lambda}_{t,B}^{-1}}^{4/3} \right\} .
\end{gather*}
In summary, we have shown $\bar{\mc{E}}_t \cap G_{e,e^*,t} \subset \mc{A}_{e,e^*,t} \cup \mc{B}_{e,e^*,t}$, which implies
\begin{align*}
\ind ( \bar{\mc{E}}_t , \bar{\mc{F}}_t , G_{e,e^*,t} ) = \ind ( \bar{\mc{E}}_t , G_{e,e^*,t} ) \ind ( \bar{\mc{F}}_t , G_{e,e^*,t} ) \leq ( \ind ( \mc{A}_{e,e^*,t} ) + \ind ( \mc{B}_{e,e^*,t} ) ) \ind ( \bar{\mc{F}}_t , G_{e,e^*,t} ) . 
\end{align*}
Furthermore, since $\ind ( x \leq y ) \leq y/x$ for $x , y > 0$ and $2^{4/3} = 16^{1/3} < 27^{1/3} = 3$, we know that
\begin{equation*}
\ind ( \mc{A}_{e,e^*,t} ) + \ind ( \mc{B}_{e,e^*,t} ) \leq \frac{ 4 \gamma_B^2  }{ \Delta_{e,e^*}^2 } \| \phi_t(e) \|_{\mbf{\Lambda}_{t,B}^{-1}}^2 + \frac{ 3 \gamma_H^{4/3}   }{ \Delta_{e,e^*}^{4/3} } \| \phi(e) \|_{\mbf{\Lambda}_{t,H}^{-1}}^{2/3} \| \phi_t(e) \|_{\mbf{\Lambda}_{t,B}^{-1}}^{4/3} .
\end{equation*}
Combining the previous two inequalities and using $\Delta_{e,e^*} \geq \Delta$ for $e^* \in \bar{\mc{K}}_\Delta(e)$ by definition, we get
\begin{equation*}
\Delta_{e,e^*} \ind ( \bar{\mc{E}}_t , \bar{\mc{F}}_t , G_{e,e^*,t} ) \leq \left( \frac{ 4 \gamma_B^2  }{ \Delta } \| \phi_t(e) \|_{\mbf{\Lambda}_{t,B}^{-1}}^2 + \frac{ 3 \gamma_H^{4/3}  }{ \Delta^{1/3}  }  \| \phi(e) \|_{\mbf{\Lambda}_{t,H}^{-1}}^{2/3} \| \phi_t(e) \|_{\mbf{\Lambda}_{t,B}^{-1}}^{4/3} \right) \ind ( \bar{\mc{F}}_t , G_{e,e^*,t} ) . 
\end{equation*}
Summing both sides over $e^* \in \bar{\mc{K}}_\Delta(e)$ and using \eqref{eqNumberOfG}, we conclude
\begin{align*}
& \sum_{e^* \in \bar{\mc{K}}_\Delta(e)} \Delta_{e,e^*} \ind ( \bar{\mc{E}}_t , \bar{\mc{F}}_t , G_{e,e^*,t} ) \\
& \quad \leq \ind ( \bar{\mc{F}}_t ) \left( \frac{ 4 \gamma_B^2  }{ \Delta } \| \phi_t(e) \|_{\mbf{\Lambda}_{t,B}^{-1}}^2 + \frac{ 3 \gamma_H^{4/3}  }{ \Delta^{1/3}  }  \| \phi(e) \|_{\mbf{\Lambda}_{t,H}^{-1}}^{2/3} \| \phi_t(e) \|_{\mbf{\Lambda}_{t,B}^{-1}}^{4/3} \right) \ind ( \mbf{T}_t(e) = \mbf{T}_{t-1}(e) +1 ) .
\end{align*}
Now observe the right side is nonzero if and only if $e = \mbf{a}_k^t$ for some $k \in [ \min \{ \mbf{C}_t , K \} ]$. Therefore, summing over $e \in [L] \setminus [K]$ and using nonnegativity of the summands gives
\begin{align*}
& \sum_{e=K+1}^L \sum_{e^* \in \bar{\mc{K}}_\Delta(e)} \Delta_{e,e^*} \ind ( \bar{\mc{E}}_t , \bar{\mc{F}}_t , G_{e,e^*,t} ) \\
& \quad \leq \frac{ 4 \gamma_B^2  }{ \Delta } \ind ( \bar{\mc{F}}_t ) \sum_{k=1}^{ \min \{ \mbf{C}_t , K \} }  \| \phi_t(\mbf{a}_k^t) \|_{\mbf{\Lambda}_{t,B}^{-1}}^2  + \frac{ 3 \gamma_H^{4/3}  }{ \Delta^{1/3}  } \ind ( \bar{\mc{F}}_t ) \sum_{k=1}^{ \min \{ \mbf{C}_t , K \} } \| \phi(\mbf{a}_k^t) \|_{\mbf{\Lambda}_{t,H}^{-1}}^{2/3} \| \phi_t(\mbf{a}_k^t) \|_{\mbf{\Lambda}_{t,B}^{-1}}^{4/3}  .
\end{align*}
Summed over $t \in [n]$, the remaining summations are both upper bounded by $4 d \log ( 1 + n K / d )$ (by Claim \ref{clmElliptical}, along with Holder's inequality for the second summation). We thus obtain
\begin{align} \label{eqLinR1fin}
R_1(n)  \leq 4 d  \left( \frac{ 4 \gamma_B^2 }{ \Delta } + \frac{ 3 \gamma_H^{4/3} }{\Delta^{1/3}} \right) \log \left( 1 + \frac{n K}{ d }  \right) .
\end{align}
Finally, recall that by definition, we have
\begin{equation*}
\Delta = \max \{ \bar{w}(K) / 2 , 1/ K \} \sqrt{d(d+K)/n} \geq \sqrt{ d(d+K) / (nK^2)} .
\end{equation*}
Plugging the middle and right expressions into the $\Delta$ and $\Delta^{1/3}$ terms in \eqref{eqLinR1fin}, respectively, and recalling $\gamma_B = 2 \alpha_{n,B} \sqrt{ 2 \max \{ \bar{w}(K) / 2 , 1 / K \} }$ and $\gamma_H = 2 \alpha_{n,B} \sqrt{ 2 \alpha_{n,H} }$, yields the first upper bound of the lemma. The second follows since $\alpha_{n,B} = \tilde{O} ( \sqrt{d} + \sqrt{K} )$ and $\alpha_{n,B} = \tilde{O} ( \sqrt{d}  )$.
\end{proof}

\begin{lem} \label{lemLinearE}
Under Assumption \ref{assLinear}, $R_2(n) \leq 4 K + 1$.
\end{lem}

To prove Lemma \ref{lemLinearE}, we require the following Bernstein-style concentration result. It is essentially implied by Theorem 2 of \cite{zhou2021nearly}, but the proof requires some minor modifications to accommodate the fact that, unlike their work, we do not assume an almost-sure upper bound on the conditional variances $\E [ \eta_t^2 | \mc{G}_t ]$. The details of these changes are discussed in Appendix \ref{appBernstein}.
\begin{clm} \label{clmBernstein}
Fix $L, R, \lambda , \delta > 0$.\footnote{To unify the presentation with \cite{zhou2021nearly}, we adopt their notation for this claim only. In particular, notice that $L$ is an upper bound on $\| \mbf{x}_t \|_2$, not the total number of items (as in the rest of the paper).} Suppose $\{ \mc{G}_t \}_{t=1}^T$ is a filtration and $\{ \mbf{x}_t , \eta_t \}_{t=1}^T$ is a stochastic process such that, for each $t \in [T]$, $\mbf{x}_t$ is $\mathscr{B}_d(L)$-valued and $\mc{G}_t$-measurable, $\eta_t$ is $[ - R , R ]$-valued and $\mc{G}_{t+1}$-measurable, and $\E [ \eta_t | \mc{G}_t ] = 0$. For each $t \in [T]$, define $\beta_t = 8 \sqrt{ d \log ( 1 + t L^2 / ( d \lambda ) ) \log ( 4 t^2 / \delta ) } + 4 R \log ( 4 t^2 / \delta )$ and $\mbf{Z}_t = \lambda I + \sum_{i=1}^t \mbf{x}_i \mbf{x}_i^\trans$. Then
\begin{equation*}
\P \left( \cup_{t=1}^T \left\{ \left\| \sum_{i=1}^t \mbf{x}_i \eta_i \right\|_{\mbf{Z}_t^{-1}} > \beta_t \right\} \cap \cap_{t=1}^T \{  \E [ \eta_t^2 | \mc{G}_t ] \leq 1 \}  \right) \leq \delta .
\end{equation*}
\end{clm}

\begin{proof}[Proof of Lemma \ref{lemLinearE}]
We first observe that by definition, the following holds for each $t \in [n]$:
\begin{equation*}
\P ( \mc{E}_t ) = \P ( \mc{E}_{t,H} \cup \mc{E}_{t,B} ) = \P ( \mc{E}_{t,H} \cup ( \mc{E}_{t,B} \cap \bar{\mc{E}}_{t,H} ) ) = \P ( \mc{E}_{t,H} ) + \P ( \mc{E}_{t,B} \cap \bar{\mc{E}}_{t,H} ) .
\end{equation*}
Thus, because $R_2(n) = \sum_{t=1}^n \P ( \mc{E}_t )$, it suffices to show
\begin{equation} \label{eqLinConcSts}
\P ( \mc{E}_{t,H} ) \leq 1 / n , \quad \P ( \mc{E}_{t,B} \cap \bar{\mc{E}}_{t,H} ) \leq 4 K / n \quad \forall\ t \in [n] .
\end{equation}
We fix $t \in [n]$ for the remainder of the proof and establish the two bounds in \eqref{eqLinConcSts} in turn.

To prove the first bound, for each $i \in [ (t-1) K ]$, we define 
\begin{gather*}
\rho(i) = \floor*{ 1 + \frac{i-1}{K} } \in [t-1] , \quad \sigma(i) = i - ( \rho(i) - 1 ) K \in [K] , \\
\tilde{\phi}_{i,H} = \phi ( \mbf{a}_{\sigma(i)}^{\rho(i)} ) \ind ( \sigma(i) \leq \mbf{C}_{ \rho(i) } ) , \quad \tilde{\eta}_{i,H} = \eta_{\rho(i)}^{\sigma(i)} , \quad \tilde{\mbf{\Lambda}}_{i,H} = I + \sum_{j=1}^i \tilde{\phi}_{j,H} \tilde{\phi}_{j,H}^\trans .
\end{gather*}
In words, $\tilde{\eta}_{i,H}$ is the $i$-th of the noises $\{ \eta_s^k \}_{(s,k) \in [t-1] \times [K]}$ under the lexicographical ordering.\footnote{The ordering $\prec_{\text{lex}}$ over $[t-1] \times [K]$ that satisfies $(s_1,k_1) \prec_{\text{lex}} (s_2,k_2)$ if $s_1 < s_2$ or $s_1 = s_2$ and $k_1 < k_2$.} Analogously, $\tilde{\phi}_{i,H}$ is the $i$-th element of $\{ \phi( \mbf{a}_k^s ) \ind ( k \leq \mbf{C}_s ) \}_{(s,k) \in [t-1] \times [K]}$ under the same ordering. We define the $i$-th bad event with respect to this ordering by
\begin{equation*}
\tilde{\mc{E}}_{i,H} = \left\{ \left\| \sum_{j=1}^{ i } \tilde{\phi}_{j,H} \tilde{\eta}_{j,H}  \right\|_{ \tilde{\mbf{\Lambda}}_{i,H}^{-1} }^2 > 2 \log \left( \frac{ \sqrt{ \det(\tilde{\mbf{\Lambda}}_{i,H}) / \det(\tilde{\mbf{\Lambda}}_{0,H}) } }{ 1/n } \right) \right\} .
\end{equation*}
Now to prove the first bound in \eqref{eqLinConcSts}, we will show $\mc{E}_{t,H} \subset \cup_{i=0}^{(t-1)K} \tilde{\mc{E}}_{i,H}$ and $\P ( \cup_{i=0}^{(t-1)K} \tilde{\mc{E}}_{i,H} ) \leq 1/n$. For the first result, we  observe that for any $\tau \in [t]$,
\begin{equation*}
\sum_{j=1}^{ (\tau-1) K } \tilde{\phi}_{j,H} \tilde{\eta}_{j,H}  = \sum_{s=1}^{\tau-1} \sum_{k=1}^K \phi(\mbf{a}_k^s) \ind ( k \leq \mbf{C}_s ) \eta_s^k = \sum_{s=1}^{\tau-1} \sum_{k=1}^{ \min \{ \mbf{C}_t , K \} } \phi(\mbf{a}_k^s) \eta_s^k   ,
\end{equation*}
and similarly, $\tilde{\mbf{\Lambda}}_{(\tau-1)K} = \mbf{\Lambda}_{\tau,H}$. Therefore, if $\mc{E}_{t,H}$ occurs, we can find $\tau \in [t]$ such that
\begin{equation*}
\left\| \sum_{j=1}^{ (\tau-1) K } \tilde{\phi}_{j,H} \tilde{\eta}_{j,H}  \right\|_{ \tilde{\mbf{\Lambda}}_{(\tau-1) K,H}^{-1} } = \left\| \sum_{s=1}^{\tau-1} \sum_{k=1}^{ \min \{ \mbf{C}_t , K \} } \phi(\mbf{a}_k^s) \eta_s^k \right\|_{ \mbf{\Lambda}_{\tau,H}^{-1} } > \beta_{\tau,H} .
\end{equation*}
Furthermore, observe that by $\tilde{\mbf{\Lambda}}_{(\tau-1)K} = \mbf{\Lambda}_{\tau,H}$ and Claim \ref{clmElliptical}, we have
\begin{equation} \label{eqBetaTauHlower}
\beta_{\tau,H}^2 = 2 \log \left( \frac{ \sqrt{ (1 + \tau K / d)^d } }{ 1/n } \right) \geq 2 \log \left( \frac{ \sqrt{ \det(\tilde{\mbf{\Lambda}}_{(\tau-1)K,H}) / \det(\tilde{\mbf{\Lambda}}_{0,H}) } }{ 1/n } \right) .
\end{equation}
Combining the previous two inequalities, we obtain the first desired result:
\begin{equation} \label{eqEtchToTilde}
\mc{E}_{t,H} \subset \cup_{\tau=1}^t \tilde{\mc{E}}_{ ( \tau-1 ) K , H } = \cup_{i \in \{ 0 , K , \ldots , (t-1) K \} } \tilde{\mc{E}}_{i,H} \subset \cup_{i=0}^{(t-1)K} \tilde{\mc{E}}_{i,H} .
\end{equation}
Finally, let $\mathscr{G}_{i,H}$ denote the $\sigma$-algebra generated by $\{ \tilde{\phi}_{i',H} \}_{i'=1}^i \cup \{ \tilde{\eta}_{i',H} \}_{i'=1}^{i-1}$. Note that each $\tilde{\phi}_{i,H}$ is $\mathscr{G}_{i,H}$-measurable and each $\tilde{\eta}_{i,H}$ is $[-1,1]$-valued and $\mathscr{G}_{i+1,H}$-measurable with $\E [ \tilde{\eta}_{i,H} | \mathscr{G}_{i,H} ] = 0$. Thus, applying Theorem 1 of \cite{abbasi2011improved} with parameters $R=1$ and $\delta = 1/n$ (in their notation), we conclude that $\P ( \cup_{i=0}^{(t-1)K} \tilde{\mc{E}}_{i,H} ) \leq 1/n$, as desired.

For the second bound in \eqref{eqLinConcSts}, we first define the normalized versions of $\tilde{\phi}_{i,H}$, $\tilde{\eta}_{i,H}$, and $\tilde{\mbf{\Lambda}}_{i,H}$:
\begin{equation*}
\tilde{\phi}_{i,B} = \tilde{\phi}_{i,H} / \sqrt{ \mbf{U}_{ \rho(i) , H } ( \mbf{a}_{\sigma(i)}^{\rho(i)} ) } , \ \tilde{\eta}_{i,B} = \tilde{\eta}_{i,H} / \sqrt{ \mbf{U}_{ \rho(i) , H } ( \mbf{a}_{\sigma(i)}^{\rho(i)} ) } , \ \tilde{\mbf{\Lambda}}_{i,B} = K I + \sum_{j=1}^i \tilde{\phi}_{j,B} \tilde{\phi}_{j,B}^\trans .
\end{equation*}
Next, for any $i \leq n K$, it is easily verified that $n^3 K \geq  4 i^2/ ( 4 K / n )$. Combined with the fact that $\rho(i) + 1 \geq 1 + (i-1)/K \geq i/K$, we can write
\begin{align} \label{eqBetaRhoLower}
\beta_{ \rho(i)+1 , B } \geq \beta_{i/K,B} \geq 8 \sqrt{ d \log \left( 1 + \frac{i}{d} \right) \log \left( \frac{4 i^2}{4 K/n} \right) } + 4 \sqrt{K} \log \left( \frac{4 i^2}{4 K/n} \right) \triangleq \beta_{i,B}' .\ \
\end{align}
Using \eqref{eqBetaRhoLower} instead of \eqref{eqBetaTauHlower} but otherwise following the approach leading to \eqref{eqEtchToTilde}, we analogously obtain $\mc{E}_{t,B} \subset \cup_{i=0}^{(t-1)K} \tilde{\mc{E}}_{i,B}$, where here we define
\begin{equation*}
\tilde{\mc{E}}_{i,B} = \left\{ \left\| \sum_{j=1}^{ i } \tilde{\phi}_{j,B} \tilde{\eta}_{j,B}  \right\|_{ \tilde{\mbf{\Lambda}}_{i,B}^{-1} } > \beta_{i,B}'  \right\} .
\end{equation*}
Again analogous to the above, we let $\mathscr{G}_{i,B}$ be the $\sigma$-algebra generated by $\{ \tilde{\phi}_{i',B} \}_{i'=1}^i \cup \{ \tilde{\eta}_{i',B} \}_{i'=1}^{i-1}$. Then each $\tilde{\phi}_{i,B}$ is $\mathscr{G}_{i,B}$-measurable and $\mathscr{B}_d(\sqrt{K})$-valued, while each $\tilde{\eta}_{i,B}$ is $[-\sqrt{K},\sqrt{K}]$-valued and $\mathscr{G}_{i+1,B}$-measurable with $\E [ \tilde{\eta}_{i,B} | \mathscr{G}_{i,B} ] = 0$. Furthermore, on $\bar{\mc{E}}_{t,H}$, Claim \ref{clmLeastSquares} ensures
\begin{equation} \label{eqVarOnBarEh}
\E [ \tilde{\eta}_{i,B}^2 | \mathscr{G}_{i,B} ] = \frac{ \E [ ( \eta_{\rho(i)}^{\sigma(i)} )^2 | \mathscr{G}_{i,B} ] }{  \mbf{U}_{ \rho(i), H } ( \mbf{a}_{\sigma(i)}^{\rho(i)} ) } = \frac{ \bar{w}( \mbf{a}_{\sigma(i)}^{\rho(i)} ) ( 1 - \bar{w} ( \mbf{a}_{\sigma(i)}^{\rho(i)} ) ) }{  \mbf{U}_{ \rho(i) , H } ( \mbf{a}_{\sigma(i)}^{\rho(i)} ) } \leq \frac{ \bar{w}( \mbf{a}_{\sigma(i)}^{\rho(i)} ) }{  \mbf{U}_{ \rho(i) , H } ( \mbf{a}_{\sigma(i)}^{\rho(i)} ) } \leq 1 .
\end{equation}
Taken together, we have shown
\begin{equation*}
\mc{E}_{t,B}  \cap \bar{\mc{E}}_{t,H} \subset  \cup_{i=0}^{(t-1)K} \tilde{\mc{E}}_{i,B} \cap \cap_{i = 0}^{ (t-1)K }  \{ \E [ \tilde{\eta}_{i,B}^2 | \mathscr{G}_{i,B} ] \leq 1 \} .
\end{equation*}
Applying Claim \ref{clmBernstein} with $L = R = \sqrt{K}$, $\lambda = K$, and $\delta = 4 K / n$ (recall the definition of $\beta_{i,B}'$ in \eqref{eqBetaRhoLower}), we obtain that the probability of the event at right is at most $4K/n$, as desired.
\end{proof}

\begin{lem} \label{lemLinearF}
Under Assumption \ref{assLinear}, $R_3(n) \leq 2 d K \log_2(1+nK/d)$.
\end{lem}
\begin{proof}
Let $\mc{T}_H = \{ t \in [n] : \det(\mbf{\Lambda}_{t+1,H}) \geq 2 \det(\mbf{\Lambda}_{t,H}) \}$ and $\mc{T}_B = \{ t \in [n] : \det(\mbf{\Lambda}_{t+1,B}) \geq  2 \det(\mbf{\Lambda}_{t,B}) \}$. Note that $R_2(n) = K \E [ | \mc{T}_H \cup \mc{T}_B | ] \leq K \E [ | \mc{T}_H | + | \mc{T}_B | ]$. Furthermore, observe
\begin{equation*}
2^{| \mc{T}_H |} \leq \prod_{t \in \mc{T}_H} \frac{\det(\mbf{\Lambda}_{t+1,H})}{\det(\mbf{\Lambda}_{t,H}) } \leq \prod_{t=1}^n \frac{\det(\mbf{\Lambda}_{t+1,H})}{\det(\mbf{\Lambda}_{t,H})} = \frac{ \det(\mbf{\Lambda}_{n+1,H}) }{ \det(\mbf{\Lambda}_{1,H}) } \leq \left( 1 + \frac{nK}{d} \right)^d  ,
\end{equation*}
where we used Claim \ref{clmElliptical}. Therefore, $|\mc{T}_H| \leq d \log_2(1+nK/d)$. The analogous argument gives the same bound for $|\mc{T}_B|$, so this completes the proof.
\end{proof}

\subsection{Proofs of Claims \ref{clmLeastSquares} and  \ref{clmElliptical}} \label{appLinearClaim}

To prove the claims, we begin with two general results. Throughout, we use the notation $A \succ B$ and $A \succeq B$ when $A-B$ is positive definite and positive semidefinite, respectively.

\begin{clm} \label{clmLinAlgJustB}
Let $\lambda > 0$, $\{ x_i \}_{i=1}^m \subset \mathscr{B}_d(\sqrt{\lambda})$, and $B = \lambda I + \sum_{i=1}^m x_i x_i^\trans$. Then (i) $B \succ 0$, (ii) $\| y \|_{B^{-1}} \leq \| y \|_2 / \sqrt{\lambda}\ \forall\ y \in \R^d$, (iii) $\Tr(B) \leq \lambda ( d + m )$, (iv) $\det(B) \leq ( \lambda ( 1 + m/d ) )^d$.
\end{clm}
\begin{proof}
Let $\{ \lambda_j \}_{j=1}^d$ and $\{ v_j \}_{j=1}^d$ be the eigenvalues and eigenvectors of $B$. Then for any $j \in [d]$,
\begin{equation} \label{eqLambdaJ}
\lambda_j = \lambda_j \| v_j \|_2^2 = v_j^\trans ( \lambda_j v_j ) = v_j^\trans ( B v_j ) = v_j^\trans \left( \lambda v_j + \sum_{i=1}^m x_i x_i^\trans v_j \right) = \lambda + \sum_{i=1}^m ( x_i^\trans v_j )^2 .
\end{equation}
Therefore, $\min_{j \in [d]} \lambda_j \geq \lambda > 0$, which implies (i) and ensures that $B$ is invertible. Next, let $V$ be the orthogonal matrix with columns $\{ v_j \}_{j=1}^d$. Then by unitary invariance, we know that
\begin{equation} \label{eqUnitaryInvar}
\sum_{j=1}^d ( v_j^\trans y )^2 = \sum_{j=1}^d ( V^\trans y )_j^2 = \| V^\trans y \|_2^2 = \| y \|_2^2\ \forall\ y \in \R^d .
\end{equation}
Combining \eqref{eqUnitaryInvar} and the above bound $\min_{j \in [d]} \lambda_j \geq \lambda$ establishes (ii):
\begin{equation*}
\| y \|_{B^{-1}}^2 = y^\trans B^{-1} y = y^{\trans} \left( \sum_{j=1}^d v_j v_j^\trans / \lambda_j \right) y = \sum_{j=1}^d ( v_j^\trans y )^2 / \lambda_j \leq \sum_{j=1}^d ( v_j^\trans y )^2 / \lambda = \| y \|_2^2 / \lambda .
\end{equation*}
Similarly, using \eqref{eqLambdaJ}, \eqref{eqUnitaryInvar}, and $\max_{i \in [m]} \| x_i \|_2 \leq \sqrt{\lambda}$, we obtain (iii):
\begin{equation*}
\Tr(B) = \sum_{j=1}^d \lambda_j \leq \lambda d + \sum_{i=1}^m \sum_{j=1}^d ( x_i^\trans v_j )^2 \leq \lambda d  + m \max_{i \in [m]} \| x_i \|_2^2 \leq \lambda ( d + m )
\end{equation*}
Finally, (iii) and the inequality of the arithmetic and geometric means together yield (iv):
\begin{equation*}
\det(B)^{1/d} = \left( \prod_{j=1}^d \lambda_j \right)^{1/d} \leq  \sum_{j=1}^d \lambda_j / d = \Tr(B) / d \leq \lambda ( 1 + m / d ) . \qedhere
\end{equation*}
\end{proof}

\begin{clm} \label{clmLinAlgBjs}
Let $\lambda > 0$, $\{ x_i \}_{i=1}^m \subset \mathscr{B}_d(\sqrt{\lambda})$, and $B_j = \lambda I + \sum_{i=1}^{j-1} x_i x_i^\trans$ for each $j \in [m+1]$. Then (i) $\sum_{j=1}^m \| x_j \|_{B_j^{-1}}^2 \leq 2 \log ( \det(B_{m+1}) / \det(B_1) )$ and (ii) for any $y \in \R^d$ and $j_1 , j_2 \in [m]$ such that $j_1 \geq j_2$, we have $\| y \|_{B_{j_1}^{-1}} \leq \| y \|_{B_{j_2}^{-1}} \leq \sqrt{ \det(B_{j_1}) / \det ( B_{j_2} ) } \| y \|_{B_{j_1}^{-1}}$.
\end{clm}
\begin{proof}
By part (ii) of Claim \ref{clmLinAlgJustB} and $x_j \in \mathscr{B}_d(\sqrt{\lambda})$, we know $\| x_j \|_{B_j^{-1}} \leq 1$ for any $j \in [m]$. Hence, by Lemma 11 of \cite{abbasi2011improved}, we obtain (i):
\begin{equation*}
\sum_{j=1}^m \| x_j \|_{B_j^{-1}}^2 = \sum_{j=1}^m \min \{ 1 , \| x_j \|_{B_j^{-1}}^2 \} \leq 2 \log ( \det(B_{m+1}) / \det(B_1) ) .
\end{equation*}
For (ii), note that $y^\trans ( B_{j_1} - B_{j_2} ) y = \sum_{i=j_2}^{j_1-1} ( x_i^\trans y )^2 \geq 0$, so $B_{j_1} \succeq B_{j_2}$. This implies $B_{j_2}^{-1} \succeq B_{j_1}^{-1}$ and yields the bound $\| y \|_{B_{j_1}^{-1}} \leq \| y \|_{B_{j_2}^{-1}}$. For the other bound in (ii), let $\Xi = \det(B_{j_1}) / \det(B_{j_2})$. Then Lemma 12 of \cite{abbasi2011improved} implies $y^\trans B_{j_1} y \leq \Xi y^\trans B_{j_2} y$, so $B_{j_2} \succeq B_{j_1} / \Xi$. Therefore, $\Xi B_{j_1}^{-1} \succeq B_{j_2}^{-1}$, so $\| y \|_{B_{j_2}^{-1}} \leq \| y \|_{\Xi B_{j_1}^{-1}} = \sqrt{\Xi} \| y \|_{B_{j_1}^{-1}}$.
\end{proof}

\begin{proof}[Proof of Claim \ref{clmLeastSquares}] Fix $t \in [n]$, $\tau \in [t]$, and $e \in [L]$, and assume $\bar{\mc{E}}_{t,H}$ holds. Then
\begin{equation*}
\theta = \mbf{\Lambda}_{\tau,H}^{-1} \left( ( \mbf{\Lambda}_{\tau,H} - I ) \theta + \theta \right) = \mbf{\Lambda}_{\tau,H}^{-1} \left( \sum_{s=1}^{\tau-1} \sum_{k=1}^{\min\{\mbf{C}_s,K\}} \phi(\mbf{a}_k^s) \bar{w}(\mbf{a}_k^s) + \theta \right) ,
\end{equation*} 
where we used the definition of $\mbf{\Lambda}_{\tau,H}$ and Assumption \ref{assLinear}. Therefore, we can write
\begin{equation*}
\| \hat{\mbf{\theta}}_{\tau,H} - \theta \|_{\mbf{\Lambda}_{\tau,H}} = \left\| \sum_{s=1}^{\tau-1} \sum_{k=1}^{\min\{\mbf{C}_s,K\}} \phi(\mbf{a}_k^s) \mbf{\eta}_k^s - \theta \right\|_{\mbf{\Lambda}_{\tau,H}^{-1}} \leq \beta_{\tau,H} + \| \theta \|_{\mbf{\Lambda}_{\tau,H}^{-1}} \leq \beta_{\tau,H} + \| \theta \|_2 \leq \alpha_{\tau,H} ,
\end{equation*}
where the inequalities hold by the triangle inequality and the definition of $\bar{\mc{E}}_{t,H}$; part (ii) of Claim \ref{clmLinAlgJustB}, which applies with parameter $\lambda = 1$ by definition of $\mbf{\Lambda}_{\tau,H}$ and Assumption \ref{assLinear}; and Assumption \ref{assLinear} and the definition of $\beta_{\tau,H}$; respectively. By this inequality and Cauchy-Schwarz, we obtain
\begin{equation*}
| \ip{ \hat{\mbf{\theta}}_{\tau,H} }{ \phi(e) } - \bar{w}(e) | = | \ip{ \hat{\mbf{\theta}}_{\tau,H} - \theta }{ \phi(e) } | \leq \| \hat{\mbf{\theta}}_{\tau,H} - \theta \|_{\mbf{\Lambda}_{\tau,H}} \| \phi(e) \|_{\mbf{\Lambda}_{\tau,H}^{-1}} \leq \alpha_{\tau,H}  \| \phi(e) \|_{\mbf{\Lambda}_{\tau,H}^{-1} } .
\end{equation*}
As an immediate consequence, we see that
\begin{equation} \label{eqUcbHonBarE}
\mbf{U}_{\tau,H}(e) = \max \{ \ip{ \hat{\mbf{\theta}}_{\tau,H} }{ \phi(e) }  + \alpha_{\tau,H}  \| \phi(e) \|_{\mbf{\Lambda}_{\tau,H}^{-1} } , 1 / K \} \geq \max \{ \bar{w}(e) , 1 / K \} \geq \bar{w}(e) .
\end{equation}
The remaining bounds follow similarly, except we must invoke Claim \ref{clmLinAlgJustB} with $\lambda = K$ and use the fact that $\| \phi_t(e) \|_2 = \| \phi(e) \|_2 / \sqrt{\mbf{U}_{t,H}(e)} \leq \sqrt{K}$ by definition and Assumption \ref{assLinear}.
\end{proof}

\begin{proof}[Proof of Claim \ref{clmElliptical}]
Fix $t \in [n]$ and let $\mbf{\Lambda}_{s,H}^k = \mbf{\Lambda}_{s,H} + \sum_{i=1}^{k-1} \phi(\mbf{a}_i^s) \phi(\mbf{a}_i^s)^\trans$  for each $s \in [t]$ and $k \in [ \min \{ \mbf{C}_s , K \} ]$. Then on the event $\bar{\mc{F}}_s$, we can apply part (ii) of Claim \ref{clmLinAlgBjs} twice (with parameter $\lambda = 1$ by definition of $\mbf{\Lambda}_{s,H}^k$ and Assumption \ref{assLinear}) to obtain that for any $y \in \R^d$,
\begin{equation*}
\| y \|_{ \mbf{\Lambda}_{s,H}^{-1} }^2 \leq 2\| y \|_{ \mbf{\Lambda}_{s+1,H}^{-1} }^2 \leq 2 \| y \|_{ (\mbf{\Lambda}_{s,H}^k )^{-1} }^2 .
\end{equation*}
Using this bound, nonnegativity, and part (i) of Claim \ref{clmLinAlgBjs}, we get
\begin{equation*}
\sum_{s=1}^t \ind ( \bar{\mc{F}}_s ) \sum_{k=1}^{ \min \{ \mbf{C}_s , K \} } \| \phi ( \mbf{a}_k^s ) \|_{\mbf{\Lambda}_{s,H}^{-1}}^2 \leq 2 \sum_{s=1}^t \sum_{k=1}^{ \min \{ \mbf{C}_s , K \} } \| \phi ( \mbf{a}_k^s ) \|_{( \mbf{\Lambda}_{s,H}^k )^{-1}}^2 \leq 4 \log \left( \frac{ \det ( \mbf{\Lambda}_{t+1,H}) }{ \det ( \mbf{\Lambda}_{1,H}) } \right) .
\end{equation*}
Observing $\det ( \mbf{\Lambda}_{1,H}) = 1$ by definition and $\det ( \mbf{\Lambda}_{t+1,H}) \leq ( 1 + t K / d )^d$ by part (iv) of Claim \ref{clmLinAlgJustB} (again, with $\lambda=1$) completes the proof of the first bound. For the second,  one proceeds similarly except uses $\lambda = K$ when invoking Claims \ref{clmLinAlgJustB} and \ref{clmLinAlgBjs} and $\det ( \mbf{\Lambda}_{1,B}) = K^d$ by definition. 
\end{proof}

\subsection{Proof of Claim \ref{clmBernstein}} \label{appBernstein}

As discussed before the statement of the claim, the proof follows \cite{zhou2021nearly} with some minor changes that are necessary because we do not assume an almost-sure bound on $\E [ \eta_t^2 | \mc{G}_t ]$. For brevity, we only explain the changes, and to unify the presentation, we adopt their notation (which is slightly inconsistent with the rest of Appendix \ref{appLinear}) for Appendix \ref{appBernstein} only.  First, we replace their Lemma 11 (a version of Freedman's inequality that assumes the almost-sure bound) with the following.
\begin{lem} \label{lemFreedman}
Fix $M , v , \delta > 0$. Suppose $\{ \mc{G}_t \}_{t=1}^\infty$ is a filtration and $\{ x_t \}_{t=1}^\infty$ is a stochastic process such that, for each $t \in \N$, $x_t$ is $[-M,M]$-valued and $\mc{G}_t$-measurable with $\E [ x_t | \mc{G}_{t-1} ] = 0$. Let $\Gamma = \sqrt{ 2 v \log ( 1 / \delta ) } + 2 M \log ( 1 / \delta ) / 3$. Then
\begin{equation} \label{eqFreedman}
\P \left( \cup_{t=1}^\infty \left( \left\{ \sum_{i=1}^t x_i > \Gamma \right\} \cap \left\{ \sum_{i=1}^t \E [ x_i^2 | \mc{G}_i ] \leq v \right\} \right) \right) \leq \delta .
\end{equation}
\end{lem}
This result follows from the standard form of Freedman's inequality \citep{freedman1975tail}, which says the left side of \eqref{eqFreedman} is at most $\exp ( - ( \Gamma^2 / 2 ) / ( v + M \Gamma / 3 )$, along with the lower bound
\begin{equation*}
\Gamma^2 / 2= \sqrt{ v \log (1/\delta) / 2 }\Gamma + M \log(1/\delta) \Gamma / 3  \geq v \log(1/\delta) + M \log(1/\delta) \Gamma / 3  .
\end{equation*}
Next, in the proof of their Lemma 13, their inequality (29) need not hold almost-surely, but it does hold on $\cap_{t=1}^T \{  \E [ \eta_t^2 | \mc{G}_t ] \leq 1 \}$ (with constant $\sigma^2 = 1$). Therefore, using our Lemma \ref{lemFreedman} instead of their Lemma 11 but otherwise following the proof of their Lemma 13, one can show
\begin{equation} \label{eqLem13analogue}
\P \left(  \cup_{t=1}^T \left\{  \sum_{i=1}^t \frac{\eta_i \mbf{x}_i^\trans \mbf{Z}_{i-1} \mbf{d}_{i-1}}{1+w_i^2} \mc{E}_{i-1} > \frac{3 \beta_t^2}{8}  \right\} \cap \cap_{t=1}^T \{  \E [ \eta_t^2 | \mc{G}_t ] \leq 1 \} \right) \leq \frac{\delta}{2} ,
\end{equation}
where $\mbf{d}_t = \sum_{i=1}^t \mbf{x}_i \eta_i$, $w_t = \| \mbf{x}_t \|_{\mbf{Z}_{t-1}^{-1}}$, and $\mc{E}_t = \ind ( \| \mbf{d}_s \|_{\mbf{Z}_{s-1}^{-1}} \leq \beta_s\ \forall\ s \in [t] )$. By analogously modifying the proof of their Lemma 14, one can also show
\begin{equation} \label{eqLem14analogue}
\P \left(  \cup_{t=1}^T \left\{  \sum_{i=1}^t \frac{\eta_i^2 w_i^2}{1+w_i^2} > \frac{\beta_t^2}{4} \right\} \cap \cap_{t=1}^T \{  \E [ \eta_t^2 | \mc{G}_t ] \leq 1 \} \right) \leq \frac{\delta}{2} .
\end{equation}
Finally, the inductive argument from the end of Appendix B.1 in \cite{zhou2021nearly} implies
\begin{equation*}
\cup_{t=1}^T \left\{ \left\| \sum_{i=1}^t \mbf{x}_i \eta_i \right\|_{\mbf{Z}_t^{-1}}  > \beta_t \right\} \subset \cup_{t=1}^T  \left\{ \sum_{i=1}^t \frac{\eta_i \mbf{x}_i^\trans \mbf{Z}_{i-1} \mbf{d}_{i-1}}{1+w_i^2} \mc{E}_i > \frac{3 \beta_t^2}{8} \right\} \cup \left\{ \sum_{i=1}^t \frac{\eta_i^2 w_i^2}{1+w_i^2} > \frac{\beta_t^2}{4} \right\} ,
\end{equation*}
so taking the intersection with $\cap_{t=1}^T \{  \E [ \eta_t^2 | \mc{G}_t ] \leq 1 \}$ and using \eqref{eqLem13analogue} and \eqref{eqLem14analogue} completes the proof.

\subsection{Proof sketch for Algorithm \ref{algWofulEff}} \label{appImprovedProof}

Compared to Algorithm \ref{algWoful}, the proof for Algorithm \ref{algWofulEff} requires three changes. The first change is that in Claim \ref{clmLeastSquares}, we replace the inequality $\mbf{U}_{\tau,H}(e) \geq \bar{w}(e)$ with $\min \{ \mbf{U}_{\tau,H}(e) , \tilde{\mbf{U}}_{\tau,H}(e) \} \geq \bar{w}(e)$. To prove this stronger form of the claim, we first use an argument like \eqref{eqUcbHonBarE} from the proof of the claim to conclude $\tilde{\mbf{U}}_{\tau,H}(e) \geq \bar{w}(e)$; the other inequality follows, since 
\begin{equation} \label{eqAltUcbH}
\mbf{U}_{\tau,H}(e) = \min \{ \max \{ \tilde{\mbf{U}}_{\tau,H}(e) , 1/K \} , 1 \} \geq \min \{ \tilde{\mbf{U}}_{\tau,H}(e) , 1 \}  \geq \min \{ \bar{w}(e) , 1 \} = 
\bar{w}(e) .
\end{equation}
The second change is in the proof of Lemma \ref{lemLinearMain}. The inequality \eqref{eqWofulChange1} therein holds by the same logic since $\mbf{U}_{t,B}(e)$ is defined identically in the two algorithms. Next, \eqref{eqWofulChange2} continues to hold because it is an upper bound on $\mbf{U}_{t,H}(e)$, whose definition in Algorithm \ref{algWofulEff} was only changed to include a minimum with $1$. Thus, \eqref{eqWofulChange3}, which is a consequence \eqref{eqWofulChange1}-\eqref{eqWofulChange2}, continues to hold as well; moreover, since $\mbf{U}_t(e) \leq \mbf{U}_{t,B}(e)$ by definition in Algorithm \ref{algWofulEff}, the upper bound in \eqref{eqWofulChange3} holds for $\mbf{U}_t(e)$ as well. Next, we observe that by the stronger form of Claim \ref{clmLeastSquares} (the one just discussed) and $\bar{w}(e^*) \leq 1$ that $\mbf{U}_t(e^*) \geq \bar{w}(e^*)$ on $\bar{\mc{E}}_t$. Therefore, the \eqref{eqWofulChange4} still holds, with $\mbf{U}_{t,B}$ replaced by $\mbf{U}_t$. The remainder of the proof does not rely on the form of the UCB, so it goes through identically. Finally, the third change is that in the proof of Lemma \ref{lemLinearE}, we again invoke the stronger form of Claim \ref{clmLeastSquares}, in particular, to establish \eqref{eqVarOnBarEh}. The remainder of the Lemma \ref{lemLinearE} proof is identical.

\section{Proof of Theorem \ref{thmLower}} \label{appLower}

Set $p = 1/(2K)$ and $\Delta = \sqrt{L/(4nK^2)}$. Note that by assumption $n \geq L$, we have $\Delta \in [0,p]$. Let $\mathscr{W}_{p,\Delta}$ denote the set of vectors $\bar{w} \in [0,p]^L$ in which $\bar{w}(e) = (p+\Delta)/2$ for exactly $K$ (optimal) items $e$ and $\bar{w}(e) = p/2$ for the remaining (suboptimal) items. Define $\Pi'$ and $R_{\pi,\bar{w}}'$ as in Section \ref{secAnalysis}. Then using $\mathscr{W}_{p,\Delta} \subset [0,1]^L$, Theorem 1 of \cite{kveton2015cascading} (see the proof of Theorem 4 therein for a similar application), Bernoulli's inequality, and $\Pi \subset \Pi'$, respectively, we can write
\begin{align*}
\inf_{\pi \in \Pi} \sup_{\bar{w} \in [0,1]^L} R_{\pi,\bar{w}}(n) & \geq \inf_{\pi \in \Pi} \sup_{\bar{w} \in \mathscr{W}_{p,\Delta}} R_{\pi,\bar{w}}(n) \geq (1-p)^{K-1} \inf_{\pi \in \Pi} \sup_{\bar{w} \in \mathscr{W}_{p,\Delta}} R_{\pi,\bar{w}}'(n) \\
& \geq \inf_{\pi \in \Pi} \sup_{\bar{w} \in \mathscr{W}_{p,\Delta}} R_{\pi,\bar{w}}'(n) / 2 \geq \inf_{\pi \in \Pi'} \sup_{\bar{w} \in \mathscr{W}_{p,\Delta}} R_{\pi,\bar{w}}'(n) / 2 .
\end{align*}
Now fix $\pi \in \Pi'$. By the previous inequality, it suffices to show $R_{\pi,\bar{w}}'(n) = \Omega(\sqrt{nL})$ for some $\bar{w} \in \mathscr{W}_{p,\Delta}$. For this, we essentially follow \cite{lattimore2018toprank}, but there are some changes that tricky to explain (because our notation -- which follows \cite{kveton2015cascading} -- differs drastically, and because we noticed some small typos in their analysis). For clarity, we therefore provide a complete proof, but we emphasize that it is essentially restated from \cite{lattimore2018toprank}.

First, we introduce some notation. Recall $N=L/K \in \N$, and for each $m \in [N]^K$, define
\begin{gather}
\mc{L}_i = \{ (i-1) N + 1 , \ldots , i N \} , \quad a_i^*(m) = (i-1) N + m(i) , \\
 \mc{L}_i(m) = \mc{L}_i \setminus a_i^*(m) \ \forall\ i \in [K] , \quad \mc{L}(m) = \cup_{i=1}^K \mc{L}_i(m) , \\
A^*(m) = \{ a_i^*(m) \}_{i=1}^K = [L] \setminus \mc{L}(m) , \\
\bar{w}_m(e) = \frac{ p + \Delta \ind ( e \in A^*(m) ) }{2}\ \forall\ e \in [L] .
\end{gather}
In words, we partitioned $[L]$ into groups $\mc{L}_i$. Each vector $m \in [N]^K$ defines an instance $\bar{w}_m$ where one item $a_i^*(m)$ in each group $i$ is $(\Delta/2)$-better than the rest of the group $\mc{L}_i(m)$, and where $A^*(m)$ is the optimal action ($\mc{L}(m)$ are the suboptimal items). More precisely, note each such instance has $K$ items with $\bar{w}(e) = (p+\Delta)/2$ and $\bar{w}(e) = p/2$ for the other items. Therefore, $\{ \bar{w}_m \}_{m \in [N]^K} \subset \mathscr{W}_{p,\Delta}$, so it suffices to show $R_m'(n) \triangleq R_{\pi,\bar{w}_m}'(n) = \Omega(\sqrt{nL})$ for some $m \in [N]^K$.

Toward this end, for any $e \in [L]$, $t \in [n]$, and $\mc{L} \subset [L]$, we let $\mbf{T}_t'(e) = \sum_{s=1}^t \ind ( e \in \mbf{A}_s )$ denote the number of times $e$ was played up to and including time $t$ and $\mbf{T}_t'(\mc{L}) = \sum_{e \in \mc{L}} \mbf{T}_t'(e)$. For $m \in [N]^K$, we define $\E_m = \E_{\pi,\bar{w}_m}$ to be expectation under instance $\bar{w}_m$. Then by definition \eqref{eqDbmRegret}, we have
\begin{equation} \label{eqInstMreg}
R_m'(n) = \frac{\Delta}{2} \E_m \left[ \sum_{t=1}^n \sum_{k=1}^K \ind ( \mbf{a}_k^t \in \mc{L}(m)  ) \right] = \frac{\Delta}{2} \E_m [ \mbf{T}_n' ( \mc{L}(m) ) ] .
\end{equation}
Next, as observed by \cite{lattimore2018toprank}, we can rewrite $\mbf{T}_n'(\mc{L}(m))$ in two useful ways:
\begin{gather}
\mbf{T}_n'(\mc{L}(m)) = n K - \mbf{T}_n'(A^*(m)) = \sum_{i=1}^K ( n - \mbf{T}_n' ( a_i^*(m) ) ) , \quad \mbf{T}_n'(\mc{L}(m)) = \sum_{i=1}^K  \mbf{T}_n'(\mc{L}_i(m))  .
\end{gather}
Therefore, adding the two expressions together, we obtain
\begin{equation} \label{eqInstMaddTwo}
\mbf{T}_n'(\mc{L}(m)) = \frac{2 \mbf{T}_n'(\mc{L}(m))}{2} = \frac{ \sum_{i=1}^K ( n - \mbf{T}_n' ( a_i^*(m) ) + \mbf{T}_n'(\mc{L}_i(m)) )}{2} .
\end{equation}
For each $i \in [K]$, let $\tau_i = \min \{ n , \min \{ t \in [n] : \mbf{T}_t' ( \mc{L}_i ) \geq n \}$. Then if $\tau_i = n$, we have
\begin{align*}
n - \mbf{T}_n' ( a_i^*(m) ) + \mbf{T}_n'(\mc{L}_i(m)) \geq n - \mbf{T}_n' ( a_i^*(m) ) = n - \mbf{T}_{\tau_i}' ( a_i^*(m) ) ,
\end{align*}
while if $\tau_i < n$, we use $\mbf{T}_n' ( a_i^*(m) ) \leq n$, monotocity of $\mbf{T}_\cdot'$, and the definition of $\tau_i$ to write
\begin{align*}
n - \mbf{T}_n' ( a_i^*(m) ) + \mbf{T}_n'(\mc{L}_i(m)) & \geq \mbf{T}_n'(\mc{L}_i(m)) \geq \mbf{T}_{\tau_i}'(\mc{L}_i(m)) \\
& = \mbf{T}_{\tau_i}'(\mc{L}_i  ) - \mbf{T}_{\tau_i}' (a_i^*(m) ) \geq n - \mbf{T}_{\tau_i}' ( a_i^*(m) )  .
\end{align*}
Therefore, in both cases, we have the lower bound $n - \mbf{T}_{\tau_i}'(a_i^*(m))$. Combined with \eqref{eqInstMreg} and \eqref{eqInstMaddTwo}, and lower bounding a maximum by an average, we can therefore write
\begin{equation} \label{eqMaxToAverage}
\max_{m \in [N]^K} R_m'(n) \geq \frac{\Delta}{4} \left( n K - \frac{1}{N^K}  \sum_{i=1}^K \sum_{m \in [N]^K} \E_m [ \mbf{T}_{\tau_i}'(a_i^*(m) ) ]  \right) .
\end{equation}
Fix $i \in [K]$, and for any $m \in [N]^{K-1}$ and $z \in [N]$, let $g_i(m,z) = [\cdots\ m(i-1)\ z\ m(i+1)\ \cdots ] \in [N]^K$. Note that $\{ g_i(m,z): m \in [N]^{K-1} , z \in [N] \} = [N]^K$. Also, recalling $a_i^*(\tilde{m}) = (i-1) N + \tilde{m}(i)$ for any $\tilde{m} \in [N]^K$ by definition, we see that $a_i^*( g_i(m,z) ) = (i-1) N + z$. Thus, we obtain
\begin{align} \label{eqTwoWaysToSum}
\sum_{m \in [N]^K} \E_m [ \mbf{T}_{\tau_i}'(a_i^*(m) ) ] = \sum_{m \in [N]^{K-1}} \sum_{z \in [N]} \E_{ g_i(m,z) } [ \mbf{T}_{\tau_i}' ( (i-1)N+z) ] .
\end{align}
Now fix $m \in [N]^{K-1}$ and let $\bar{w}_{m'}$ be the instance with 
\begin{equation*}
A^*(m') = \{ (j-1) N + m(j) \}_{j=1}^{i-1} \cup \{ j N + m(j) \}_{j=i}^{K-1} , \quad \bar{w}_{m'}(e) = \frac{ p + \Delta \ind ( e \in A^*(m') ) }{2} .
\end{equation*}
Observe that, for any $z \in [N]$, $\bar{w}_{m'}$ is the same as $\bar{w}_{g_i(m,z)}$ except $(i-1)N+z$ is only optimal in the former instance (since $A^*(m') \subset \{ 1 , \ldots , (i-1) N , i N + 1 , \ldots , L \}$). Denote expectation in the former instance by $\E_{m'}$. Then for any $z \in [N]$, following \cite{lattimore2018toprank}, we can use Pinsker's inequality and the chain rule for KL-divergence (see also Section 3.3 of \cite{bubeck2012regret}) to obtain
\begin{align}
\E_{ g_i(m,z) } [ \mbf{T}_{\tau_i}'( (i-1)N+z) ] & \leq \E_{m'} [ \mbf{T}_{\tau_i}'( (i-1)N+z ) ] \\
& \quad + n \sqrt{  \E_{m'} [ \mbf{T}_{\tau_i}'( (i-1)N+z ) ] d ( p/2 , (p + \Delta)/2  ) /2 } .
\end{align}
By Claim \ref{clmKlUpper} below, $\Delta \leq p$, and $p \leq 1/2$, we have $d(p/2 , (p+\Delta)/2) \leq  \Delta^2/p$. Combining with the previous inequality, then summing over $z \in [N]$ and using Cauchy-Schwarz, we obtain
\begin{equation} \label{eqBeforeTauiDefn}
\sum_{z \in [N]} \E_{ g_i(m,z) } [ \mbf{T}_{\tau_i}' ( (i-1)N+z) ]  \leq \E_{m'} [ \mbf{T}_{\tau_i}' (\mc{L}_i ) ] + n \sqrt{  N  \E_{m'} [ \mbf{T}_{\tau_i}' ( \mc{L}_i ) ]  \Delta^2 / (2 p) } .
\end{equation}
Now by definition of $\tau_i$, we know that $\mbf{T}_{\tau_i}' (\mc{L}_i ) = \mbf{T}_{\tau_i-1}' (\mc{L}_i ) + ( \mbf{T}_{\tau_i}' (\mc{L}_i ) - \mbf{T}_{\tau_i-1}' (\mc{L}_i ) ) \leq n+K$. Therefore, we can further upper bound the right side of \eqref{eqBeforeTauiDefn} by
\begin{equation*}
n+K + n \sqrt{ \frac{N(n+K) \Delta^2}{2p}} = n + K + \frac{n N}{2} \sqrt{ \frac{n+K}{ n } } \leq n N \left( \frac{1}{N} + \frac{1}{N^2} + \frac{ \sqrt{1+1/N}}{2}  \right) < \frac{9n N}{10} ,
\end{equation*}
where the equality holds by definitions $p = 1/(2K)$, $\Delta = \sqrt{L/(4nK^2)}$, and $N = L/K$; the first inequality by definition $N = L/K$ and assumption $n \geq L$ (so $K = L/N \leq n/N =n N/N^2$); and the second by $N \geq 4$. Combining with \eqref{eqMaxToAverage}, \eqref{eqTwoWaysToSum}, \eqref{eqBeforeTauiDefn}, and the definition of $\Delta$ gives
\begin{equation*}
\max_{m \in [N]^K} R_m'(n) \geq \frac{\Delta}{4} \left( n K - \frac{1}{N^K} \cdot K \cdot N^{K-1} \cdot \frac{9n N}{10} \right) = \frac{\Delta nK}{40} = \frac{\sqrt{n L}}{80} .
\end{equation*}

\begin{clm} \label{clmKlUpper}
For any $p,q \in (0,1)$, $d(p,q) \leq (p-q)^2 / ( q(1-q) )$.
\end{clm}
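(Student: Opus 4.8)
The plan is to obtain the upper bound from the elementary inequality $\log x \leq x-1$ (valid for all $x>0$), applied separately to the two logarithmic terms in the definition
\begin{equation*}
d(p,q) = p \log \frac{p}{q} + (1-p) \log \frac{1-p}{1-q} .
\end{equation*}
Since the desired right-hand side $(p-q)^2/(q(1-q))$ is a ``quadratic'' overestimate and $d(p,\cdot)$ has a quadratic minimum at $q=p$, this linearization of each logarithm is exactly the right tool: it produces linear-in-$(p-q)$ bounds on each term that combine into the quadratic we want.

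Concretely, I would proceed in three steps. First, apply $\log x \leq x-1$ with $x = p/q$ to get $\log(p/q) \leq (p-q)/q$, and with $x = (1-p)/(1-q)$ to get $\log((1-p)/(1-q)) \leq (q-p)/(1-q)$. Multiplying by the nonnegative factors $p$ and $1-p$ preserves both inequalities, yielding
\begin{equation*}
d(p,q) \leq \frac{p(p-q)}{q} + \frac{(1-p)(q-p)}{1-q} = (p-q) \left( \frac{p}{q} - \frac{1-p}{1-q} \right) .
\end{equation*}
Second, I would simplify the bracketed expression over the common denominator $q(1-q)$, noting $p(1-q) - q(1-p) = p-q$, so that the bracket equals $(p-q)/(q(1-q))$. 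Third, substituting back gives $d(p,q) \leq (p-q)^2/(q(1-q))$, which is the claim.

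There is essentially no hard step here; the only thing to watch is sign bookkeeping when factoring $(p-q)$ out of the two terms, since one term naturally carries $p-q$ and the other $q-p$. Because I use $\log x \leq x-1$ (which holds regardless of whether $x \gtrless 1$) and multiply only by nonnegative quantities, the argument is uniform in the sign of $q-p$ and requires no case split on whether $q<p$ or $q>p$. As a sanity check one could instead verify the identity $d(p,q) = \int_p^q (x-p)/(x(1-x))\,dx$ (from $\partial_q d(p,q) = (q-p)/(q(1-q))$ and $d(p,p)=0$) and bound the integrand, but the $\log x \leq x-1$ route is cleaner and I would present that one.
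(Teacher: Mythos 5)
Your proof is correct and is essentially the paper's own argument: the paper applies $\log(1+x) \leq x$ for $x \geq -1$ to $d(p,q) = p\log(1 + (p-q)/q) + (1-p)\log(1 + (q-p)/(1-q))$, which is exactly your $\log x \leq x-1$ linearization of each term after the change of variable $x \mapsto 1+x$. The algebra combining the two linear bounds into $(p-q)^2/(q(1-q))$ is also identical, so there is nothing to add.
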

\begin{proof}
Since $(p-q)/q , (q-p)/(1-q) \geq -1$, we can use $\log(1+x) \leq x\ \forall\ x \geq -1$ to write
\begin{equation*}
d(p,q) \triangleq p \log \left( 1 + \frac{p-q}{q} \right) + (1-p) \log \left( 1 + \frac{q-p}{1-q} \right) \leq p \left( \frac{p-q}{q} \right) + (1-p) \left( \frac{q-p}{1-q} \right) .
\end{equation*}
Finally, a simple calculation shows that the right side equals the desired upper bound.
\end{proof}

\section{Proof of Theorem \ref{thmUcb}} \label{appProofUcb}

Let $\chi \geq 4$ be an absolute constant to be chosen later. Set $p = 1/(2K)$, $\Delta = \sqrt{ L  / ( \chi n K ) }$, $\bar{w}(e^*) = p\ \forall\ e^* \in [K]$, and $\bar{w}(e) = p - \Delta\ \forall\ e \in [L]\setminus[K]$. Note $p \in [\Delta,1/2]$ by the assumption $n \geq LK$ and the choice $\chi \geq 4$, so $\bar{w} \in [0,1]^L$. Let $\pi$ denote \texttt{CascadeUCB1} and $R(n) = R_{\pi,\bar{w}}(n)$. For each $t \in [n]$ and $\mc{L} \subset [L]$, define $\mbf{T}_t(\mc{L}) = \sum_{e \in \mc{L}} \mbf{T}_t(e)$. Then similar to the beginning of Appendix \ref{appLower}, we can use Theorem 1 of \cite{kveton2015cascading} and Bernoulli's inequality to write
\begin{equation*}
R(n) \geq \Delta \E [ \mbf{T}_n([L] \setminus [K]) ] / 2 = \sqrt{ L / ( 4 \chi n K ) }  \E [ \mbf{T}_n([L]\setminus[K]) ] .
\end{equation*}
Now define the events
\begin{gather}
\mc{E} = \{ \mbf{T}_n([L]\setminus[K]) < n K/4  \} , \quad \mc{F} = \{ \mbf{T}_n([L]) < n K / 2 \} , \\ 
\mc{G} = \{ \mbf{T}_n([K]) > n K/4 \}  , \quad \mc{G}_{e^*} = \{ \mbf{T}_n(e^*) > n / 4 \}\ \forall\ e^* \in [K] .
\end{gather}
Note that $\mbf{T}_n([L]\setminus[K]) \geq nK/4$ on the event $\bar{\mc{E}}$. Substituting into the previous inequality gives
\begin{equation} \label{eqUcbRegToE}
R(n) \geq   \sqrt{L/(4 \chi nK)} \E [ \mbf{T}_n([L]\setminus[K]) \ind ( \bar{\mc{E}} ) ] \geq \sqrt{ n L K / ( 64 \chi )  } ( 1 - \P(\mc{E}))  .
\end{equation}
To upper bound $\P(\mc{E})$, first notice that when $\mc{E}$ and $\bar{\mc{F}}$ both occur, we have
\begin{equation*}
\mbf{T}_n([K]) = \mbf{T}_n([L] ) - \mbf{T}_n([L] \setminus [K]) > (n K/2) - (nK/4) = n K / 4 , 
\end{equation*}
so $\mc{E} \cap \bar{\mc{F}} \subset \mc{G}$. Also observe $\mc{G} \subset \cup_{e^*=1}^K \mc{G}_{e^*}$. Combining and applying the union bound,
\begin{equation} \label{eqUcbProbE}
\P(\mc{E}) = \P(\mc{E} \cap \mc{F}) + \P(\mc{E} \cap \bar{\mc{F}}  ) \leq \P(\mc{F})  + \P(\mc{E}\cap\mc{G}) \leq \P(\mc{F})  + \sum_{e^*=1}^K \P ( \mc{E} \cap \mc{G}_{e^*} ) .
\end{equation}
For the first term, let $\mbf{O}_t = \mbf{T}_t([L]) - \mbf{T}_{t-1}([L])$ denote the number of observations at $t$. Then
\begin{equation*}
\E_t [ \mbf{O}_t ] =  \sum_{k=1}^K \prod_{i=1}^{k-1} ( 1 - \bar{w}(\mbf{a}_k^t ) ) \geq \sum_{k=1}^K ( 1 - p )^{k-1} = \frac{1 - (1-p)^K}{p} \geq \frac{1-\exp(-p K)}{p} ,
\end{equation*}
so by definition $p = 1/(2K)$, we have $\E_t[\mbf{O}_t] \geq 2 (1-\exp(-1/2)) K > 3 K / 4$. Thus, we obtain
\begin{equation*}
\E [ \mbf{T}_n([L]) ] = \E \left[ \sum_{t=1}^n ( \mbf{T}_t([L]) - \mbf{T}_{t-1}([L]) ) \right] = \sum_{t=1}^n \E [ \E_t[\mbf{O}_t]] \geq 3 n K / 4 .
\end{equation*}
Hence, because $\mbf{T}_n([L])$ is $[nK]$-valued, we can use Markov's inequality to write
\begin{equation} \label{eqUcbProbF}
\P ( \mc{F} ) = \P ( n K - \mbf{T}_n([L]) > nK / 2 ) \leq \frac{ \E [ n K -  \mbf{T}_n([L]) ] }{ nK/2 } \leq \frac{nK - 3nK/4}{nK/2} = \frac{1}{2} .
\end{equation}
Now fix $e^* \in [K]$ and suppose $\mc{G}_{e^*}$ occurs. Then $\mbf{T}_n(e^*) \geq \ceil{n/4}$, in which case we can find $t \geq \ceil{n/4}$ such that $\mbf{T}_{t-1}(e^*) = \ceil{n/4}-1$ and $\mbf{U}_t(e^*)$ is among the $K$ highest UCBs. Mathematically, we can express the latter event as $| \{ e \in [L] : \mbf{U}_t(e) \leq \mbf{U}_t(e^*) \} | \geq L-K$, which implies
\begin{align} 
3 L / 4 & \leq L-2K \leq | \{ e \in [L] : \mbf{U}_t(e) \leq \mbf{U}_t(e^*) \} | - K \\
& \leq | \{ e \in [L] \setminus [K] : \mbf{U}_t(e) \leq \mbf{U}_t(e^*) \} | \triangleq \mc{L}_{e^*} ,
\end{align}
where we also used $L \geq 8 K$ by assumption. On the other hand, when $\mc{E}$ occurs, there must exist $e \in \mc{L}_{e^*}$ such that $\mbf{T}_{t-1}(e) < nK / ( 4 |\mc{L}_{e^*}| )$, because otherwise, we can write $\mbf{T}_n([L]\setminus[K]) \geq \mbf{T}_{t-1}(\mc{L}_{e^*}) \geq nK/4$, which contradicts $\mc{E}$. Combining these observations, we see that when both $\mc{G}_{e^*}$ and $\mc{E}$ occur, we can find $t \geq \ceil{n/4}$ and $e \in \mc{L}_{e^*} \subset [L] \setminus [K]$ such that 
\begin{equation*}
\mbf{T}_{t-1}(e^*) = \ceil{n/4}-1 , \quad \mbf{U}_t(e^*) \geq \mbf{U}_t(e) , \quad \mbf{T}_{t-1}(e) \in [ \floor{ n K / ( 4 |\mc{L}_{e^*}| ) } ] \subset [ \floor{ n K / (3L) } ] .
\end{equation*}
Therefore, taking three union bounds, we obtain
\begin{equation} \label{eqUcbBeforeUt}
\P  (\mc{E} \cap \mc{G}_{e^*} ) \leq \sum_{e=K+1}^L \sum_{t=\ceil{\frac{n}{4}}}^n \sum_{s=1}^{\floor{\frac{nK}{3L}}} \P ( \mbf{T}_{t-1}(e^*) = \ceil{n/4}-1 , \mbf{U}_t(e^*) \geq \mbf{U}_t(e) , \mbf{T}_{t-1}(e) = s ) .
\end{equation}
Now fix $e$, $t$, and $s$ as in the triple summation. Then by definition of $\mbf{U}_t$ \eqref{eqUcb1}, we know that
\begin{align*}
& \mbf{T}_{t-1}(e^*) = \ceil{n/4}-1 , \quad \mbf{U}_t(e^*) \geq \mbf{U}_t(e) , \quad \mbf{T}_{t-1}(e) = s \\
&  \Rightarrow \quad \hat{\mbf{w}}_{\ceil{\frac{n}{4}}-1}(e^*) + c_{t,\ceil{\frac{n}{4}}-1} \geq \hat{\mbf{w}}_s(e) + c_{t,s} .
\end{align*}
Note that if the event at bottom occurs, one of the following inequalities must hold:
\begin{equation} \label{eqUcbThreeIneq} 
\hat{\mbf{w}}_{\ceil{\frac{n}{4}}-1}(e^*) \geq p + c_{t,\ceil{\frac{n}{4}}-1} , \ \hat{\mbf{w}}_s(e) \leq p - \Delta - \sqrt{5/6} c_{t,s}  , \ \Delta \geq ( 1 - \sqrt{5/6} ) c_{t,s} - 2 c_{t,\ceil{\frac{n}{4}}-1} .
\end{equation}
Indeed, if none of these inequalities hold, then we obtain a contradiction:
\begin{equation}
\hat{\mbf{w}}_{\ceil{\frac{n}{4}}-1}(e^*) + c_{t,\ceil{\frac{n}{4}}-1} < p + 2 c_{t,\ceil{\frac{n}{4}}-1} < \hat{\mbf{w}}_s(e) + \Delta + 2 c_{t,\ceil{\frac{n}{4}}-1} + \sqrt{5/6} c_{t,s} < \hat{\mbf{w}}_s(e)  + c_{t,s} .
\end{equation}
However, Claim \ref{clmChoiceOfChi} below shows the third inequality in \eqref{eqUcbThreeIneq} cannot occur, provided we choose $\chi$ large. Furthermore, by the union and Hoeffding bounds and the definition $c_{t,s} = \sqrt{ (3/2) \log(t) / s }$, the probability that either of the first two inequalities holds is at most $t^{- 2 (3/2)} + t^{- 2 (3/2) (5/6)} = t^{-3} + t^{-5/2}$. Plugging into \eqref{eqUcbBeforeUt} and using Claim \ref{clmIntApprox} below and the assumption $n \geq 49 K^4$ gives
\begin{equation} \label{eqUcb1lastStep}
\P  (\mc{E} \cap \mc{G}_{e^*} ) \leq \frac{ nK}{3} \cdot \sum_{t=\ceil{n/4}}^\infty ( t^{-3} + t^{-5/2} ) \leq \frac{nK}{3} \cdot 10 n^{-3/2} = \frac{10K}{3 \sqrt{n}} \leq \frac{10}{21K} .
\end{equation}
Combining with \eqref{eqUcbProbE} and \eqref{eqUcbProbF} shows $\P(\mc{E}) \leq 41/42$; substituting into \eqref{eqUcbRegToE} gives the desired bound.

\begin{rem} \label{remLoosenAssumption}
If we replace the above constant $5/6$ with $1-\delta/3$ for fixed $\delta > 0$ (we chose $\delta=1/2$ above), the third inequality in \eqref{eqUcbThreeIneq} again fails for appropriate $\chi$, and the Hoeffding bound gives $t^{-2(3/2)(1-\delta/3)} = t^{\delta-3}$ for the probability of the first two. Following the argument above then shows $\P(\mc{E}\cap\mc{G}_{e^*}) = O ( K n^{\delta-1} )$, which is $O(1/K)$ as in \eqref{eqUcb1lastStep} when $n = \Omega ( K^{2/(1-\delta)} )$. Thus, the assumption $n = \Omega(K^4)$ can be weakened to $n = \Omega ( K^\eta )$ for fixed $\eta > 2$.
\end{rem}

\begin{clm} \label{clmChoiceOfChi}
Under the assumptions of Theorem \ref{thmUcb}, there exists an absolute constant $\chi \geq 4$ such that, for any $t \geq \ceil{n/4}$ and $s \leq n K / (3L)$, we have $( 1 - \sqrt{5/6} ) c_{t,s} - 2 c_{t,\ceil{\frac{n}{4}}-1} > \sqrt{L/(\chi n K)}$.
\end{clm}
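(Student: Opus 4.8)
The plan is to bound the two terms on the left side of the claimed inequality separately, factor out a common $\sqrt{\log(t)}\,\sqrt{L/(nK)}$, and thereby reduce everything to checking that a single numerical coefficient is strictly positive; this positivity is exactly where the hypothesis $L \geq 800K$ enters.

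First I would use the constraint $s \leq nK/(3L)$ to lower bound the leading term. Since $c_{t,s} = \sqrt{1.5\log(t)/s}$ is decreasing in $s$,
\[
(1-\sqrt{5/6})\,c_{t,s} \geq (1-\sqrt{5/6})\sqrt{\tfrac{4.5\,L}{nK}\log t} = (1-\sqrt{5/6})\sqrt{4.5\log t}\;\sqrt{\tfrac{L}{nK}}.
\]
Next I would upper bound the correction term $2\,c_{t,\ceil{n/4}-1}$. The only subtlety is controlling $\ceil{n/4}-1$ from below: using $n \geq 49$ (which follows from $n \geq 49K^4$) gives $\ceil{n/4}-1 \geq n/4-1 \geq (45/196)n$, whence
\[
2\,c_{t,\ceil{n/4}-1} \leq 2\sqrt{\tfrac{1.5\cdot 196}{45\,n}\log t} = 2\sqrt{\tfrac{294}{45}}\;\sqrt{\log t}\;\sqrt{\tfrac{K}{L}}\;\sqrt{\tfrac{L}{nK}}.
\]

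Subtracting the two bounds, the common factor $\sqrt{\log t}\,\sqrt{L/(nK)}$ comes out and the left side is at least
\[
\sqrt{\log t}\;\sqrt{\tfrac{L}{nK}}\left[(1-\sqrt{5/6})\sqrt{4.5} - 2\sqrt{\tfrac{294}{45}}\,\sqrt{\tfrac{K}{L}}\right].
\]
The crux is to show the bracketed coefficient is bounded below by an absolute positive constant. Here I would invoke $K/L \leq 1/800$: a direct computation shows the bracket is positive precisely when $L/K$ exceeds roughly $765$, so $L \geq 800K$ yields a small but strictly positive absolute lower bound $c_0 > 0$. I expect this to be the delicate step, since the margin left by $L \geq 800K$ is thin; in particular the bound $\ceil{n/4}-1 \geq (45/196)n$ (rather than a cruder fraction such as $n/5$) is essential, as a looser constant would flip the sign. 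This is also why the constants $49$ and $800$ appear to be jointly calibrated.

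Finally I would translate this estimate into the stated form. Because $t \geq \ceil{n/4}$ and $n \geq LK \geq 800$, we have $\log t \geq \log(n/4) \geq \log 200$, so $\sqrt{\log t}$ is itself bounded below by an absolute constant. Combining with the previous paragraph, the left side is at least $c_1\sqrt{L/(nK)}$ for an absolute constant $c_1 > 0$, so any absolute constant $\chi \geq \max\{4,\,2/c_1^2\}$ gives $c_1\sqrt{L/(nK)} > \sqrt{L/(\chi nK)}$, as required.
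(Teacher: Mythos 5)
Your proof is correct and follows essentially the same route as the paper: lower bound $c_{t,s}$ via $s \leq nK/(3L)$, upper bound $2c_{t,\ceil{n/4}-1}$ via a linear-in-$n$ lower bound on $\ceil{n/4}-1$, factor out $\sqrt{\log t}\,\sqrt{L/(nK)}$, verify the resulting numerical coefficient is positive using $L \geq 800K$, and then choose $\chi$ using $\log t \geq \log 200$. The only differences are cosmetic constants (you use $\ceil{n/4}-1 \geq 45n/196$ from $n \geq 49$, the paper uses $6n/25$ from $n \geq 100$), and both margins check out numerically.
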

\begin{proof}
By assumption $L \geq 800 K$ and $n \geq L K$, we know $n \geq 800 K^2 \geq 800$. The fact that $n \geq 100$ ensures $\ceil{n/4} - 1 \geq (n/4) -1 \geq 6 n / 25$; combined with $L \geq 800 K$, we obtain
\begin{equation*}
2 c_{t,\ceil{\frac{n}{4}}-1} = \sqrt{ \frac{6 \log t}{ \ceil{n/4} - 1 } } \leq \sqrt{ \frac{ 25 \log t }{ n } } = \sqrt{ \frac{ 800 \log t }{ 32 n } } \leq \sqrt{ \frac{L \log t}{ 32 n K } } .
\end{equation*}
For $s \leq n K / (3L)$, we also know $c_{t,s} = \sqrt{3 \log (t) / (2s)} \geq \sqrt{9 L \log(t) / ( 2 n K ) }$, so we conclude
\begin{equation*}
( 1 - \sqrt{5/6} ) c_{t,s} - 2 c_{t,\ceil{\frac{n}{4}}-1} \geq \sqrt{ \frac{ L \log t }{ n K } } \left( ( 1 - \sqrt{5/6} ) \sqrt{9/2} - \sqrt{1/32} \right) .
\end{equation*}
The term in parentheses on the right side is positive. Therefore, using the above bound $n \geq 800$ and the choice $t \geq n/4$, the result follows for $\chi > 1 / ( \log(200) (  ( 1 - \sqrt{5/6} ) \sqrt{9/2} - \sqrt{1/32} )^2 ) \geq 4$.
\end{proof}

\begin{clm} \label{clmIntApprox}
Under the assumptions of Theorem \ref{thmUcb}, $\sum_{t=\ceil{\frac{n}{4}}}^\infty ( t^{-3} + t^{-5/2} ) \leq 10 n^{-3/2}$.
\end{clm}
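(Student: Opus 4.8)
The plan is to bound the tail sum by an integral and then track the constants carefully, exploiting the lower bound $n \geq 49$ that is implied by the hypothesis $n \geq 49 K^4$ (since $K \geq 1$). First I would note that $x \mapsto x^{-3} + x^{-5/2}$ is nonnegative and decreasing on $[1,\infty)$, so the standard comparison $\sum_{t=m}^\infty f(t) \leq \int_{m-1}^\infty f(x)\,dx$ (valid because $f(t) \leq \int_{t-1}^t f(x)\,dx$ for each $t$ when $f$ is decreasing) yields
\begin{equation*}
\sum_{t=\ceil{n/4}}^\infty ( t^{-3} + t^{-5/2} ) \leq \int_{\ceil{n/4}-1}^\infty ( x^{-3} + x^{-5/2} )\,dx .
\end{equation*}

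Next I would lower bound the starting point of the integral. Since $\ceil{n/4} \geq n/4$, we have $\ceil{n/4} - 1 \geq n/4 - 1 \geq n/5$, where the last inequality holds because $n/20 \geq 1$ whenever $n \geq 20$, which is guaranteed by $n \geq 49$. As the integrand is nonnegative, enlarging the lower limit only decreases the integral, so it suffices to evaluate
\begin{equation*}
\int_{n/5}^\infty ( x^{-3} + x^{-5/2} )\,dx = \frac{25}{2} n^{-2} + \frac{2}{3} \cdot 5^{3/2} n^{-3/2} .
\end{equation*}

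Finally I would convert both contributions to the common scale $n^{-3/2}$. The second term already equals $\tfrac{10\sqrt{5}}{3} n^{-3/2} < 7.46\, n^{-3/2}$, while for the first term I would use $n^{-1/2} \leq 1/7$ (again from $n \geq 49$) to write $\tfrac{25}{2} n^{-2} = \tfrac{25}{2} n^{-1/2} \cdot n^{-3/2} \leq \tfrac{25}{14} n^{-3/2} < 1.79\, n^{-3/2}$. Summing the two pieces gives a bound below $9.25\, n^{-3/2} \leq 10\, n^{-3/2}$, as claimed.

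The proof is essentially routine; the only mild obstacle is arithmetic bookkeeping on the constants. One must choose the lower bound on $\ceil{n/4}-1$ (here $n/5$) loose enough to be justified by $n \geq 49$, yet tight enough that the resulting constant $\tfrac{10\sqrt5}{3} + \tfrac{25}{14}$ stays below $10$. A cruder bound such as $\ceil{n/4}-1 \geq n/8$ would inflate the $x^{-5/2}$ integral beyond $15\, n^{-3/2}$ and break the estimate, so a small amount of care in balancing these choices is the crux.
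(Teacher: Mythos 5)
Your proof is correct and follows essentially the same route as the paper's: compare the tail sum with $\int_{\ceil{n/4}-1}^\infty (x^{-3}+x^{-5/2})\,dx$, lower bound the integral's starting point by a constant multiple of $n$ (you use $n/5$ via $n \geq 49$; the paper uses $6n/25$ via $n \geq 100$), and then absorb the $n^{-2}$ term into the $n^{-3/2}$ scale using the size of $n$. The only differences are in the bookkeeping constants, and both land safely under $10\,n^{-3/2}$.
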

\begin{proof}
Let $T = \ceil{n/4}-1$. As in Claim \ref{clmChoiceOfChi}, we know $T \geq 6 n / 25 > 0$. Thus, for any $c>1$,
\begin{equation*}
\sum_{t=T+1}^\infty t^{-c} = \sum_{t=T+1}^\infty \int_{x=t-1}^t t^{-c} dx \leq \sum_{t=T+1}^\infty \int_{x=t-1}^t x^{-c} dx = \int_{x=T}^\infty x^{-c} dx = \frac{ T^{1-c} }{ c-1 } .
\end{equation*} 
Using this inequality for $c \in \{3,5/2\}$ and the bound $T \geq 6n/25$, we thus obtain
\begin{equation*}
\sum_{t=T+1}^\infty ( t^{-3} + t^{-5/2} ) \leq \frac{ T^{-2} }{ 2 } + \frac{ T^{-3/2} }{ 3/2 } \leq \frac{7}{6} T^{-3/2} \leq \frac{7}{6} \left( \frac{25}{6} \right)^{3/2} n^{-3/2} \leq 10 n^{-3/2}. \qedhere
\end{equation*}
\end{proof}

\end{document}